\newcommand{\x}{\bm{x}}
\newcommand{\y}{\bm{y}}
\newcommand{\xdot}{\dot{\bm{x}}}
\newcommand{\ydot}{\dot{\bm{y}}}
\newcommand{\X}{\mathcal{X}}
\newcommand{\Y}{\mathcal{Y}}
\newcommand{\E}{\mathds{E}}
\renewcommand{\P}{\mathds{P}}
\newcommand{\D}{\mathcal{D}}
\newcommand{\W}{\bm{W}}
\newcommand{\dx}{\nabla_{\bm{x}}}
\newcommand{\dy}{\nabla_{\bm{y}}}
\newcommand{\A}{\mathcal{A}}
\renewcommand{\S}{\mathcal{S}}
\newcommand{\FS}{F_{\mathcal{S}}}
\newcommand{\Rone}{\uppercase\expandafter{\romannumeral1}}
\newcommand{\Rtwo}{\uppercase\expandafter{\romannumeral2}}
\newcommand{\Rthree}{\uppercase\expandafter{\romannumeral3}}
\newcommand{\R}{\mathds{R}}
\newcommand{\f}{\bm{f}}
\newtheorem{thm}{Theorem}
\newtheorem{lemma}{Lemma}
\newtheorem{corollary}{Corollary}
\theoremstyle{definition}
\newtheorem{defn}{Definition}
\newtheorem{assp}{Assumption}
\newtheorem{remark}{Remark}
\definecolor{mydarkblue}{rgb}{0,0.08,0.45}
\setlist[itemize]{noitemsep, topsep=0pt, partopsep=0pt, parsep=0pt}
\setlist[enumerate]{noitemsep, topsep=0pt, partopsep=0pt, parsep=0pt}
\title{Stability and Generalization of the Decentralized Stochastic Gradient Descent Ascent Algorithm}
\author{%
  Miaoxi Zhu$^1$ \quad Li Shen$^2\textsuperscript{\textasteriskcentered}$ \quad Bo Du$^1$\thanks{Corresponding authors.} \quad Dacheng Tao$^3$\\
  $^1$ School of Computer Science, National Engineering Research Center for Multimedia Software, \\Institute of Artificial Intelligence  
    and Hubei Key Laboratory of Multimedia \\ and Network Communication Engineering, Wuhan University, China \\
    $^2$ JD~Explore~Academy, China \quad $^3$ The University of Sydney, Australia\\
    \tt 
    \{zhumx,dubo\}@whu.edu.cn,
    \{mathshenli,dacheng.tao\}@gmail.com\\
}
\begin{document}
\maketitle

\doparttoc 
\faketableofcontents

\begin{abstract}
The growing size of available data has attracted increasing interest in solving minimax problems in a decentralized manner for various machine learning tasks. Previous theoretical research has primarily focused on the convergence rate and communication complexity of decentralized minimax algorithms, with little attention given to their generalization. In this paper, we investigate the primal-dual generalization bound of the decentralized stochastic gradient descent ascent (D-SGDA) algorithm using the approach of algorithmic stability under both convex-concave and nonconvex-nonconcave settings. Our theory refines the algorithmic stability in a decentralized manner and demonstrates that the decentralized structure does not destroy the stability and generalization of D-SGDA, implying that it can generalize as well as the vanilla SGDA in certain situations. Our results analyze the impact of different topologies on the generalization bound of the D-SGDA algorithm beyond trivial factors such as sample sizes, learning rates, and iterations. We also evaluate the optimization error and balance it with the generalization gap to obtain the optimal population risk of D-SGDA in the convex-concave setting. Additionally, we perform several numerical experiments which validate our theoretical findings. 
\end{abstract}



\section{Introduction}

 Minimax problems have shown extensive applications in machine learning, such as adversarial robustness \cite{madrytowards,huang2023robust}, GAN \cite{goodfellow2020generative}, the zero-sum game \cite{mazumdar2019finding}, multi-agent reinforcement learning \cite{wai2018multi}, AUC maximization \cite{ying2016stochastic}. Alongside this, as the use of large-scale models has become widespread, distributed learning algorithms have emerged as a noteworthy approach for handling massive amounts of data and model parameters~\citep{boyd2011distributed,bekkerman2011scaling}. Without a parameter server \cite{li2014scaling} aggregating all data from each local agent, decentralized algorithms that do not rely on the central structure can be advantageous when network bandwidth is low or latency is high, and they can also protect data privacy\cite{lian2017can}. In this work, we consider the following decentralized minimax stochastic optimization problems:
\begin{equation}\label{eq:F}
 \small   \min_{\x \in\X}\max_{\y\in\Y} F(\x,\y):=\frac{1}{m}\sum_{i=1}^mF_i(\x,\y):=\frac{1}{m}\sum_{i=1}^m\E_{\xi_i\sim\D_i}[f_i(\x,\y;\xi_i)]
\end{equation}
where $m$ denotes the number of agents, $F_i$ is the local loss function, $\xi_i$ represents local data stored on agent $i$, and $\X\subseteq\R^{d_{\x}}$, $\Y\subseteq\R^{d_{\y}}$. Note that the data distributions $\mathcal{D}_i$ may differ across the agents.

\begin{table}[t]
  \caption{\small Main results on different cases: SC-SC, C-C, and NC-NC represent strongly-convex-strongly-concave, convex-concave, and nonconvex-nonconcave, respectively. $\small \widetilde{\mathcal{O}}$ means it contains the logarithmic function. $\small C_{\lambda}$ is a constant concerning about the spectral gap $1-\lambda$ of different topology which is defined in Thm. \ref{thm:cc:stability}. $T$ represents iterations. $L$ is  Lipschitz constant. $\mu$ represents the strong convexity and strong concavity parameter. $n$ denotes the sample size in each node. $m$ denotes the number of nodes and $0<c\leq 1$ is configurable constant.}
   \label{table:results}
 \vspace{0.1cm}
 \renewcommand\arraystretch{0.6}
  \centering
  \small
  \begin{tabular}{ccc}
  \toprule
    Cases & Measure & Bound\\
    \midrule                 
    \multirow{3}{*}{SC-SC}& strong/weak primal-dual generalization gap & $\small \widetilde{\mathcal{O}}\left(\frac{C_{\lambda}}{T^{\frac{L}{L+\mu}}}\!+\!\frac{T^{1-c}}{n}\right)$ [Thm. \ref{thm:cc:stability}]
    \\
    \cmidrule(r){2-3}
    &strong/weak primal-dual population risk & $\small \mathcal{O}\left(\frac{C_{\lambda}}{T^{\min\{\frac{1}{2},\frac{L}{L+\mu}\}}}+\frac{1}{n}\right)$ [Thm. \ref{thm:strongPDpopulationrisk}]
    \\
    \midrule
    \multirow{3}*{C-C} & weak primal-dual generalization gap & $\small \mathcal{O}\left(\frac{1}{(1-\lambda)T}+\frac{1}{n}\right)$ [Thm. \ref{coro:stabilitycc}]
    \\
    \cmidrule(r){2-3}
    &weak primal-dual population risk& $\small \mathcal{O}\left(\frac{1}{(1-\lambda)T^{\frac{1}{3}}}+\frac{T^{\frac{1}{3}}}{n}\right)$ [Thm. \ref{corollary:weakpdpopulation}]    
    \\
    \midrule
    NC-NC&weak primal-dual generalization gap & $\small \mathcal{O}\left((C_{\lambda}T^L)^{\frac{1}{c+\!L}}\!(\!\frac{m}{n}\!)^{1\!-\!\frac{1}{c+\!L}}\!\right)$ [Thm. \ref{thm:ncnc}]
    \\
    \bottomrule
  \end{tabular}
\vspace{-0.5cm}
\end{table}

The most straightforward algorithm for solving the above stochastic minimax optimization problem is to apply Stochastic Gradient Descent Ascent (SGDA) \cite{heusel2017gans,lin2020gradient} in a decentralized manner, named D-SGDA. Many algorithms \cite{xian2021faster,chen2022simple,gao2022decentralized,zhang2021taming,wu2023decentralized,xu2023decentralized,luo2022decentralized,guo2021decentralized,lu2021decentralized,wei2021last,beznosikov2021distributed} have been proposed to solve Problem (\ref{eq:F}). As for the theoretical part, they mainly focus on analyzing the convergence behavior and communication complexity of their proposed algorithms. Due to the inaccessibility of the data distribution $\D_i$, they approximate the expectation value by averaged sum on the training dataset $\S=\{\S_1,...,\S_m\}$ with local samples $\xi_{i,l_i}$ stored in local dataset $\S_i=\{\xi_{i,l_i}\}_{1\leq l_i\leq n}$ :
\begin{equation}\label{eq:FS}
\small
    \min_{\x \in\X}\max_{\y\in\Y} F_{\S}(\x,\y), {\rm~  with~}F_{\S}(\x,\y)=\frac{1}{m}\sum_{i=1}^mF_{\S_i}(\x,\y)=\frac{1}{m}\sum_{i=1}^m\frac{1}{n}\sum_{l_i=1}^nf_i(\x,\y;\xi_{i,l_i})
\end{equation}
However, it is insufficient to evaluate the stochastic algorithm
not to consider the generalization performance, which is roughly the gap between Eq.~\eqref{eq:F} and Eq.~\eqref{eq:FS}. Generally speaking, saddle point of $\FS(\x,\y)$ may not be the optimal solution of $\min_{\x}\max_{\y}F(\x,\y)$. As a result, the model learned by Eq.~\eqref{eq:FS} may not perform well on the test dataset. In fact, the generalization gap is a crucial criterion for us to foresee the performance of the trained model on the unknown dataset. Furthermore, it is quite necessary for us to make a trade-off between the optimization error and the generalization gap to obtain models with optimal population risk (see Eq.~\eqref{eq:F}). 

Concerning the stability and generalization of the minimax problem, several works~\citep{lei2021stability,ozdaglargood,farnia2021train,zhang2021generalization} have studied the generalization gap and population risk of some algorithms, including SGDA, SGDmax, PPM, and AGDA. However, these results cannot be directly extended to the decentralized case due to the additional communication step during the training process. Intuitively, the number of nodes and communication topology in decentralized training may exert a potential influence on the model's generalizability. Note that even for the decentralized minimization problem, the generalization and stability of decentralized SGD are adversely affected by an extra non-vanishing term~\citep{sun2021stability}, and the stability usually suffers from a constant term $\lambda^2$~\citep{zhu2022topology}, compared to vanilla SGD. Building upon these findings, we argue that it is worthy to investigate the generalization and stability of D-SGDA for decentralized minimax problems, where there do exist more newly unveiled problems.

To mitigate this theoretical deficiency, we present the first comprehensive analysis of the stability and generalization of D-SGDA for the decentralized minimax problem in this paper. Specifically, we develop a refined stability analysis in a decentralized manner and derive the generalization gap and population risk for D-SGDA under different settings. The main theoretical results are summarized in Table \ref{table:results}. And our main contributions are summarized as follows:
\begin{itemize}[leftmargin=*]
    \item \textit{First work on the stability and generalization of D-SGDA for decentralized minimax problem.} We extend the concepts of algorithmic stability, which includes argument stability and weak stability, to the decentralized setting. And we establish a universal connection between argument stability and different measures of generalization gap in the framework of decentralization. We propose a subtle technique to distribute the "different" samples in the neighboring datasets among agents by methods of permutation and combination.
    \item \textit{New theoretical results.} Our theoretical results reveal that decentralized structure does not hurt the stability and generalizability of D-SGDA compared with SGDA and explain how topology of the communication network influences the performance in stongly-convex-strongly-concave, convex-concave, and nonconvex-nonconcave conditions (see Table \ref{table:results},\ref{tab:lambda}). We also evaluate the optimization error and leverage it with generalization gap to obtain the optimal population risk. 
    \item \textit{Experiments.} We provide several numerical experiments on AUC maximization (C-C) and adversarial learning (NC-NC) in which we vary different factors to support our theoretical findings. The preliminary experimental results align with our theoretical insights.
\end{itemize}



\section{Related Work}
\textbf{Decentralized minimax problem.}
Existing works mainly focus on improving the convergence rate and communication complexity. \citet{liu2020decentralized} propose DPOSG, which is firstly applied in case of nonconvex-nonconcave, i.e., GAN training, and they prove $\small \mathcal{O}(\epsilon^{-12})$ computational complexity and $\small \mathcal{O}(log(1/\epsilon))$ communication complexity on the busiest node. \citet{xian2021faster} propose DM-HSGD with convergence rate of $\small\mathcal{O}(\kappa^3\epsilon^{-3})$ and \citet{chen2022simple} propose DREAM with communication rounds of $\small \mathcal{O}(\kappa^2\epsilon^{-2}/\sqrt{1-\lambda_2})$ in nonconvex-strongly-concave condition. \citet{chen2022faster} propose SPIDER-GDA and achieve stochastic first-order oracle of $\small \mathcal{O}((n+\sqrt{n}\kappa_x\kappa_y^2)log(1/\epsilon))$ under two-sided PL condition. \citet{rogozin2021decentralized} propose a Mirror-prox based algorithm with $\mathcal{O}(\epsilon^{-1})$ communication complexity in  C-C setting. \citet{huang2022adaptive,luo2022decentralized} accelerates by variance reduction. \citet{beznosikov2022decentralized} considers time-varying networks with heterogeneous data, \citet{kovalev2022optimal} provides a rigorous complexity for decentralized variational inequalities. 
 
\textbf{Stability and generalization.} 
There are mainly two approaches to investigating the generalization: algorithm-independent generalization, which is also called uniform convergence generalization, and algorithm-dependent generalization respectively. Where the former may degrade to a vacuous conclusion in \cite{nagarajan2019uniform} and we adopt the latter method in our paper which can better explain the generalization behavior of a detailed algorithm. \citet{bousquet2002stability} come up with algorithmic stability, \citet{elisseeff2005stability} extend the concept to randomized algorithms. \citet{hardt2016train} further develop the framework by connecting algorithmic stability with the generalization gap.  
\citet{sun2021stability} and \citet{zhu2022topology} extend the generalization and stability analysis to D-SGD. 
In the minimax problem, \citet{zhang2021generalization} focus on argument stability and prove $\small \mathcal{O}(1/n)$ weak and strong generalization bounds for the SC-SC condition; \citet{farnia2021train} analyze the uniform stability and generalization gap of GDA, GDmax, and PPM (proximal point method) in the case of NC-NC and \citet{lei2021stability} summarize the connection between different measures of stability and generalization gap and further develop the corresponding high-probability results. \citet{xing2021algorithmic} specify the generalization gap for adversarial training and \citet{yang2022differentially} investigate the stability-based generalization of SGDA with differential privacy constraints. \citet{ozdaglargood} propose a new metric to better evaluate the generalization performance even in the case when the existing metric fails. 


\section{Problem Formulation}
In this section, we provide the necessary assumptions, notations, terminologies of population risk, generalization gap, and algorithmic stability in decentralized minimax problems. 



\subsection{Basic Assumptions}\label{subsection:assump}

\textbf{Notations.} We use bold lower case to denote vectors and bold upper case to denote matrices. $\|\cdot\|_2$ means $\ell_2$ norm for vectors and $\|\cdot\|_F$ means Frobinius norm for matrices, and we will omit the subscript when the type of norm is clear from the context. $\mathds{1}_n\in\R^n$ denotes the all-one vector and $\lambda_i(\cdot)$ represents the $i$-th largest eigenvalue of a matrix. $[n]:=\{1,2,...,n\}$.


\begin{assp}[\textbf{Lipschitz continuous}]\label{assp:continuous}Each local function $f_i$ is differentiable and there exists $G>0$ that $f_i$ is $G$-Lipschitz continuous with respect to both $\x$ and $\y$ on any given sample $\xi_i$, i.e., 
\[
\small
\left|f_i(\x,\y;\xi_i)-f_i(\x',\y';\xi_i)\right|\leq G\left\|\left(\begin{array}{c}
        \x-\x'   \\
         \y-\y' 
    \end{array}\right)\right\|_2.
\]
\end{assp}

\begin{assp}[\textbf{Lipschitz smooth}]\label{assp:smooth}
    Each local function $f_i$ is differentiable and there exists $L>0$ that $f_i$ is $L$-Lipschitz smooth with respect to both $\x$ and $\y$ on any given sample $\xi_i$, i.e., 
    \[
    \small
    \left\|\left(\begin{array}{c}
        \dx f_i(\x,\y;\xi_i)-\dx f_i(\x',\y';\xi_i) \\
        \dy f_i(\x,\y;\xi_i)-\dy f_i(\x',\y';\xi_i)
    \end{array}\right)\right\|\leq L\left\|\left(\begin{array}{c}
        \x-\x'   \\
         \y-\y' 
    \end{array}\right)\right\|_2.
    \]
\end{assp}

\begin{defn}[\textbf{Convexity-Concavity}]
For each local loss function $f_i(\x,\y;\xi_i)$, we say that $f_i$ is $\mu_{\x}$-strongly convex on $\x$ if for any given ${\y}$ and on any given sample $\xi_i$, there holds:
    \[
    \small f_i(\x',\y;\xi_i)\geq f_i(\x,\y;\xi_i)+\nabla_{\x}f_i(\x,\y;\xi_i)^T(\x'-\x)+\frac{\mu_{x}}{2}\|\x'-\x\|^2, \mu_{\x}\ge 0, \forall \x,\x'.\]
we say that $f_i$ is $\mu_{\y}$-strongly concave on $\y$ if for any given $\x$ and on any given sample $\xi_i$, there holds: 
    \[
    \small
    f_i(\x,\y';\xi_i)\leq f_i(\x,\y;\xi_i)+\nabla_{\y}f_i(\x,\y;\xi_i)^T(\y'-\y)-\frac{\mu_{\y}}{2}\|\y'-\y\|^2, \mu_{\y}\ge 0, \forall \y,\y'.\]
We can call it is convex w.r.t. $\x$ when $\mu_{\x}=0$ and concave w.r.t. $\y$ when $\mu_{\y}=0$.
\end{defn}

\begin{remark}
    Assumptions about the Lipschitz continuity and smoothness are commonly used in the context of decentralized minimax optimization problems \cite{zhang2021generalization,farnia2021train,sun2021stability}. 
\end{remark}


\subsection{Decentralized Stochastic Gradient Descent Ascent (D-SGDA)}\label{subsec:dsgda}

In decentralized setting, each node will exchange information alternatively and we represent the communication network between nodes as $\mathcal{G}=(\mathcal{V},\mathcal{E})$, which is a connected graph with node set $\mathcal{V}=\{1,2,...,m\}$ and edge set $\mathcal{E}\subseteq\mathcal{V}\times\mathcal{V}$. Specifically, $(i,l)\in\mathcal{E}$ indicates that agent $l$ can receive information from agent $i$ and therefore we symbolize the \textit{in} and \textit{out} neighbors as $\mathcal{N}^{in}(i)\triangleq\{l\in\mathcal{V},(l,i)\in\mathcal{E}\}$ and $\mathcal{N}^{out}(i)\triangleq\{l\in\mathcal{V},(i,l)\in\mathcal{E}\}$ respectively. In an undirected graph, there is no consideration about the order, thus $(i,l)\in\mathcal{E}$ implies $(l,i)\in\mathcal{E}$ and the \textit{in} and \textit{out} neighbors are identical which we will abbreviate as $\mathcal{N}$ for brief. In our work, we focus on undirected graphs. The communication graph is associated with an adjacency matrix, which is also called a mixing matrix, $\W=[\omega_{ij}]\in\R^{m\times m}$. It implies the connection between $m$ agents that $\omega_{ij}>0$ if and only if $(j,i)\in\mathcal{E}$, otherwise $\omega_{ij}=0$. And there are some basic assumptions about the mixing matrix which is commonly used in decentralized settings \cite{koloskovadecentralized,lian2017can,liu2020decentralized,sun2021stability}.

\begin{assp}[\textbf{Mixing matrix}]\label{assp:mixing}
We assume the mixing matrix $\footnotesize
\W=[\omega_{ik}]\in [0,1]^{m\times m}$ defined on the graph $\small
\mathcal{G}=(\mathcal{V},\mathcal{E})$ is a symmetric doubly stochastic matrix, which holds the property that $\small
\W^T=\W$ and $\small
\W\mathds{1}_m=\mathds{1}_m,\mathds{1}_m^T\W=\mathds{1}^T_m$. Besides, we assume $\lambda:=\max\{|\lambda_2|,|\lambda_m(\W)|\}\in(0,1)$ .
\end{assp}

For a symmetric doubly stochastic matrix, $\W$ holds the property that: $\lambda_1=1$. For different topologies, $\lambda\rightarrow1$ implies the sparsity while $\lambda\rightarrow0$ implies the complete connection. \citet{nedic2018network} and \citet{ying2021exponential} list upper bounds for the spectral gap $1-\lambda$ over the commonly communication network. More knowledge on decentralized optimization is placed in \textbf{Appendix}~\ref{appsec:decentralized}.

\begin{wrapfigure}{r}{0.54\textwidth}
\begin{minipage}{0.54\textwidth}
\vspace{-0.8cm}
\begin{algorithm}[H]
\small
    \renewcommand{\algorithmicrequire}{\textbf{Initialize:}}
    \renewcommand{\algorithmicensure}{\textbf{Output:}}
    \caption{D-SGDA}
    \label{alg:dsgda}
    \begin{algorithmic}
        \REQUIRE $\x^0_i=0$; $\y^0_i=0$, $i=1,...,m$
        \FOR{$t = 1,2, \cdots, T$}
        \STATE $\x^{t+1}_i\!=\!P_{\!\X}\!\left(\!\sum\limits_{k\in\mathcal{N}\!(i)}\!\omega_{ik}\x^t_k\!-\!\eta_{\x\!,t}\nabla_{\x} f_i(\x^t_i,\y^t_i;\xi_{i,j_t\!(\!i)\!})\!\right)$
        \STATE $\y^{t+1}_i\!=\!P_{\Y}\!\left(\!\sum\limits_{k\in\mathcal{N}\!(i)}\!\omega_{ik}\y^t_k\!+\!\eta_{\y\!,t}\nabla_{\y} f_i(\x^t_i,\y^t_i;\xi_{i,j_t\!(\!i)\!})\!\right)$
        \ENDFOR
        \ENSURE $\x^t=\frac{1}{m}\sum_{i=1}^m\x^t_i$; $\y^t=\frac{1}{m}\sum_{i=1}^m\y^t_i$
    \end{algorithmic}
\end{algorithm}
\end{minipage}
\vspace{-0.4cm}
\end{wrapfigure}

In this paper, we study the decentralized minimax problem solved via D-SGDA (see  Algorithm \ref{alg:dsgda}). We use the superscript to denote the $t$-th iteration and the subscript to denote the $i$-th local agent. During iteration, each client first computes its local gradient approximation by $\small \dx f_i(\x^t_i,\y^t_i;\xi_{i,j_t(i)})$ and $\small \dy f_i(\x^t_i,\y^t_i;\xi_{i,j_t(i)})$ respectively where $\small j_t(i)$ is randomly chosen from $[n]$. Then each client communicates with its neighbor $\small\mathcal{N}(i)$ and updates by SGDA.

\subsection{Generalization Gap}\label{subsec:generalization}
In a sense, we obtain the result by minimaxing the empirical one $F_{\S}$ in Eq.~\eqref{eq:FS}, which differs from the population one $F$ in Eq.~\eqref{eq:F}. So we can not guarantee the same performance on the unknown distribution as on the training dataset. And therefore the gap between the empirical one and the population one reflects the ability of generalization. Unlike the standard learning theory which only contains a single variable that can directly define the population risk and empirical risk by the objective function\cite{bousquet2002stability}. Owing to the structure of minimax, there are different methods to define the population and empirical risk as concluded in \cite{lei2021stability}, where primal-dual measure starts from the idea of duality gap in optimization. 
And we first introduce two types of population risks as follows.
\begin{defn}[\textbf{Population risk}]\label{def:populationrisk}
For a randomized model $(\x,\y)$, we define the population risk as:
    \begin{enumerate}[leftmargin=*]
        \item \textit{Weak primal-dual population risk}: $\small \Delta^w(\x,\y)=\sup_{\y'\in\Y}\E[F(\x,\y')]-\inf_{\x'\in\X}\E[F(\x',\y)]$.
        \item \textit{Strong primal-dual population risk}: $\small \Delta^s(\x,\y)=\E[\sup_{\y'\in\Y}F(\x,\y')-\inf_{\x'\in\X}F(\x',\y)]$.
    \end{enumerate}
\end{defn}
Here the expectation is taken over the randomness of the model. By replacing function $F$ with function $\FS$ (see Eq.~\eqref{eq:FS}) in Def.~\ref{def:populationrisk} when considering  the empirical risk, we obtain the corresponding \textit{weak primal-dual empirical risk} $\small \Delta^w_{\S}(\x,\y)\!=\!\sup_{\y'\in\Y}\E[\FS(\x,\y')]\!-\!\inf_{\x'\in\X}\E[\FS(\x',\y)]$ and \textit{strong primal-dual empirical risk} $\small \Delta^s_{\S}(\x,\y)=\E[\sup_{\y'\in\Y}\FS(\x,\y')-\inf_{\x'\in\X}\FS(\x',\y)]$ respectively. 
Subtracting the empirical risk from population risk, we can define the generalization gap as follows.
\begin{defn}[\textbf{Generalization gap}]\label{def:generalizationgap}
For a randomized model $(\x,\y)$, we define the corresponding generalization gap as:
    \begin{enumerate}[leftmargin=*]
        \item \textit{Weak primal-dual generalization gap}: $\small \epsilon^w_{gen}(\x,\y)=\Delta^w(\x,\y)-\Delta^w_{\S}(\x,\y)$.
        \item \textit{Strong primal-dual generalization gap}: $\small \epsilon^s_{gen}(\x,\y)=\Delta^s(\x,\y)-\Delta^s_{\S}(\x,\y)$.
    \end{enumerate}
\end{defn}

\begin{remark}\label{remark:populationrisk}
Notice that we revise the name as generalization gap to avoid misunderstanding since "generalization error" usually refers to the empirical risk in learning theory (see \cite{bousquet2002stability}). Our primal target (see Problem \eqref{eq:F}) is to obtain small population risk (Def.~\ref{def:populationrisk}) which can be considered as a summation like $\Delta^w(\x,\y)=\epsilon^w_{gen}(\x,\y)+\Delta^w_{\S}(\x,\y)$, where generalization gap reflects how well the model generalizes and empirical risk reflects the optimization performance. The strong primal-dual risk is stronger than the weak one due to $\Delta^w(\x,\y)\leq\Delta^s(\x,\y)$ according to Jensen's inequality. But in some cases, it is sufficient to bound weak primal-dual population risk such as the MDP \cite{zhang2021generalization}.

\end{remark}

\subsection{Algorithmic Stability}\label{subsec:stability}
Inspired by \cite{hardt2016train} that the generalization gap of an $\epsilon$-stable algorithm can be bounded by $\epsilon$. And this connection between stability and generalization for minimax problem is furthermore established in \cite{lei2021stability}. For a randomized algorithm $\A$ solving the problem \eqref{eq:FS}, we use $\small \A(\S)=(\A_{\x}(\S),\A_{\y}(\S))$ to denote the output of applying algorithm $\A$ on dataset $\S=\{\S_1,...,\S_m\}$. Let $\small \E_{\S}$ and $\small \E_{\A}$ denote taking expectation on the randomness of the algorithm $\A$ and the dataset $\S$ respectively. Sometimes we omit the subscript as $\E[\cdot]$ when it is clear from the context. Next, we first refine the definitions of algorithmic stability in a decentralized manner and then provide a connection between algorithmic stability and generalization gap in the framework of decentralization.

\begin{defn}[\textbf{Decentralized neighboring dataset}]\label{def:neighboring}
    We call $\S,\S'$ the decentralized neighboring datasets when there are at most one different sample in each local dataset, where $\S=\{\S_1,\S_2,...,\S_m\}$, $\S'=\{\S_1',\S_2',...,\S_m'\}$ and each $\S_i$ and $\S_i'$ differs by at most one sample.
\end{defn}

\begin{defn}[\textbf{Decentralized algorithmic stability}]
For a randomized algorithm $\A$, we say:
\begin{enumerate}[leftmargin=*]
    \item $\A$ is \textbf{$\bm{\epsilon}$-argument stable} if there holds for any neighboring datasets $\S, \S'$:
\[
\small
\E_{\A}\left\|\left(\begin{array}{c}
    \A_{\x}(\S)-\A_{\x}(\S')  \\
    \A_{\y}(\S)-\A_{\y}(\S') 
\end{array}\right)\right\|_2\leq\epsilon.
\] 
    \item $\A$ is \textbf{$\bm{\epsilon}$-weakly stable} if there holds for any neighboring datasets $\S, \S'$:
$$
    \tiny \sup_{\bm{\xi}}\!\!\left[\sup_{\y'\!\in\Y}\!\E_{\!\A}[\bm{f}\!(\A_{\x}\!(\S),\y';\!\bm{\xi})\!\!-\!\!\bm{f}\!(\A_{\x}\!(\S'),\y';\!\bm{\xi})]\!\!+\!\!\sup_{\x'\!\in\X}\!\E_{\!\A}[\bm{f}(\x',\A_{\y}\!(\S);\!
    \bm{\xi})\!\!-\!\!\bm{f}(\x',\A_{\y}\!(\S');\!\bm{\xi})]\!\right]\!\leq\!\epsilon.$$
    where $\bm{\xi}\!\triangleq\!\{\xi_1,...,\xi_m\}$ denotes sample index with $\xi_i\in\D_i$ and $\bm{f}(\x,\y;\bm{\xi})\!\triangleq\!\frac{1}{m}\sum_{i=1}^m f_i(\x,\y;\xi_i)$.
\end{enumerate}
And we further specify the stability error as $\epsilon_{sta}^{arg}(\A)$ and $\epsilon^w_{sta}(\A)$ respectively.

\end{defn}



\begin{remark}
    The definition of neighboring datasets in the decentralized setting can degenerate to the traditional neighboring datasets where there is at most a single different sample between $\S$ and $\S'$. And the refined concepts of algorithmic stability are also fit for classic stability without decentralization in \cite{lei2021stability}. These facts validate that our definitions above are well-defined.
    Notice that the argument stability can imply 
    weak stability because of the property of Lipschitz continuity (see Assumption \ref{assp:continuous}). Specifically speaking, when algorithm $\A$ is $\epsilon$-argument stable, then it is 
    $\sqrt{2}G\epsilon$-weakly stable. So we will mainly focus on the argument stability in the rest part.
\end{remark}


\begin{thm}[\textbf{Connection}]\label{thm:connection}
For an $\epsilon$-argument stable decentralized algorithm $\A$, under Assumption \ref{assp:continuous}, we have the following different measures of generalization gap:
\vspace{-0.2cm}
\begin{enumerate}[leftmargin=*,label={\alph*.}]
    \item\label{thm:connectionweak} Weak primal-dual generalization gap: $\small \epsilon^w_{gen}(\A_{\x}(\S),\A_{\y}(\S))\leq\sqrt{2}G\epsilon$.
    \item\label{thm:connectionstrong} Strong primal-dual generalization gap holds under extra Assumption \ref{assp:smooth} when $f_i$ is $\mu_{\x}$SC-$\mu_{\y}$SC: $\small \epsilon^s_{gen}(\A_{\x}(\S),\A_{\y}(\S))\leq G\sqrt{2+\frac{2L^2}{\mu^2}}\epsilon$, where $\mu\triangleq\min\{\mu_{\x},\mu_{\y}\}$.
\end{enumerate}

\end{thm}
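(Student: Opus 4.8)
The plan is to control both generalization gaps by the $\epsilon$-argument stability through a symmetrization (ghost-sample) argument together with the $G$-Lipschitz continuity of Assumption~\ref{assp:continuous}; throughout I read the expectations in Definitions~\ref{def:populationrisk}--\ref{def:generalizationgap} as taken jointly over the randomness of $\A$ and of the training set $\S$.

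For part~\ref{thm:connectionweak} I would first use $\sup_a h_1-\sup_a h_2\le\sup_a(h_1-h_2)$ and $\inf_a h_1-\inf_a h_2\le\sup_a(h_1-h_2)$ to peel off the outer $\sup/\inf$, giving
\[
\epsilon^w_{gen}\le\sup_{\y'}\big(\E[F(\A_{\x}(\S),\y')]-\E[\FS(\A_{\x}(\S),\y')]\big)+\sup_{\x'}\big(\E[\FS(\x',\A_{\y}(\S))]-\E[F(\x',\A_{\y}(\S))]\big).
\]
Next I would expand $F$ and $\FS$ as averages over agents $i$ and local indices $l$, introduce an independent copy $\widetilde{\S}=\{\widetilde{\xi}_{i,l}\}$ of $\S$, and let $\S^{(i,l)}$ denote $\S$ with its $(i,l)$-th sample swapped for $\widetilde{\xi}_{i,l}$. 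The i.i.d. symmetry of the samples gives the renaming identity $\E_{\S}[f_i(\A_{\x}(\S),\y';\xi_{i,l})]=\E_{\S,\widetilde{\S}}[f_i(\A_{\x}(\S^{(i,l)}),\y';\widetilde{\xi}_{i,l})]$, so each bracket above equals $\frac{1}{mn}\sum_{i,l}\E[f_i(\A_{\x}(\S),\y';\widetilde{\xi}_{i,l})-f_i(\A_{\x}(\S^{(i,l)}),\y';\widetilde{\xi}_{i,l})]$. Since $\S$ and $\S^{(i,l)}$ differ in a single local sample they are decentralized neighboring datasets, so Assumption~\ref{assp:continuous} bounds each summand by $G\|\A_{\x}(\S)-\A_{\x}(\S^{(i,l)})\|$ and the dual term analogously by $G\|\A_{\y}(\S)-\A_{\y}(\S^{(i,l)})\|$. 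Summing the two parts, using $\|a\|+\|b\|\le\sqrt{2}\,\|(a,b)^\top\|$, and invoking $\epsilon$-argument stability on the stacked vector then yields $\epsilon^w_{gen}\le\sqrt{2}G\epsilon$.

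Part~\ref{thm:connectionstrong} is harder because the $\sup_{\y'}$ and $\inf_{\x'}$ now sit inside the expectation. Here I would plug in the population saddle directions: with $\bar{\y}(\S)=\arg\max_{\y'}F(\A_{\x}(\S),\y')$, the bound $\sup_{\y'}\FS(\A_{\x}(\S),\y')\ge\FS(\A_{\x}(\S),\bar{\y}(\S))$ gives $\E[\sup_{\y'}F(\A_{\x}(\S),\y')-\sup_{\y'}\FS(\A_{\x}(\S),\y')]\le\E[(F-\FS)(\A_{\x}(\S),\bar{\y}(\S))]$, i.e. a one-model generalization gap for the augmented model $g(\S)=(\A_{\x}(\S),\bar{\y}(\S))$, and symmetrically for the dual side with $\bar{\x}(\S)=\arg\min_{\x'}F(\x',\A_{\y}(\S))$. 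The extra ingredient I need is that, since $F$ inherits SC-SC and smoothness from the $f_i$ under Assumption~\ref{assp:smooth}, the maximizer map $\y^{*}(\cdot)$ and minimizer map $\x^{*}(\cdot)$ of $F$ are $(L/\mu)$-Lipschitz, which I would derive from the first-order optimality conditions and strong concavity/convexity. Running the renaming argument on $g$ and measuring its stability in the joint norm, $\|g(\S)-g(\S^{(i,l)})\|^2=\|\A_{\x}(\S)-\A_{\x}(\S^{(i,l)})\|^2+\|\y^{*}(\A_{\x}(\S))-\y^{*}(\A_{\x}(\S^{(i,l)}))\|^2\le(1+L^2/\mu^2)\|\A_{\x}(\S)-\A_{\x}(\S^{(i,l)})\|^2$, and combining the primal and dual contributions with $\|a\|+\|b\|\le\sqrt{2}\|(a,b)^\top\|$ and argument stability gives $\epsilon^s_{gen}\le\sqrt{2}\,G\sqrt{1+L^2/\mu^2}\,\epsilon=G\sqrt{2+2L^2/\mu^2}\,\epsilon$.

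The main obstacle is keeping the constant in part~\ref{thm:connectionstrong} tight. Splitting $\|g(\S)-g(\S^{(i,l)})\|$ by the triangle inequality into its $\x$- and $\y$-components would only give the factor $1+L/\mu$ and hence the looser $\sqrt{2}G(1+L/\mu)\epsilon$; recovering the stated $\sqrt{2+2L^2/\mu^2}$ forces me to measure the augmented model's stability in the joint $\ell_2$ norm and to establish the $(L/\mu)$-Lipschitzness of the population argmax/argmin, which is precisely where the smoothness and strong convexity-concavity assumptions are used.
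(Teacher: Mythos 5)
Your proof is correct and takes essentially the same route as the paper's: the same sup/inf peeling for the weak gap, the same ghost-sample renaming argument, the same device of plugging the population argmax into the empirical supremum for the strong gap, the same $(L/\mu)$-Lipschitzness of the population argmax/argmin maps (the paper invokes Lemma 4.3 of \cite{lin2020gradient}), and the same joint-$\ell_2$-norm accounting that yields the tight constant $G\sqrt{2+2L^2/\mu^2}$ rather than the looser $\sqrt{2}G(1+L/\mu)$. The only difference is combinatorial bookkeeping: you symmetrize over the $mn$ single-sample swaps $\S^{(i,l)}$, whereas the paper swaps one sample in every local dataset simultaneously and averages over the $n^m$ multi-indices $\S^{(\bm{l})}$ (via its Lemma~\ref{le:framework}); both kinds of pairs are decentralized neighboring datasets under Definition~\ref{def:neighboring}, so both are legitimate invocations of the $\epsilon$-argument-stability hypothesis (yours in fact only uses the weaker single-perturbation case).
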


\begin{remark}
    The complete proof is provided in \textbf{Appendix}~\ref{appsubsec:connection}.
    We establish the connection between argument stability and a different measure of generalization gap under different constraints, where weak primal-dual generalization gap does not require any convexity or concavity in part \ref{thm:connectionweak} 
and strong primal-dual generalization gap requires both strong convexity and strong concavity in part \ref{thm:connectionstrong}. And this connection is not limited to the single decentralized algorithm D-SGDA, but a universal connection for decentralized minimax algorithms.
    Actually weak stability is sufficient to prove the weak primal-dual generalization gap in part \ref{thm:connectionweak} that when algorithm $\A$ is $\epsilon$-weakly stable, we have $\small \E_{\A,\S}[\epsilon^w_{gen}(\A_{\x}(\S),\A_{\y}(\S))]\leq\epsilon$. And the theorem implies that once we have access to the stability error, we can derive the generalization gap as an accompanying result. 
\end{remark}

\section{Theoretical Results on D-SGDA}

In this section, 
we study algorithmic stability and generalization bound in  SC-SC, C-C, and NC-NC settings in Sec \ref{subsection:SC-SC}, Sec \ref{subsection:C-C} and  Sec \ref{subsection:NC-NC}, respectively. Due to the space limitation,  the proofs are placed in the \textbf{Appendix}~\ref{appsec:scsc},~\ref{appsec:cc},~\ref{appsec:ncnc} respectively.

\subsection{Results on Strongly-Convex-Strongly-Concave Case}\label{subsection:SC-SC}

Below, we first characterize the argument stability with fixed and decaying learning rates, respectively.
\begin{thm}[\textbf{Argument Stability}]\label{thm:cc:stability}
Under Assumption~\ref{assp:continuous},\ref{assp:smooth},\ref{assp:mixing}
when each $f_i$ is $\mu_{\x}$-strongly convex and $\mu_{\y}$-strongly concave, we have the argument stability bound for D-SGDA (denoted as $\A$): 
\begin{equation*}
\small 
        \epsilon_{sta}^{arg}(\A)\!\leq\!\frac{2G}{n}\!\sum_{k=0}^{T\!-\!1}\eta_k^{max}\!\!\prod_{s\!=\!k\!+\!1}^{T\!-\!1}\!\!(1\!-\!\eta_s^{min}\frac{L\mu}{L\!+\!\mu})\!+4GL\!\sum_{k=1}^{T\!-\!1}\!\!\left(\eta^{max}_k\!\sum_{s=0}^{k-1}\eta^{max}_s\lambda^{k\!-\!1\!-\!s}\!\right)\!\!\prod_{j=k+1}^{T\!-\!1}\!\!(1\!-\eta^{min}_j\frac{L\mu}{L\!+\!\mu}).
\end{equation*}
where $\eta^{max}_t\triangleq\max\{\eta_{\x,t},\eta_{\y,t}\}$, $\eta^{min}_t\triangleq\min\{\eta_{\x,t},\eta_{\y,t}\}$, $\mu=\min\{\mu_{\x},\mu_{\y}\}$. Furthermore, 
\vspace{-0.1cm}
\begin{enumerate}[leftmargin=*,label={\alph*.}]
    \item\label{thm:stability:fixed} for fixed learning rates, $\epsilon_{sta}^{arg}(\A)\leq2G\frac{L+\mu}{\eta^{min}L\mu}(\frac{2(\eta^{max})^2L}{1-\lambda}+\frac{\eta^{max}}{n})$.
    \vspace{0.1cm}\item\label{thm:stability:varying} for decaying learning rates with $\eta^{min}_t\!=\!\frac{1}{\mu(t+1)}$ and $\eta^{max}_t\!=\!\frac{1}{\mu(t+1)^c}, c\leq1$, we have:
    \[
    \small
    \epsilon_{sta}^{arg}(\A)\leq\frac{2G}{\mu nT^{\frac{L}{L+\mu}}}\sum_{k=0}^{T-1}\frac{1}{(k+1)^{c-\frac{L}{L+\mu}}}+\frac{4GL}{\mu^2T^{\frac{L}{L+\mu}}}\sum_{k=1}^{T-1}\frac{1}{(k+1)^{c-\frac{L}{L+\mu}}}\frac{C_{\lambda}}{k^c}.
    \]
    where $C_{\lambda}\triangleq\frac{(k/e)^c}{\lambda(\ln{\frac{1}{\lambda}})^c}+\frac{2e^{-1}}{\lambda\ln{\frac{1}{\lambda}}}+\frac{2^c}{\lambda\ln{\frac{1}{\lambda}}}$.
\end{enumerate}
\end{thm}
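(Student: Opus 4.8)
The plan is to bound, over all decentralized neighboring datasets $\S,\S'$, the expected combined divergence between the two D-SGDA trajectories. Writing $z_i^t=(\x_i^t,\y_i^t)$ and $z_i'^t$ for the iterates produced from $\S$ and $\S'$, and $\bar z^t=\tfrac1m\sum_i z_i^t$ for the node-average, the output is $(\A_\x(\S),\A_\y(\S))=\bar z^T$, so $\epsilon_{sta}^{arg}(\A)=\sup_{\S,\S'}\delta_T$ with $\delta_t:=\E_\A\|\bar z^t-\bar z'^t\|$. Because the projections $P_\X,P_\Y$ are nonexpansive and $\W$ is doubly stochastic (hence preserves the node-average, $\mathds{1}_m^T\W=\mathds{1}_m^T$), I would reduce everything to a one-step recursion for $\delta_t$, while carrying along the consensus errors $c_i^t=z_i^t-\bar z^t$ and $c_i'^t=z_i'^t-\bar z'^t$, through which the topology (the spectral quantity $\lambda$) enters.

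The first ingredient is a contraction estimate for the local descent-ascent map under Assumptions~\ref{assp:continuous}--\ref{assp:smooth} and strong convexity-concavity. For a fixed shared sample I would show the local map $z\mapsto(\x-\eta_{\x,t}\dx f_i(z),\,\y+\eta_{\y,t}\dy f_i(z))$ is a $(1-\eta_t^{min}\tfrac{L\mu}{L+\mu})$-contraction under $\|\cdot\|$. The mechanism is the co-coercivity furnished jointly by strong convexity/concavity and $L$-smoothness, applied to the $\x$-block and to the $\y$-block, which yields the rate $\tfrac{L\mu}{L+\mu}$ per block. I expect the main obstacle here to be the cross-coupling: expanding the squared norm produces an indefinite term of order $(\eta_{\x,t}+\eta_{\y,t})L\|\x-\x'\|\,\|\y-\y'\|$ mixing the two blocks, which must be absorbed into the negative gradient-norm slack that co-coercivity supplies; controlling it is what forces the conservative rate with the binding step size $\eta_t^{min}$ and a smallness constraint on the step sizes.

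Next I would assemble the recursion for $\delta_t$. Splitting each gradient difference into (i) a part evaluated at the common averages $\bar z^t,\bar z'^t$, handled by the contraction lemma; (ii) a smoothness remainder of order $L\|c_i^t-c_i'^t\|$ from the consensus gap (the mismatch between averaging and projecting is also nonexpansive and folds into this term); and (iii) a sample-replacement part that is nonzero only when the random index $j_t(i)$ equals the single differing coordinate of $\S_i$ versus $\S_i'$. For (iii) I would use the permutation/combination bookkeeping over the $m$ agents: averaging over nodes and taking expectation over the draw, the differing sample is selected at each node with probability $1/n$ and each hit injects at most $2G\,\eta_t^{max}$ by $G$-Lipschitzness, so the expected averaged contribution is exactly $\tfrac{2G}{n}\eta_t^{max}$. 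This gives $\delta_{t+1}\le(1-\eta_t^{min}\tfrac{L\mu}{L+\mu})\delta_t+L\,\eta_t^{max}\,\bar e_t+\tfrac{2G}{n}\eta_t^{max}$, where $\bar e_t=\tfrac1m\sum_i\E_\A\|c_i^t-c_i'^t\|$.

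Finally I would control $\bar e_t$ by unrolling the mixing on the subspace orthogonal to $\mathds{1}_m$: each step contracts this component by $\lambda$ (Assumption~\ref{assp:mixing}) and injects a gradient difference of norm at most $2G$, giving $\bar e_t\le 2G\sum_{s=0}^{t-1}\eta_s^{max}\lambda^{t-1-s}$, a quantity that does not shrink with $n$ and is the source of the non-vanishing topology penalty. Substituting and unrolling from $t=0$ to $T-1$, with the discount $\prod_{j=k+1}^{T-1}(1-\eta_j^{min}\tfrac{L\mu}{L+\mu})$ applied to the perturbation injected at step $k$, reproduces the stated two-term bound. For part~\ref{thm:stability:fixed} I would set the rates constant, sum the geometric series $\sum_s\lambda^{k-1-s}\le\tfrac1{1-\lambda}$ and $\sum_k(1-\eta^{min}\tfrac{L\mu}{L+\mu})^{T-1-k}\le\tfrac{L+\mu}{\eta^{min}L\mu}$, and collect terms. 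For part~\ref{thm:stability:varying} I would insert $\eta_t^{min}=\tfrac{1}{\mu(t+1)}$ and $\eta_t^{max}=\tfrac{1}{\mu(t+1)^c}$, use $\prod_{j=k+1}^{T-1}(1-\tfrac{L}{(L+\mu)(j+1)})\lesssim(\tfrac{k+1}{T})^{L/(L+\mu)}$ to extract the $T^{-L/(L+\mu)}$ prefactor, and bound the polynomial-times-geometric convolution $\sum_{s=0}^{k-1}(s+1)^{-c}\lambda^{k-1-s}$ by separating recent from old indices, which is precisely what produces the constant $C_\lambda$ with its $\ln\tfrac1\lambda$ factors.
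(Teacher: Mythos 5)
Your proposal follows essentially the same route as the paper's proof: reduce to the node-averaged iterates via the double stochasticity of $\W$, combine the $(1-\eta_t^{min}\frac{L\mu}{L+\mu})$-contraction of the descent-ascent map (the paper's Lemma 1), a consensus-error term decaying as $\sum_{s}\eta_s^{max}\lambda^{t-1-s}$ (the paper's Lemma 2), and the probability-$1/n$ sample-replacement term into the same one-step recursion, then unroll and estimate exactly as the paper does (geometric sums for fixed rates; the bound $\prod_{j=k+1}^{T-1}(1-\eta_j^{min}\frac{L\mu}{L+\mu})\leq(\frac{k+1}{T})^{L/(L+\mu)}$ and the $C_{\lambda}$ convolution lemma for decaying rates). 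The only cosmetic differences are that you sketch the contraction lemma block-wise with cross-term absorption rather than via the paper's joint co-coercivity argument, and you track the difference of the two runs' consensus vectors rather than bounding each trajectory's consensus error separately, both of which are equivalent up to the triangle inequality and constant factors.
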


\begin{remark} \label{remark:scscstable}
\textbf{(i) Bound analysis and comparison.}
For case \ref{thm:stability:fixed}with fixed learning rates, the argument stability is bounded by $\small\mathcal{O}(\frac{\eta}{1-\lambda}+\frac{1}{n})$, which can reach $\small \mathcal{O}(\frac{1}{(1-\lambda)T}+\frac{1}{n})$ when $\small \eta\sim\frac{1}{T}$.
For case \ref{thm:stability:varying} with decaying learning rates, we should require $2c\geq\frac{L}{L+\mu}+1$ otherwise the bound can tend to infinity, then we have $\small \epsilon_{sta}^{arg}(\A)\leq\frac{2G}{\mu(1-c+\frac{L}{L+\mu})}\frac{T^{1-c}}{n}+\frac{4GLC_{\lambda}}{\mu^2T^{\frac{L}{L+\mu}}}(\frac{\bm{1}_{2c>L/(L+\mu)+1}}{2c-\frac{L}{L+\mu}-1}+\ln{T}\!\cdot\! \bm{1}_{2c=L/(L+\mu)+1})$.
Both can be bounded by $\small \widetilde{\mathcal{O}}(\frac{T^{1-c}}{n}+\frac{C_{\lambda}}{T^{\frac{L}{L+\mu}}})$ with $\widetilde{\mathcal{O}}$ containing the logarithmic function into consideration, which matches the corresponding results for SGDA (see Thm.2.(e) in \cite{lei2021stability}) except an extra multiplication factor $C_{\lambda}$: $\small \widetilde{\mathcal{O}}(\frac{1}{\sqrt{T}}+\frac{1}{N})$ with decaying learning rates and $N$ the total sample size. \textbf{(ii) Factor influence.}\label{remark:factor} 
It is apparent that a smaller stability bound is achieved with larger sample size $n$, and smaller learning rates $\eta$ (which is associated with larger $T$). And notice that the decaying learning rates may slightly underperform than the fixed one, it is easy to explain that at the beginning decaying learning rates are too large and therefore result in weaker stability. The influence of these factors is consistent with vanilla SGDA.
\textbf{(iii) Effect of topology.}\label{remark:topology} So the major difference lies in $C_{\lambda}$ and the number of nodes $m$.
For $\small C_{\lambda}=\frac{(k/e)^c}{\lambda(\ln{\frac{1}{\lambda}})^c}+\frac{2e^{-1}}{\lambda\ln{\frac{1}{\lambda}}}+\frac{2^c}{\lambda\ln{\frac{1}{\lambda}}}$, when $\lambda\rightarrow1$, it is bounded by $\mathcal{O}(\frac{1}{\lambda\ln{\frac{1}{\lambda}}})$ and when $\lambda\rightarrow0$, $C_{\lambda}$ is bounded by $\small \mathcal{O}(\frac{1}{\lambda(\ln{\frac{1}{\lambda}})^c})$. We list some common topology with estimated value of $\lambda$ along with the upper bound of $C_{\lambda}$ in Table \ref{tab:lambda}. We can conclude that topology with a denser connection (larger spectral gap $1-\lambda$) will have a smaller stability error, i.e., be more stable. In extreme conditions, the fully connected network will behave as well as vanilla SGDA, and the stability error of the disconnected network will diverge. Furthermore, under the same topology, fewer nodes will result in better stability.

\end{remark}

\textbf{Generalization gap.} According to Thm.~\ref{thm:connection}, we can directly derive the weak and strong primal-dual generalization gap of D-SGDA as $\small \sqrt{2}G\epsilon^{arg}_{sta}$ and $\small G\sqrt{2+\frac{2L^2}{\mu^2}}\epsilon^{arg}_{sta}$ respectively. Therefore, we hold the same analysis as stability in above Remark~\ref{remark:scscstable}.

\begin{wraptable}[10]{r}{0.54\linewidth}
    \centering
    \vspace{-0.3cm}
    \caption{\small $\lambda$ value of different topology. Here $0$ means the extra term will disappear and N/A means the term will diverge.}
    \vspace{-0.2cm}
    \scalebox{0.8}{
    \begin{tabular}{cccc}
    \toprule
    Topology & $\lambda$(\cite{ying2021exponential}) & $C_{\lambda}$ & $\frac{1}{1-\lambda}$\\
    \midrule                 
    fully connected & 0 & 0 & 0 \\
    \vspace{0.06cm}
    exponential & $1-\frac{2}{1+\ln{m}}$ & $\mathcal{O}(\ln{m})$ & $\mathcal{O}(\ln{m})$\\
    \vspace{0.06cm}
    grid & $1-\frac{1}{m\ln{m}}$ & $\mathcal{O}(m\ln{m})$ & $\mathcal{O}(m\ln{m})$\\
    \vspace{0.06cm}
    ring & $\tiny 1-\frac{16\pi^2}{3m^2}$ (\cite{lian2017can}) & $\mathcal{O}(m^2)$ &$\mathcal{O}(m^2)$\\
    \vspace{0.06cm}
    star & $1-\frac{1}{m^2}$ &$\mathcal{O}(m^2)$ &$\mathcal{O}(m^2)$\\

    disconnected & 1 & N/A&N/A \\
    \bottomrule 
    \end{tabular}}
    \label{tab:lambda}
\end{wraptable}

Next, we will first derive the optimization error and then provide the population risk by decomposition $\small\Delta^s(\x,\y)=\epsilon^s_{gen}(\x,\y)+\Delta^s_{\S}(\x,\y)$. Population risk is an important evaluation for the performance of a stochastic learning algorithm, which will evaluate how our model obtained by training dataset behave over the whole distribution.  
Notice that we will use the average output instead of the last iterate in analyzing the optimization errors. We denote that:
\vspace{-0.1cm}
\begin{equation}\label{def:ave}
    \x_{ave}^T\triangleq\frac{\sum_{t=0}^{T-1}\eta_{\x,t}\x^t}{\sum_{t=0}^{T-1}\eta_{\x,t}}, \quad\y_{ave}^T\triangleq\frac{\sum_{t=0}^{T-1}\eta_{\y,t}\y^t}{\sum_{t=0}^{T-1}\eta_{\y,t}}.
\end{equation}

\begin{thm}[\textbf{Strong primal-dual population risk}]\label{thm:strongPDpopulationrisk}
Under Assumption~\ref{assp:continuous},\ref{assp:smooth},\ref{assp:mixing}, when each $f_i$ is $\mu_{\x}$SC-$\mu_{\y}$SC, we have the strong primal-dual population risk as follows, where $\eta^{max}_t\triangleq\max\{\eta_{\x,t},\eta_{\y,t}\}$, $\eta^{min}_t\triangleq\min\{\eta_{\x,t},\eta_{\y,t}\}$, $\mu=\min\{\mu_{\x},\mu_{\y}\}$, and $(\x_{ave}^T,\y_{ave}^T)$ is defined in Eq.~\eqref{def:ave}:
\begin{enumerate}[leftmargin=*,label={\alph*.}]
    \item\label{thm:ccpopulationfixed} for fixed learning rates,
    \vspace{-0.3cm}
    \begin{align*}
    \footnotesize
    \Delta^s(\x_{ave}^T,\y_{ave}^T)
            &\leq G\sqrt{2+\frac{2L^2}{\mu^2}}(2G\frac{L+\mu}{\eta^{min}L\mu}(\frac{2(\eta^{max})^2L}{1-\lambda}+\frac{\eta^{max}}{n}))+\frac{C_{\x}^2+C_{\y}^2}{2\eta^{min}T}\\
    &\qquad +\eta^{max}G^2+\frac{4(C_{\x}+C_{\y})GL\eta^{\max}}{1-\lambda}+\frac{2(C_{\x}+C_{\y})G}{\sqrt{T}}.
    \end{align*}
    \item\label{thm:ccpopulationvarying} for decaying learning rates that $\eta_t^{min}\!=\!\frac{1}{\mu(t+1)}$ and $\eta^{max}_t\!=\!\frac{1}{\mu(t+1)^c}$ with $c\!\leq\!1$ and $2c\!\geq\!\frac{L}{L+\mu}\!+\!1$, 
    {\small\begin{align*}
            &\Delta^s(\x_{ave}^T,\y_{ave}^T)\\
            &\leq G\sqrt{2+\frac{2L^2}{\mu^2}}\left(\frac{2G}{\mu(1-c+\frac{L}{L+\mu})}\frac{T^{1-c}}{n}+\frac{4GLC_{\lambda}}{\mu^2T^{\frac{L}{L+\mu}}}(\frac{\mathbf{1}_{2c\neq L/(L+\mu)+1}}{2c-\frac{L}{L+\mu}-1}+\ln{T}\!\cdot\!\mathbf{1}_{2c=L/(L+\mu)+1})\right)\\
            &+\!\frac{2G(C_{\x}\!\!+\!C_{\y})}{\sqrt{T}}\!\!+\!\!\frac{G^2}{2\mu}\!(\!\frac{1\!+\!\ln{T}}{T}\!+\!\frac{\mathbf{1}_{c\neq1}}{(1\!-\!c)T^c}\!+\!\frac{(1\!+\!\ln{T})\mathbf{1}_{c=1}}{T}\!)\!+\!\frac{4GLC_{\lambda}(C_{\x}\!\!+\!C_{\y})}{\mu T^c}(\frac{\mathbf{1}_{c\neq1}}{1-c}\!+\!\ln{T}\!\cdot\!\mathbf{1}_{c=1}).
        \end{align*}}
\end{enumerate}
\end{thm}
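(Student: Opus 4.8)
The plan is to exploit the decomposition highlighted in Remark~\ref{remark:populationrisk}, namely $\Delta^s(\x_{ave}^T,\y_{ave}^T)=\epsilon^s_{gen}(\x_{ave}^T,\y_{ave}^T)+\Delta^s_{\S}(\x_{ave}^T,\y_{ave}^T)$, and to bound the two pieces separately. The generalization term is essentially free: since D-SGDA is $\epsilon^{arg}_{sta}$-argument stable with the bound already established in Thm~\ref{thm:cc:stability}, part~\ref{thm:connectionstrong} of Thm~\ref{thm:connection} gives $\epsilon^s_{gen}\leq G\sqrt{2+\frac{2L^2}{\mu^2}}\,\epsilon^{arg}_{sta}$, which is exactly the leading factor appearing in both cases of the statement. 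The whole remaining effort therefore goes into bounding the empirical (optimization) error $\Delta^s_{\S}(\x_{ave}^T,\y_{ave}^T)=\E[\sup_{\y'}\FS(\x_{ave}^T,\y')-\inf_{\x'}\FS(\x',\y_{ave}^T)]$, which accounts for precisely the terms not carrying the $\sqrt{2+2L^2/\mu^2}$ factor.

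For the optimization error I would run a primal-dual convergence analysis carried out on the \emph{averaged} iterate $(\x^t,\y^t)=\frac1m\sum_i(\x_i^t,\y_i^t)$ rather than the local ones. First I average the D-SGDA updates over the agents; by double stochasticity of $\W$ (Assumption~\ref{assp:mixing}) the mixing cancels on the mean, leaving a projected SGDA step driven by $g_{\x}^t=\frac1m\sum_i\dx f_i(\x_i^t,\y_i^t;\xi_{i,j_t(i)})$ and its $\y$-analogue. Using non-expansiveness of $P_{\X},P_{\Y}$ I expand $\|\x^{t+1}-\x'\|^2$ and $\|\y^{t+1}-\y'\|^2$ for arbitrary comparison points $\x'\in\X,\y'\in\Y$, yielding a one-step inequality with an inner-product term, a squared-gradient term bounded by $G^2$ (Assumption~\ref{assp:continuous}), and a distance term. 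I then convert the inner products into function-value gaps via strong convexity in $\x$ and strong concavity in $\y$: the two values $\FS(\x^t,\y^t)$ cancel, leaving $\FS(\x^t,\y')-\FS(\x',\y^t)$ plus the favorable terms $\frac{\mu_{\x}}2\|\x^t-\x'\|^2,\frac{\mu_{\y}}2\|\y^t-\y'\|^2$. Summing with weights $\eta_{\x,t},\eta_{\y,t}$ telescopes the distance terms (leaving the initial gap $\le C_{\x}^2+C_{\y}^2$, the source of $\frac{C_{\x}^2+C_{\y}^2}{2\eta^{min}T}$), and after dividing by $\sum_t\eta_t$ and applying Jensen's inequality (convexity in $\x$, concavity in $\y$) the weighted average moves inside $\FS$, producing the primal-dual gap $\FS(\x_{ave}^T,\y')-\FS(\x',\y_{ave}^T)$, which matches Def.~\ref{def:populationrisk} after taking $\sup_{\y'}$ and $\inf_{\x'}$.

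Two error sources remain, and these are the crux. The first is the \emph{consensus drift}: because each gradient is evaluated at the local $(\x_i^t,\y_i^t)$ rather than the mean, $L$-smoothness (Assumption~\ref{assp:smooth}) produces a residual controlled by $\frac1m\sum_i(\|\x_i^t-\x^t\|+\|\y_i^t-\y^t\|)$. Unrolling the mixing recursion and using the spectral gap $\|\W-\frac1m\mathds{1}_m\mathds{1}_m^T\|=\lambda$ bounds this drift by a geometric sum $\sum_s\eta_s G\lambda^{t-1-s}\lesssim \eta^{max}G/(1-\lambda)$; combined with the bounded displacement $\|\x^t-\x'\|\le C_{\x}$ this is the origin of the $\frac{4(C_{\x}+C_{\y})GL\eta^{max}}{1-\lambda}$ term. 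The second, and the genuinely delicate point for the \emph{strong} gap, is the stochastic noise $g^t-\E_t[g^t]$: since the supremizing $\y'$ and infimizing $\x'$ are adversarial (the expectation sits outside the $\sup/\inf$), the cross terms cannot be annihilated by conditional unbiasedness. Instead I would bound $\sup_{\y'}\langle\sum_t\eta_{\y,t}(g^t-\E_t[g^t]),\y'-\y^0\rangle\le C_{\y}\|\sum_t\eta_{\y,t}(g^t-\E_t[g^t])\|$ and take the expectation of the norm of a martingale sum, giving $\E\|\cdot\|\le(\sum_t\eta_t^2G^2)^{1/2}$; dividing by $\sum_t\eta_t$ yields the $\frac{2(C_{\x}+C_{\y})G}{\sqrt T}$ term. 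This martingale-in-norm estimate is the main obstacle, as it is exactly what separates the strong primal-dual risk from the easier weak one where a fixed comparison point lets the noise vanish in expectation.

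Assembling these pieces gives the fixed-rate bound of part~\ref{thm:ccpopulationfixed} directly. For the decaying schedule of part~\ref{thm:ccpopulationvarying} the same termwise inequality holds; the only additional work is evaluating the weighted sums $\sum_t\eta_t^2$, the consensus sums $\sum_t\eta_t\eta^{max}_t/(1-\lambda)$, and $(\sum_t\eta_t^2)^{1/2}/\sum_t\eta_t$ under $\eta^{min}_t=\tfrac1{\mu(t+1)}$ and $\eta^{max}_t=\tfrac1{\mu(t+1)^c}$, which produces the $\ln T$ factors and the $\mathbf{1}_{c\neq1}/\mathbf{1}_{c=1}$ case split, together with reusing the decaying-rate stability estimate of Thm~\ref{thm:cc:stability}\ref{thm:stability:varying} for the generalization summand.
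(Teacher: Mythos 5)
Your proposal follows essentially the same route as the paper's own proof: the identical decomposition $\Delta^s=\epsilon^s_{gen}+\Delta^s_{\S}$, with the generalization gap bounded by the stability estimate of Thm.~\ref{thm:cc:stability} combined with part~\ref{thm:connectionstrong} of Thm.~\ref{thm:connection}, and the empirical risk bounded by a primal-dual analysis of the averaged iterate (double stochasticity cancelling the mixing, telescoping of distances, consensus drift via the spectral gap, and a norm-of-martingale estimate), which is exactly the paper's Thm.~\ref{thm:optimizationerrircc} and its assembly in the appendix. The only cosmetic differences are that you center the gradient noise at its conditional expectation rather than expanding cross terms $t\neq t'$ as the paper does (both yield the same $G^2T$ diagonal and consensus-product terms), and you should state explicitly, as the paper does, that the last-iterate argument stability transfers to $(\x_{ave}^T,\y_{ave}^T)$ by convexity of the norm.
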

\vspace{0.1cm}
\begin{remark}\label{remark:strongpdrisk}
By Jensen's inequality (see Remark~\ref{remark:populationrisk}), we can conclude that weak primal-dual population risk also satisfies the conclusions above. 
\textbf{(i) Bound analysis and comparison.} For fixed learning rates, it is interesting to see that we should choose $\eta\sim 1/\sqrt{T}$ to get the optimal population risk of $\small\mathcal{O}(\frac{1}{n}+\frac{1}{(1-\lambda)\sqrt{T}})$, although when $\eta\sim1/T$ we can get optimal generalization performance but the optimization error will not converge. While for the decaying learning rates, the population risk bound is $\small \widetilde{\mathcal{O}}(T^{1-c}/n+C_{\lambda}/T^{\min\{\frac{1}{2},\frac{L}{L+\mu}\}})$, which matches the corresponding results for SGDA (see Thm.3.(c) in \cite{lei2021stability}) of $\small\mathcal{O}(\ln{N}/N\mu)$ when $n\sim T^{\min\{\frac{1}{2},\frac{L}{L+\mu}\}}$. \textbf{(ii) Topology influence.} We omit the trivial factor influence analysis here (or see Remark \ref{remark:scscstable}). And the effect of topology is captured quantitatively by $C_{\lambda}$ and $\frac{1}{1-\lambda}$ which have been discussed in Remark~\ref{remark:scscstable} and Table~\ref{tab:lambda}. 

\end{remark}

    


\subsection{Results on Convex-Concave Case}\label{subsection:C-C} 

In this section, we provide the argument stability and weak primal-dual population risk of D-SGDA algorithm for the NC-NC condition in the following theorems with proof in Appendix \ref{appsec:cc}.

\begin{thm}[\textbf{Argument Stability}]\label{coro:stabilitycc}
Under Assumption \ref{assp:continuous},\ref{assp:smooth},\ref{assp:mixing},
when each $f_i$ is convex-concave, we have the argument stability bound for D-SGDA (denoted as $\A$): 
\[
\small
\epsilon_{sta}^{arg}(\A)\leq\frac{2G}{n}\sum_{k=0}^{T-1}\eta_k^{max}+4GL\sum_{k=1}^{T-1}\left(\eta^{max}_k\sum_{s=0}^{k-1}\eta^{max}_s\lambda^{k-1-s}\right).
\]
\end{thm}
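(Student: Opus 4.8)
The plan is to bound the argument stability by tracking, in expectation, the evolution of the distance between the trajectories produced by running D-SGDA on two decentralized neighboring datasets $\S$ and $\S'$. I would stack the local iterates into matrices, writing $\mathbf{X}^t = (\x_1^t,\dots,\x_m^t)^\top$ and similarly $\mathbf{Y}^t$, so that the communication step becomes left multiplication by the mixing matrix $\W$ and the gradient step is a perturbation. Define the per-iterate deviation $\delta_t \triangleq \E_{\A}\big\|(\mathbf{X}^t-\mathbf{X}'^t, \mathbf{Y}^t-\mathbf{Y}'^t)\big\|$ (a suitable stacked Frobenius-type norm), and the key is to derive a recursion of the form $\delta_{t+1} \le (\text{contraction/expansion factor})\,\delta_t + (\text{injected perturbation})$. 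Because we are in the convex-concave (not strongly convex-strongly concave) regime, the gradient-update map is nonexpansive rather than strictly contractive: under Assumptions~\ref{assp:continuous} and~\ref{assp:smooth} the SGDA operator on a convex-concave objective is $1$-Lipschitz (this is why the $\prod(1-\eta^{min}\tfrac{L\mu}{L+\mu})$ contraction factors present in Thm.~\ref{thm:cc:stability} collapse to $1$ here). The projection $P_{\X},P_{\Y}$ is nonexpansive and drops out, and $\W$ is nonexpansive in the relevant norm since it is doubly stochastic with $\lambda_1=1$.

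Next I would split the perturbation at each step into two sources. The first is the \textbf{sampling difference}: at each iteration the index $j_t(i)$ hits a "different" sample between $\S_i$ and $\S_i'$ with probability $1/n$ on each agent, and on that event the two gradients differ by at most $2G$ in norm by Lipschitz continuity (Assumption~\ref{assp:continuous}). Averaged over the random index choice this contributes a term of order $\tfrac{2G}{n}\eta_k^{max}$ per step, which after summing over $k$ yields the first sum $\tfrac{2G}{n}\sum_{k=0}^{T-1}\eta_k^{max}$. The second source is the \textbf{consensus error} propagated through communication: the discrepancy injected at step $s$ is mixed by $\W$ at each subsequent step, and since the deviation lives (after the averaging that defines $\x^t,\y^t$) in the space orthogonal to the consensus direction $\mathds{1}_m$, repeated multiplication by $\W$ contracts it by a factor $\lambda$ per round (using $\lambda=\max\{|\lambda_2|,|\lambda_m(\W)|\}$ from Assumption~\ref{assp:mixing}). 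This is precisely what produces the geometric weights $\lambda^{k-1-s}$ in the double sum, with the extra $L$ and the product of two learning-rate factors $\eta_k^{max}\eta_s^{max}$ arising from the smoothness-based coupling between the consensus deviation and the gradient perturbation it feeds back into the update.

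Concretely, I would unroll the nonexpansive recursion to write $\delta_T$ as a sum over injection times $k$ of (perturbation injected at $k$) times (product of unit Lipschitz factors $=1$), then separately estimate the consensus contribution by a Gronwall-type bookkeeping that tracks how the network inhomogeneity at each step $s\le k-1$ decays as $\lambda^{k-1-s}$ before coupling into step $k$ via the $L$-smoothness bound. Summing the sampling term and the consensus term gives exactly the stated bound. The main obstacle I anticipate is the careful decentralized accounting of the "different" samples: because each of the $m$ local datasets may differ in one sample, I must handle the joint distribution of which agents draw their differing sample at which iteration, and argue (via the permutation/combination argument the introduction advertises) that the contributions add up correctly without double-counting across agents and across time. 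A secondary technical point is justifying that the consensus deviation indeed lies in the $\mathds{1}_m^\perp$ subspace so that the $\lambda$-contraction is legitimate at every step rather than just asymptotically — this requires tracking the mean iterate $\x^t,\y^t$ separately and bounding $\|\x_i^t-\x^t\|$ by the accumulated, $\lambda$-discounted past perturbations.
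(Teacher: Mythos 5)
Your proposal follows the same overall strategy as the paper's proof: couple the two D-SGDA runs on decentralized neighboring datasets, derive a per-iteration recursion whose contraction factor equals $1$ in the convex-concave case, inject two perturbations per step --- a sampling term of order $2G\eta_k^{max}/n$ and a consensus term $4GL\eta_k^{max}\sum_{s<k}\eta_s^{max}\lambda^{k-1-s}$ --- and unroll. (Indeed the paper's proof of Theorem~\ref{coro:stabilitycc} is a one-liner: set $\mu_{\x}=\mu_{\y}=0$ in the SC-SC recursion~\eqref{eq:recursion} behind Theorem~\ref{thm:cc:stability}.) However, the mechanism you give for the factors $\lambda^{k-1-s}$ is incorrect as stated, and a proof built on it would fail. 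The inter-trajectory deviation $[\bm{X},\bm{Y}]^t-[\bm{X}',\bm{Y}']^t$ is \emph{not} orthogonal to the consensus direction $\mathds{1}_m$: its component along $\mathds{1}_m$ is exactly the difference of the averaged iterates, which is the very quantity to be bounded and is nonzero once a perturbed sample has been drawn. Hence ``repeated multiplication by $\W$ contracts it by a factor $\lambda$'' is false for that deviation. What the paper does instead is track the averaged iterates $\x^t=\frac{1}{m}\sum_i\x_i^t$, $\y^t=\frac{1}{m}\sum_i\y_i^t$; by double stochasticity ($\W^T\mathds{1}_m=\mathds{1}_m$) the mixing matrix drops out of the recursion for $(\x^t-\xdot^t,\y^t-\ydot^t)$ altogether, and the topology enters only through the \emph{intra}-trajectory consensus error of Lemma~\ref{le:decentral}, $\|(\mathds{I}-\mathbf{P})[\bm{X},\bm{Y}]^t\|\leq 2\sqrt{m}G\sum_{s=0}^{t-1}\eta_s^{max}\lambda^{t-1-s}$, which feeds into the recursion via the $L$-smoothness bound on the gap between gradients evaluated at local iterates $(\x_i^t,\y_i^t)$ and at the average $(\x^t,\y^t)$ (the terms $\Rtwo_1,\Rtwo_2,\Rtwo'_1,\Rtwo'_2$ in Eq.~\eqref{eq:decomposition}). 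Your closing sentence does state this correct mechanism, but it contradicts the main body of your derivation; note also that in your stacked formulation the mixing step and the per-agent gradient step do not compose into one nonexpansive map, because Algorithm~\ref{alg:dsgda} evaluates each gradient at the pre-mixing iterate.

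The second gap is your claim that, under Assumptions~\ref{assp:continuous} and~\ref{assp:smooth}, the SGDA map of a convex-concave objective ``is $1$-Lipschitz.'' This is not a free fact --- simultaneous gradient descent ascent on a smooth convex-concave function can be expansive (for $f(x,y)=xy$ the map $(x,y)\mapsto(x-\eta y,\,y+\eta x)$ expands every vector by $\sqrt{1+\eta^2}$) --- and it is precisely the content of the paper's Lemma~\ref{le:expansive}, case~\ref{le:expansive:strongly}, evaluated at $\mu=0$, which carries the step-size coupling requirement $\frac{L+\mu}{2}(\eta^{max})^2\leq\eta^{min}$. If you do not invoke that lemma with its condition (or prove nonexpansiveness by some other argument), the only per-step factor available is the $(1+\eta_t^{max}L)$ of case~\ref{le:expansive:smooth}, and unrolling then produces the exponential-type products of the nonconvex-nonconcave analysis (Theorem~\ref{thm:ncnc}) rather than the claimed C-C bound. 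Your accounting of the sampling term, by contrast, is sound: the per-agent probability $1/n$ of hitting the perturbed sample, the $2G$ gradient gap, and averaging over agents reproduce the $\frac{2\eta_t^{max}G}{n}$ term that the paper obtains through its binomial computation in Eq.~\eqref{eq:recursioncc}.
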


\begin{remark}\label{remark:ccstability}
\textbf{(i) Bound analysis and comparison.} When there is no strong convexity or strong concavity, we can no longer choose the decaying learning rates, otherwise the argument stability error may not converge. For fixed learning rates, $\epsilon_{sta}^{arg}$ is upper bounded by $\mathcal{O}(\frac{\eta T}{n}+\frac{\eta^2T}{1-\lambda})$, which is slightly looser than SC-SC condition. While we can still choose $\eta\sim1/T$ to obtain optimal result $\mathcal{O}(\frac{1}{n}+\frac{1}{(1-\lambda)T})$. Besides, compared with the corresponding result for SGDA (see Thm.2.(b) in \cite{lei2021stability}) of $\mathcal{O}(\frac{\sqrt{T}}{N}+\frac{1}{\sqrt{N}})$, we can approach it when $\eta\sim1/T^{3/4}$ and $n\sim T^{3/4}$. \textbf{(ii) Topology influence.} In C-C condition, the effect of topology on the stability is quantified by $\frac{1}{1-\lambda}$ which has been discussed in Table~\ref{tab:lambda}. And we can conclude that denser topology is more stable and fewer nodes will increase stability under the same topology.
\end{remark}

\vspace{-0.1cm}
\textbf{Generalization gap.} Thm.~\ref{thm:connection} implies $\small \epsilon^w_{gen}(\A_{\x}(\S),\A_{\y}(\S))\leq\sqrt{2}G\epsilon_{sta}^{arg}(\A)$. So the generalization gap holds with the same quantitative analysis as stability above. Analogously we can present the weak primal-dual population risk in the following theorem.

\begin{thm}[\textbf{Weak primal-dual population risk}]\label{corollary:weakpdpopulation}
Under Assumption~\ref{assp:continuous},\ref{assp:smooth},\ref{assp:mixing}, when each $f_i$ C-C, we have the weak primal-dual population risk as follows, where $\eta^{max}_t\triangleq\max\{\eta_{\x,t},\eta_{\y,t}\}$, $\eta^{min}_t\triangleq\min\{\eta_{\x,t},\eta_{\y,t}\}$, $\mu=\min\{\mu_{\x},\mu_{\y}\}$, and $(\x_{ave}^T,\y_{ave}^T)$ is defined in Eq.~\eqref{def:ave}:
     \begin{align*}
\small
            \Delta^w(\x_{ave}^T,\y_{ave}^T)&\leq \sqrt{2}G(\frac{2G\eta^{max}T}{n}+\frac{4GL(\eta^{max})^2T}{1-\lambda})+\frac{C_{\x}^2+C_{\y}^2}{2\eta^{min}T}\\          
            &\qquad +\eta^{max}G^2+\frac{4(C_{\x}+C_{\y})GL\eta^{\max}}{1-\lambda}+\frac{2(C_{\x}+C_{\y})G}{\sqrt{T}}.
        \end{align*}
\end{thm}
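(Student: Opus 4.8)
The plan is to use the risk decomposition recorded in Remark~\ref{remark:populationrisk}, namely
$$\Delta^w(\x_{ave}^T,\y_{ave}^T) = \epsilon^w_{gen}(\x_{ave}^T,\y_{ave}^T) + \Delta^w_{\S}(\x_{ave}^T,\y_{ave}^T),$$
and to bound the two pieces separately. The generalization term is immediate: Theorem~\ref{thm:connection}\ref{thm:connectionweak} gives $\epsilon^w_{gen}\le\sqrt{2}G\,\epsilon_{sta}^{arg}(\A)$, and specializing the C-C argument-stability bound of Theorem~\ref{coro:stabilitycc} to fixed learning rates, where $\sum_{k=0}^{T-1}\eta^{max}=\eta^{max}T$ and the inner geometric sum obeys $\sum_{s=0}^{k-1}\lambda^{k-1-s}\le 1/(1-\lambda)$, yields $\epsilon_{sta}^{arg}(\A)\le \frac{2G\eta^{max}T}{n}+\frac{4GL(\eta^{max})^2T}{1-\lambda}$. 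Multiplying by $\sqrt{2}G$ reproduces exactly the first bracket of the stated bound, so all the remaining terms must come from the empirical (optimization) risk $\Delta^w_{\S}$.

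The second piece is the substantive part: a convex-concave duality-gap analysis of $\FS$, run on the averaged iterates $\x^t=\frac1m\sum_i\x_i^t$, $\y^t=\frac1m\sum_i\y_i^t$ and then pushed through the averaging in Eq.~\eqref{def:ave}. For an arbitrary reference pair $(\x',\y')$, convexity in $\x$ together with concavity in $\y$ gives the linearization
$$\FS(\x^t,\y')-\FS(\x',\y^t)\le \langle \dx\FS(\x^t,\y^t),\x^t-\x'\rangle + \langle \dy\FS(\x^t,\y^t),\y'-\y^t\rangle.$$
I would then write the averaged projected update as an approximate gradient step and use nonexpansiveness of $P_{\X},P_{\Y}$ to get the one-step inequality
$$\|\x^{t+1}-\x'\|^2\le\|\x^t-\x'\|^2-2\eta_{\x,t}\Big\langle \tfrac1m\textstyle\sum_i\dx f_i(\x_i^t,\y_i^t;\xi_{i,j_t(i)}),\,\x^t-\x'\Big\rangle+\eta_{\x,t}^2 G^2,$$
and its symmetric counterpart for $\y$. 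Summing over $t$, telescoping the squared-distance terms, and dividing by $\eta^{min}T$ produces the $\frac{C_{\x}^2+C_{\y}^2}{2\eta^{min}T}$ and $\eta^{max}G^2$ contributions, after which Jensen's inequality transfers the per-iterate gap to $(\x_{ave}^T,\y_{ave}^T)$ and taking the supremum over $\y'$ and infimum over $\x'$ delivers $\Delta^w_{\S}$.

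The main obstacle is controlling the two discrepancies the decentralized structure injects into the inner products above. First, the node iterates $\x_i^t,\y_i^t$ feeding the gradients differ from the consensus points $\x^t,\y^t$; replacing $\dx f_i(\x_i^t,\y_i^t)$ by $\dx f_i(\x^t,\y^t)$ costs an $L$-Lipschitz error proportional to the consensus deviation $\frac1m\sum_i(\|\x_i^t-\x^t\|+\|\y_i^t-\y^t\|)$. I would bound this deviation by unrolling the mixing recursion, $\x_i^t-\x^t=\sum_{s=0}^{t-1}\eta_{\x,s}\big[(\W^{t-1-s}-\tfrac1m\mathds{1}_m\mathds{1}_m^T)\,\mathbf{g}_s\big]_i$ with $\mathbf{g}_s$ the stacked local gradients, and invoking Assumption~\ref{assp:mixing} to obtain geometric decay $\|\W^{r}-\tfrac1m\mathds{1}_m\mathds{1}_m^T\|\le\lambda^{r}$, so that $\sum_s\lambda^{t-1-s}\le 1/(1-\lambda)$ and the accumulated smoothness error contributes the $\frac{4(C_{\x}+C_{\y})GL\eta^{\max}}{1-\lambda}$ term. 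Second, $\frac1m\sum_i\dx f_i(\cdot;\xi_{i,j_t(i)})$ is only an unbiased estimate of the full empirical gradient; after taking expectations, the residual noise inner product is handled by Cauchy--Schwarz against the bounded domain diameters, giving the $\frac{2(C_{\x}+C_{\y})G}{\sqrt{T}}$ term. Assembling the generalization and optimization estimates yields the claimed bound, and I expect the bookkeeping of the consensus term and its coupling with the duality gap to be the most delicate step.
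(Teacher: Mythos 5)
Your proposal is correct and follows essentially the same route as the paper: the same decomposition into generalization gap (via Theorem~\ref{thm:connection} plus the fixed-step specialization of Theorem~\ref{coro:stabilitycc}) and empirical duality gap, with the latter bounded by the same one-step projected-update inequality, telescoping, consensus-error control via geometric mixing (Lemma~\ref{le:decentral}), and the mean-zero noise argument yielding the $\mathcal{O}(1/\sqrt{T})$ term. The only cosmetic difference is that the paper obtains the empirical-risk bound by citing its SC-SC optimization theorem (Theorem~\ref{thm:optimizationerrircc}) specialized to $\mu_{\x}=\mu_{\y}=0$, whereas you rederive the identical argument directly in the C-C case.
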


\begin{remark}
    The weak primal-dual population risk attains optimal of $\tiny\mathcal{O}(\frac{T^{1/3}}{n}+\frac{1}{(1-\lambda)T^{1/3}})$ when we choose $\eta^{max}=\eta^{\min}\sim1/T^{\frac{2}{3}}$. Note that we select $\eta\sim1/T$ to obtain optimal generalization performance (see Remark~\ref{remark:ccstability}), but the optimization error will diverge in that case. Compared with the result of SGDA: $\mathcal{O}(N^{-1/2})$ (see Thm.3.(b) in \cite{lei2021stability}), our result can approach it by $n^{1/2}\sim T^{1/3}$. Then the effect of topology and number of nodes on the population risk is reflected by $\frac{1}{1-\lambda}$, which has been discussed in Remark~\ref{remark:scscstable} and Table~\ref{tab:lambda}.  
\end{remark}

\subsection{Results on Nonconvex-Nonconcave Case}\label{subsection:NC-NC}
In this section, we present the weak stability and weak primal-dual generalization gap of D-SGDA algorithm for the NC-NC problem in the following theorem with proof in Appendix \ref{appsec:ncnc}.

\begin{thm}[\textbf{Weak stability}]\label{thm:ncnc}
Under Assumption \ref{assp:continuous},\ref{assp:smooth},\ref{assp:mixing}, denoting D-SGDA algorithm as $\A$, we have the following weak stability bound when $\eta^{max}_t\triangleq\max\{\eta_{\x,t},\eta_{\y,t}\}$, and $\eta^{min}_t\triangleq\min\{\eta_{\x,t},\eta_{\y,t}\}$:
\begin{enumerate}[leftmargin=*,label={\alph*.}]
    \item\label{thm:weakstability:fixed} for fixed learning rates that $\eta^{max}_t=\eta^{max}$, and $\eta^{min}_t=\eta^{min}$,
    \vspace{-0.2cm}
    \begin{equation*}
        \epsilon_{sta}^w(\A)\leq2\sqrt{2}G^2(\frac{{\eta^{max}} T}{n}+\frac{2L{\eta^{max}}^2T}{1-\lambda}).
    \end{equation*}
    \item\label{thm:weakstability:decaying} for decaying learning rates that $\eta_t^{min}=\frac{1}{t+1}$, and $\eta_t^{max}=\frac{1}{(t+1)^c}, c\leq1$,
    \begin{equation*}
    \small
    \epsilon_{sta}^w(\A)\leq(c+L)(c+L-1)^{\frac{1}{c+L}}(\frac{2\sqrt{2}G^2T^L}{(c+L-1)n}+\frac{4\sqrt{2}G^2LC_{\lambda}T^L}{2c+L-1})^{\frac{1}{c+L}}(\frac{Bm}{n})^{1-\frac{1}{c+L}}.
    \end{equation*}
\end{enumerate}    

\end{thm}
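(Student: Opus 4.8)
The plan is to first reduce weak stability to an averaged iterate distance. Since $\f(\x,\y;\bm{\xi})=\frac1m\sum_i f_i$ inherits $G$-Lipschitz continuity from Assumption~\ref{assp:continuous}, for every $\bm{\xi}$ and every $\y'$ we have $\f(\A_{\x}(\S),\y';\bm{\xi})-\f(\A_{\x}(\S'),\y';\bm{\xi})\le G\|\A_{\x}(\S)-\A_{\x}(\S')\|$, and symmetrically in $\y$. Hence it suffices to control $\E_{\A}\|\A_{\x}(\S)-\A_{\x}(\S')\|+\E_{\A}\|\A_{\y}(\S)-\A_{\y}(\S')\|$, which by the output rule $\x^T=\frac1m\sum_i\x_i^T$ is at most $\frac1m\sum_i\E_{\A}\|(\x_i^T-\x_i^{\prime T},\,\y_i^T-\y_i^{\prime T})\|$. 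This converts the problem into tracking the coupled per-node distance $\delta_i^t\triangleq\|(\x_i^t-\x_i^{\prime t},\,\y_i^t-\y_i^{\prime t})\|$ along the two D-SGDA trajectories run on the decentralized neighboring datasets $\S,\S'$, where the extra factor $G$ already accounts for the $G^2$ (rather than $G$) appearing in the final bounds.

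Next I would write the one-step recursion for $\delta_i^t$. Using non-expansiveness of the projections $P_{\X},P_{\Y}$ and expanding the mixing step, $\delta_i^{t+1}$ is bounded by $\|\sum_k\omega_{ik}(\cdot)\|$ plus the gradient-difference term $\eta^{max}_t\|\nabla f_i(\x_i^t,\y_i^t;\xi_{i,j_t(i)})-\nabla f_i(\x_i^{\prime t},\y_i^{\prime t};\xi'_{i,j_t(i)})\|$. I then case-split on the random index $j_t(i)$: with probability $1-1/n$ it misses the single differing sample of node $i$, and Assumption~\ref{assp:smooth} bounds the gradient difference by $L\delta_i^t$ (an $L$-expansion, since in NC-NC there is no monotonicity to exploit); with probability $1/n$ it hits the differing sample, and Assumption~\ref{assp:continuous} caps the gradient difference by $2G$. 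Stacking over nodes and using that $\W$ is doubly stochastic (Assumption~\ref{assp:mixing}), averaging annihilates the mixing since $\mathds{1}_m^T\W=\mathds{1}_m^T$, so the mean distance obeys a recursion of the form $\bar\delta^{t+1}\le(1+\eta^{max}_tL)\bar\delta^t+\tfrac{2G\eta^{max}_t}{n}+(\text{consensus coupling})$, while the deviation-from-mean contracts at rate $\lambda$ and is driven by the bounded gradients; after a geometric summation $\sum_s\eta_s^{max}\lambda^{k-1-s}$ this produces the $\tfrac1{1-\lambda}$ factor for fixed rates and the constant $C_{\lambda}$ of Thm.~\ref{thm:cc:stability} for the decaying schedule $\eta^{max}_t=\tfrac1{(t+1)^c}$, exactly the mechanism already used in Thm.~\ref{coro:stabilitycc}.

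The genuinely new difficulty is the absence of convexity: unrolling the $(1+\eta_t^{max}L)$ expansion naively gives a useless $\prod_t(1+\eta_t^{max}L)\approx T^{L}$-type blow-up. To tame it I would invoke an on-average (conditional) stability argument: condition on the first iteration $t_0$ at which \emph{some} node draws its differing sample. Before $t_0$ the two coupled trajectories coincide, so $\delta_i^t=0$; by a union bound over the $m$ nodes the probability of an early hit within the first $t_0$ steps is at most $\approx t_0 m/n$, and on that bad event the weak-stability integrand is controlled crudely by the uniform bound $B$ rather than by the distance. Writing the total bound as (bad-event probability)$\times B$ plus the post-$t_0$ controlled growth $C_{\lambda}T^{L}$, and then balancing the two via an AM-GM optimization over $t_0$, is what yields the exponent $1-\tfrac1{c+L}$ and the factor $(\tfrac{Bm}{n})^{1-1/(c+L)}$ together with the stated constant $(c+L)(c+L-1)^{1/(c+L)}$ in part~\ref{thm:weakstability:decaying}. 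For fixed learning rates (part~\ref{thm:weakstability:fixed}) no conditioning is required: direct accumulation of the $\tfrac{2G\eta^{max}}{n}$ injections over $T$ steps together with the $\tfrac{L(\eta^{max})^2}{1-\lambda}$ consensus drift immediately gives the linear-in-$T$ bound $2\sqrt2 G^2\big(\tfrac{\eta^{max}T}{n}+\tfrac{2L(\eta^{max})^2T}{1-\lambda}\big)$.

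The main obstacle I anticipate is the simultaneous bookkeeping of two coupled effects: distributing and propagating up to $m$ differing samples (one per node, per Def.~\ref{def:neighboring}) through the mixing matrix while cleanly separating their contribution to the mean dynamics from their contribution to the $\lambda$-controlled consensus error, and then threading this decomposition through the nonconvex conditioning argument. Making the optimization over the cutoff $t_0$ interact correctly with the decaying-rate geometric $\lambda$-sums, so that the $C_{\lambda}T^{L}$ factor and the $(Bm/n)^{1-1/(c+L)}$ factor emerge with the right constants, is the delicate and calculation-heavy endgame.
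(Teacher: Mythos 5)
Your proposal follows essentially the same route as the paper's proof in Appendix~\ref{appsec:ncnc}: the Lipschitz reduction of the weak-stability integrand to $\sqrt{2}G\,\delta_t$; the hit/miss recursion (miss with probability $1-1/n$ handled by the smoothness expansion, hit with probability $1/n$ capped by $2G$, consensus drift summed geometrically in $\lambda$ to produce $\tfrac{1}{1-\lambda}$ for fixed rates and $C_{\lambda}$ for decaying rates); the conditioning on the first-hit time $t_0$ with bad-event probability at most $mt_0/n$ and the uniform bound $B$ on the bad event, which is exactly the paper's Lemma~\ref{le:ncnc:stability}; and the final optimization over $t_0$ that yields the exponents $\tfrac{1}{c+L}$ and $1-\tfrac{1}{c+L}$. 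Your fixed-rate case (no conditioning) corresponds to the paper's choice $t_0=0$, and your per-node bookkeeping of $\delta_i^t$ versus the paper's tracking of the averaged iterate $\delta_t=\|(\x^t-\xdot^t,\y^t-\ydot^t)\|$ is only a cosmetic difference.

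The one place where your plan as written would not deliver part~\ref{thm:weakstability:decaying} is the expansion factor. You carry $(1+\eta_t^{\max}L)$ in the recursion, which is indeed what Lemma~\ref{le:expansive}, case~\ref{le:expansive:smooth} justifies, but you then assert that $\prod_t(1+\eta_t^{\max}L)$ is a ``$T^L$-type'' blow-up. With $\eta_t^{\max}=(t+1)^{-c}$ and $c<1$ this product is $\exp\bigl(\Theta(LT^{1-c})\bigr)$, not polynomial in $T$, and conditioning does not remove it: every unrolled term after $t_0$ still carries $\prod_{j\geq k+1}(1+\eta_j L)$, so no choice of $t_0$ can balance a stretched-exponential factor against $Bmt_0/n$. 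The paper gets the $T^L$ factor by instead putting the schedule $\eta_j^{\min}=1/(j+1)$ into the product, via $\prod_{j=k+1}^{T-1}(1+\eta_j^{\min}L)\leq (T/(k+1))^{L}$ (a step that is itself in tension with its own Lemma~\ref{le:expansive}, which only provides the $\eta^{\max}$ version); your derivation needs the expansion to be governed by the $1/(t+1)$ rate in the same way for the stated bound to come out. Relatedly, your part-\ref{thm:weakstability:fixed} claim that ``direct accumulation immediately gives the linear-in-$T$ bound'' glosses over the same compounding $(1+\eta L)$ factor — as does the paper, whose own derivation silently replaces $((1+\eta^{\min}L)^{T-t_0}-1)/(\eta^{\min}L)$ by $T-t_0$ — so on that point you and the paper stand or fall together.
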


\textbf{Weak primal-dual generalization gap.} According to Thm.~\ref{thm:connection}, we can derive the weak primal-dual generalization gap as $\small \epsilon^w_{gen}(\A_{\x}(\S),\A_{\y}(\S))\leq\sqrt{2}G\epsilon^w_{sta}(\A)$ for D-SGDA in NC-NC condition.


\begin{remark} 
For case \ref{thm:weakstability:fixed} with fixed learning rates, the weak stability and weak primal-dual generalization gap is bounded by $\mathcal{O}(\frac{\eta T}{n}+\frac{\eta^2T}{1-\lambda})$, which can reach $\mathcal{O}(\frac{1}{n}+\frac{1}{(1-\lambda)T})$ when $\eta\sim\frac{1}{T}$. For case \ref{thm:weakstability:decaying} with decaying learning rates, the stability and generalization gap is bounded by $\small \mathcal{O}((C_{\lambda})^{\frac{1}{c+L}}T^{\frac{L}{c+L}}(\frac{m}{n})^{1-\frac{1}{c+L}})$. It is evident to analyze the influence of factors that, larger sample size and fewer nodes will result in a smaller stability error and generalization gap, which coincides with results in (S)C-(S)C conditions (see Remark \ref{remark:factor}).
Approaching to the weak primal-dual generalization bound of $\mathcal{O}(n^{-\frac{2c\rho+1}{2c\rho+3}}T^{\frac{2c\rho}{2c\rho+3}})$ provided $\rho$-weakly-convex-weakly-concave (see Thm.5 in \cite{lei2021stability}). $C_{\lambda}$ reflects the effect of topology on the stability and generalization gap and its value has been discussed in Reamrk~\ref{remark:scscstable} and Table~\ref{tab:lambda}.
\end{remark}
\section{Experiments}
\label{sec:exp}

\begin{figure*}[t]
\vspace{-0.5em}
  \centering
  \subfloat{\includegraphics[width=0.25\textwidth]{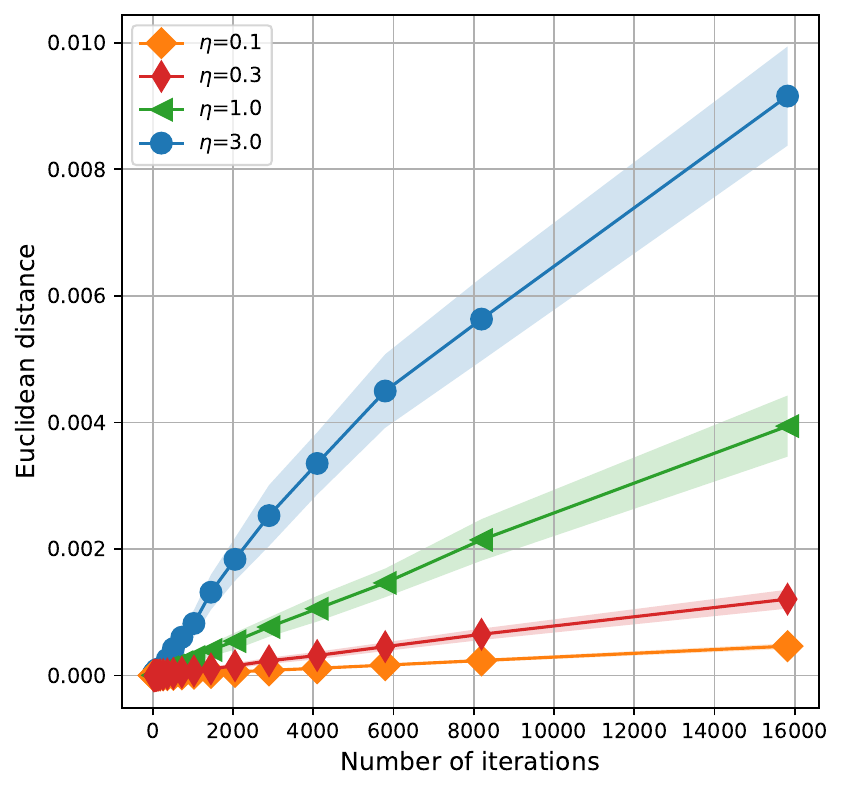}}
  \subfloat{\includegraphics[width=0.25\textwidth]{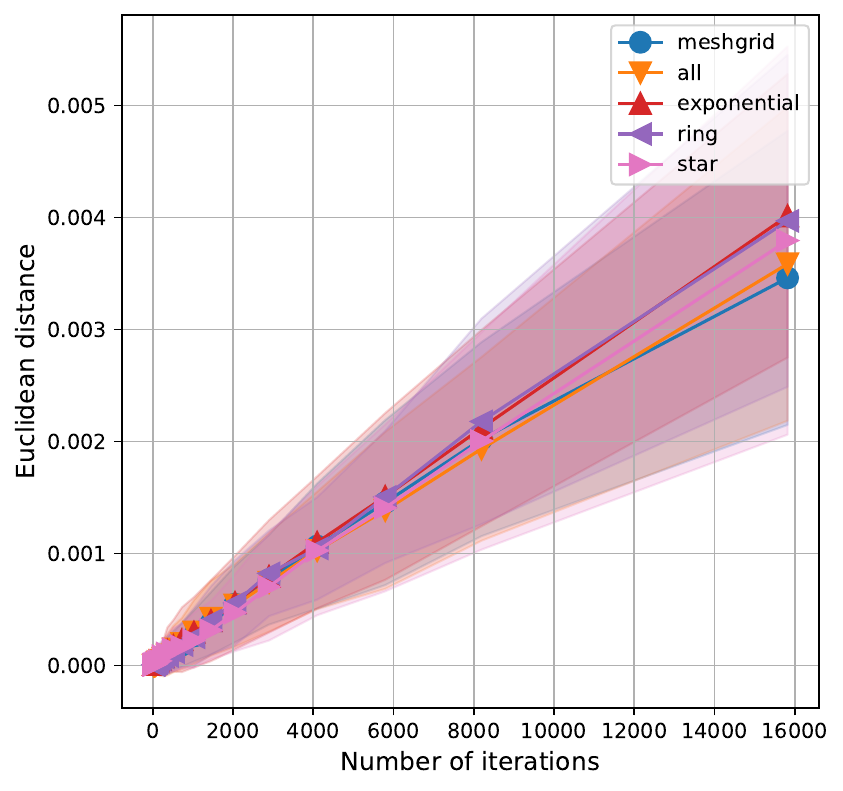}}
  \subfloat{\includegraphics[width=0.25\textwidth]{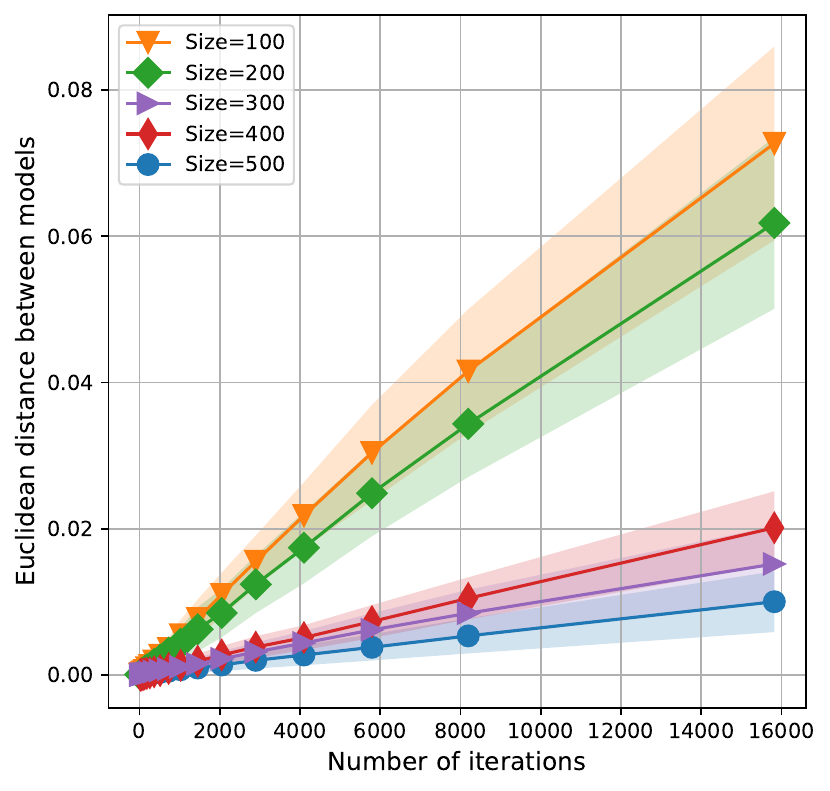}}
  \subfloat{\includegraphics[width=0.25\textwidth]{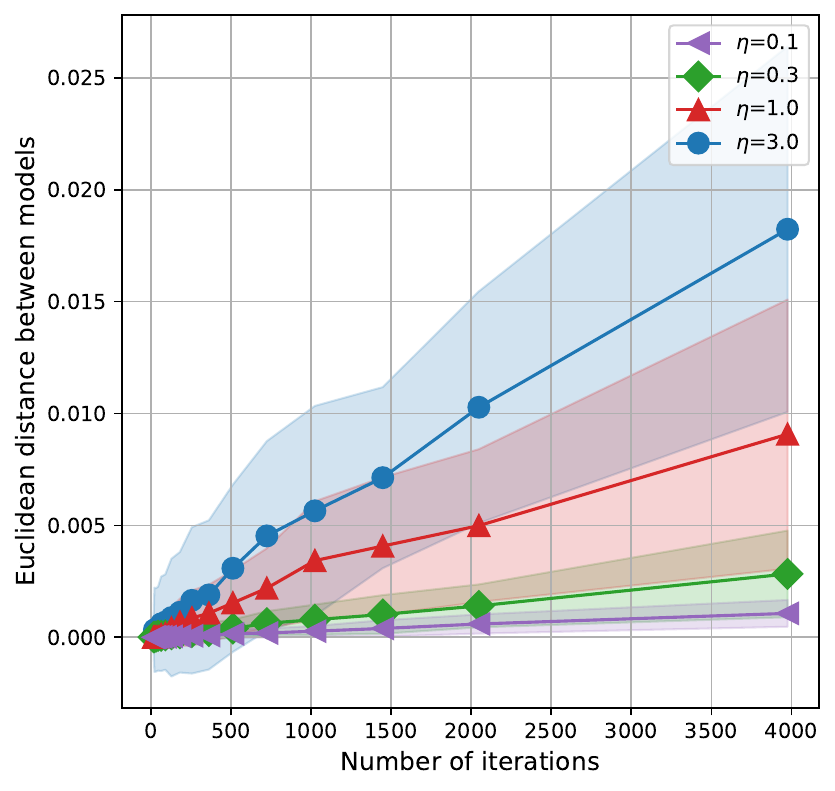}}
  \vskip -0.14in
  \subfloat{\includegraphics[width=0.25\textwidth]{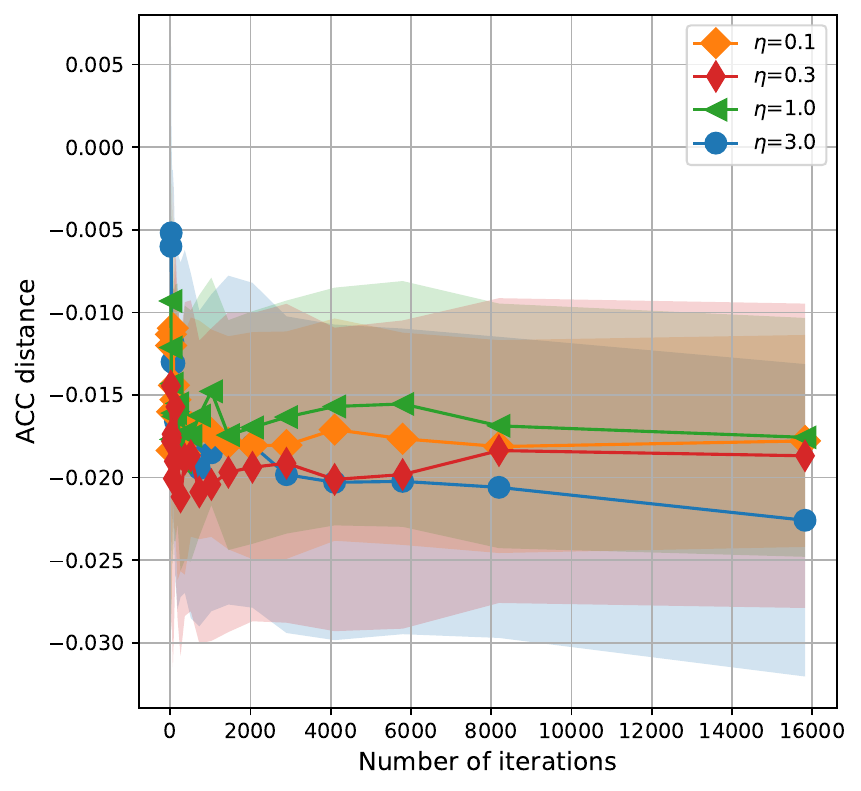}}
  \subfloat{\includegraphics[width=0.25\textwidth]{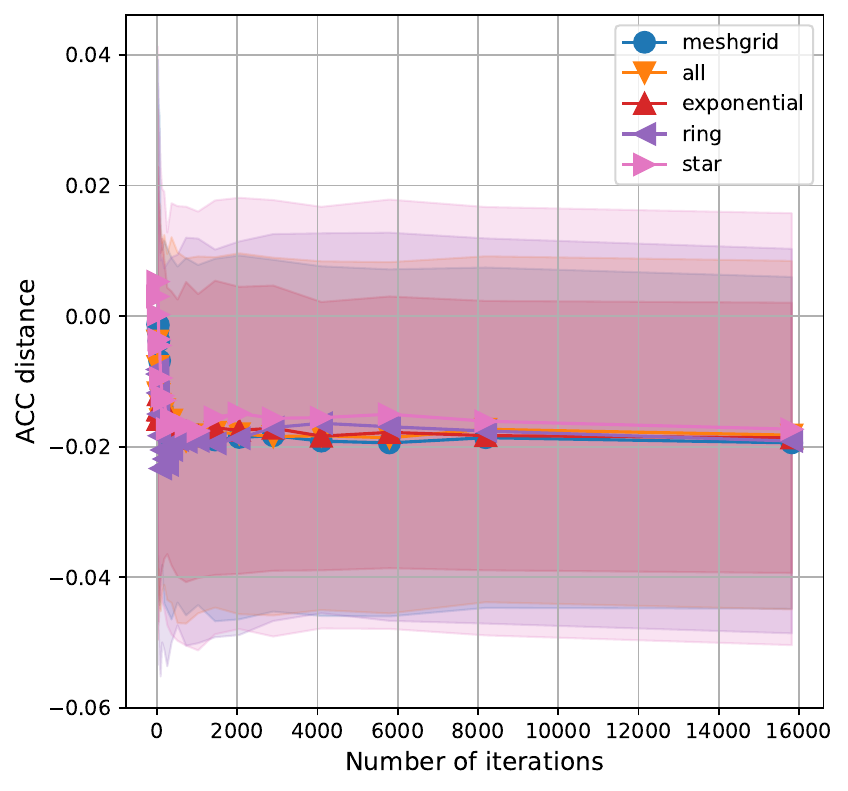}}
  \subfloat{\includegraphics[width=0.25\textwidth]{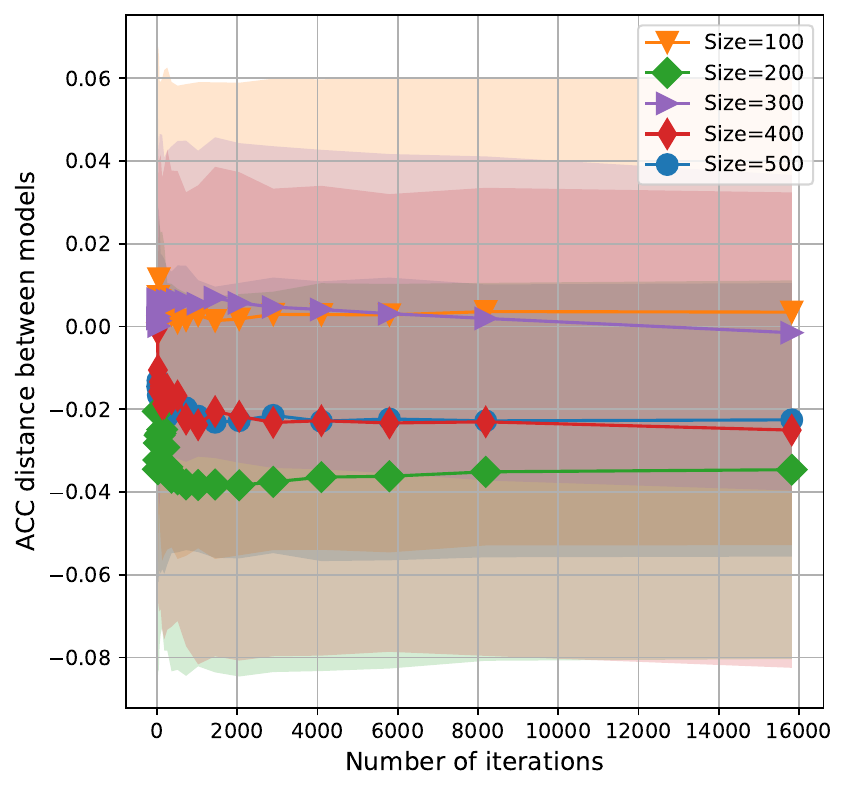}}
  \subfloat{\includegraphics[width=0.25\textwidth]{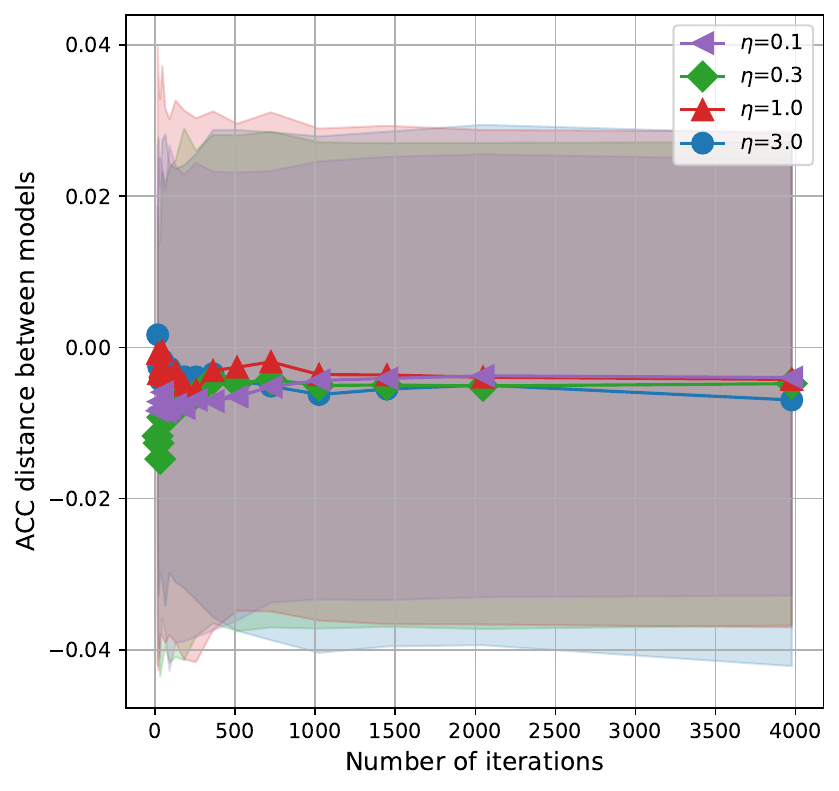}}
    \vspace{-0.1cm}
  \caption{\small $\Delta$ against the number of iterations, with the first row showing different settings. From left to right, the settings include varying learning rates, communication typologies, sample sizes on the \texttt{w5a} dataset, and learning rates(\texttt{svmguide3}). The generalization error is displayed at the bottom accordingly.}
  \label{fig:auc_exp}
\end{figure*}

\begin{figure*}[t]
\vspace{-0.5em}
  \centering
  \subfloat{\includegraphics[width=0.25\textwidth]{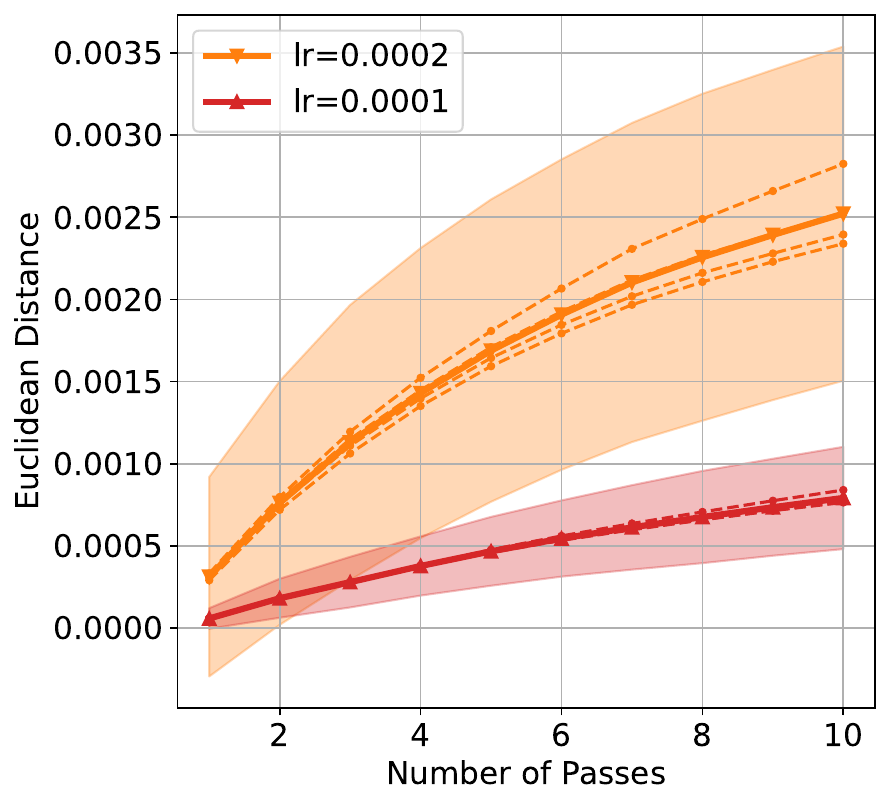}}
  \subfloat{\includegraphics[width=0.25\textwidth]{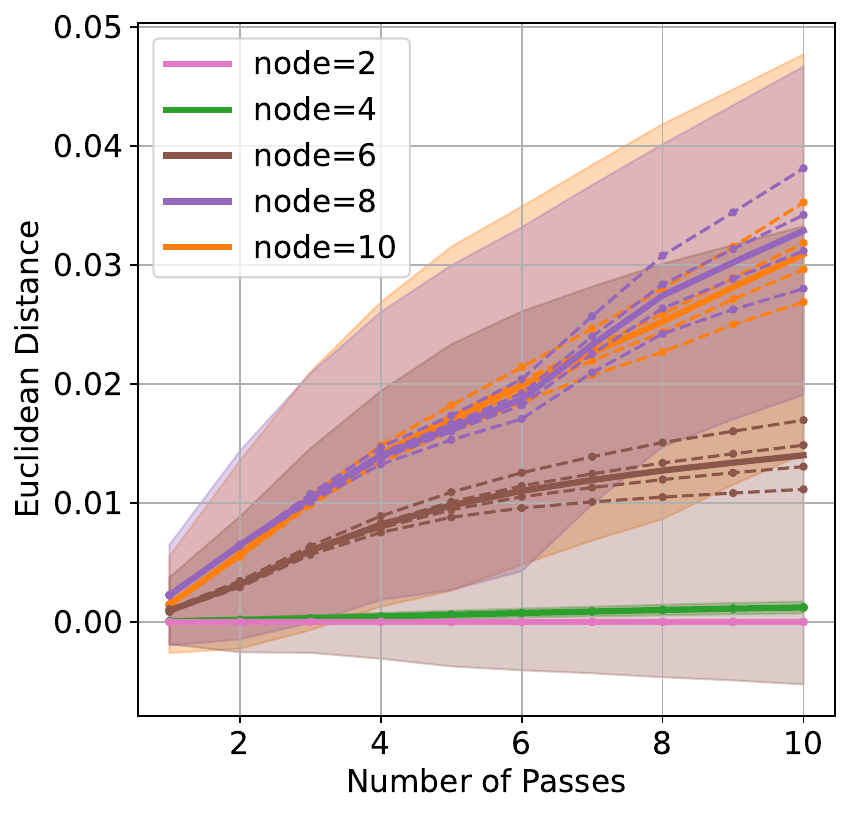}}
  \subfloat{\includegraphics[width=0.25\textwidth]{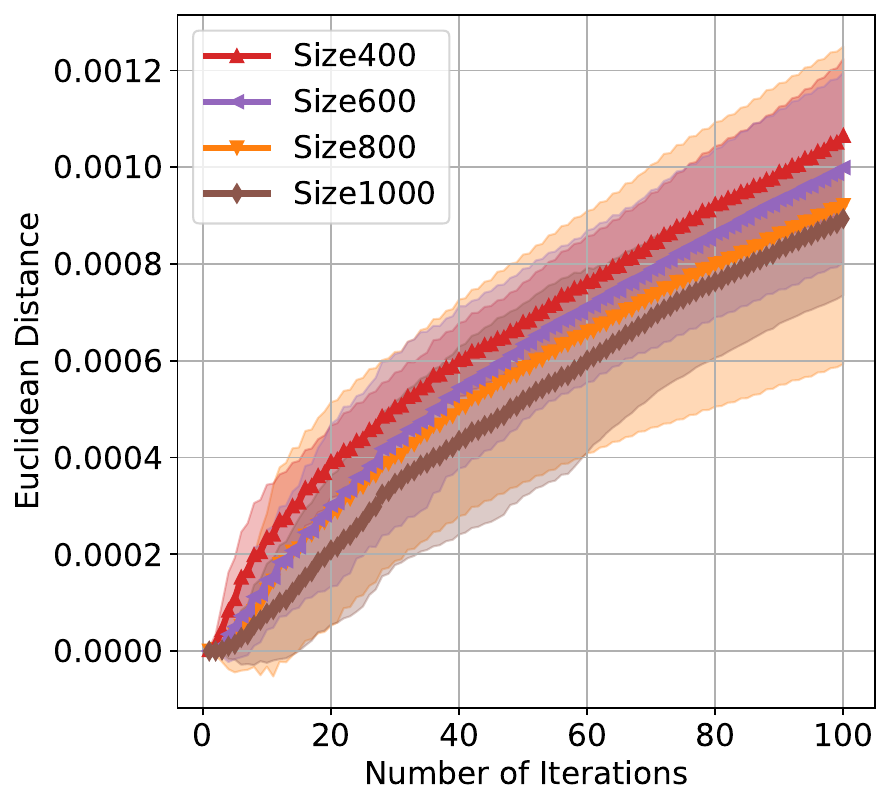}}
  \subfloat{\includegraphics[width=0.25\textwidth]{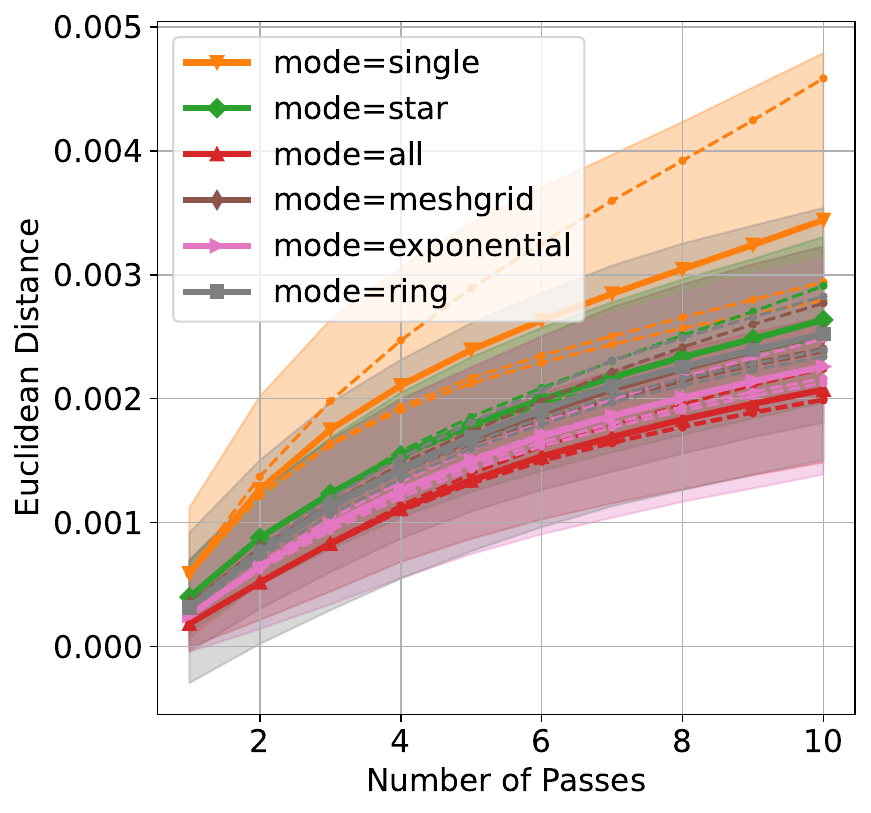}} \\
    \vskip -0.14in
  \subfloat{\includegraphics[width=0.25\textwidth]{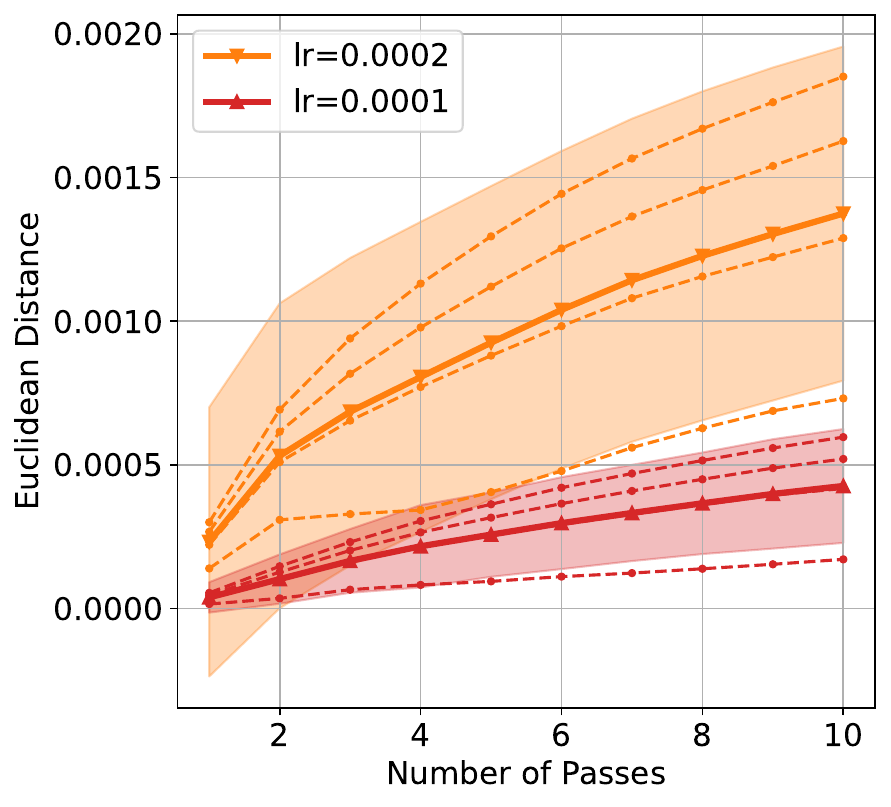}}
  \subfloat{\includegraphics[width=0.25\textwidth]{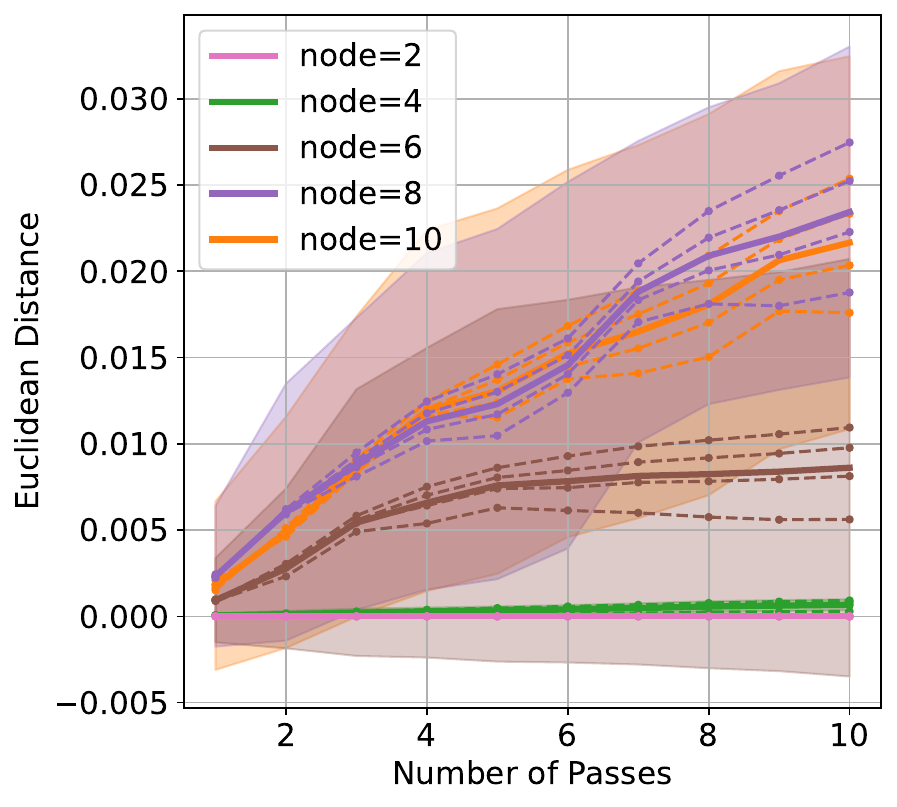}}
  \subfloat{\includegraphics[width=0.25\textwidth]{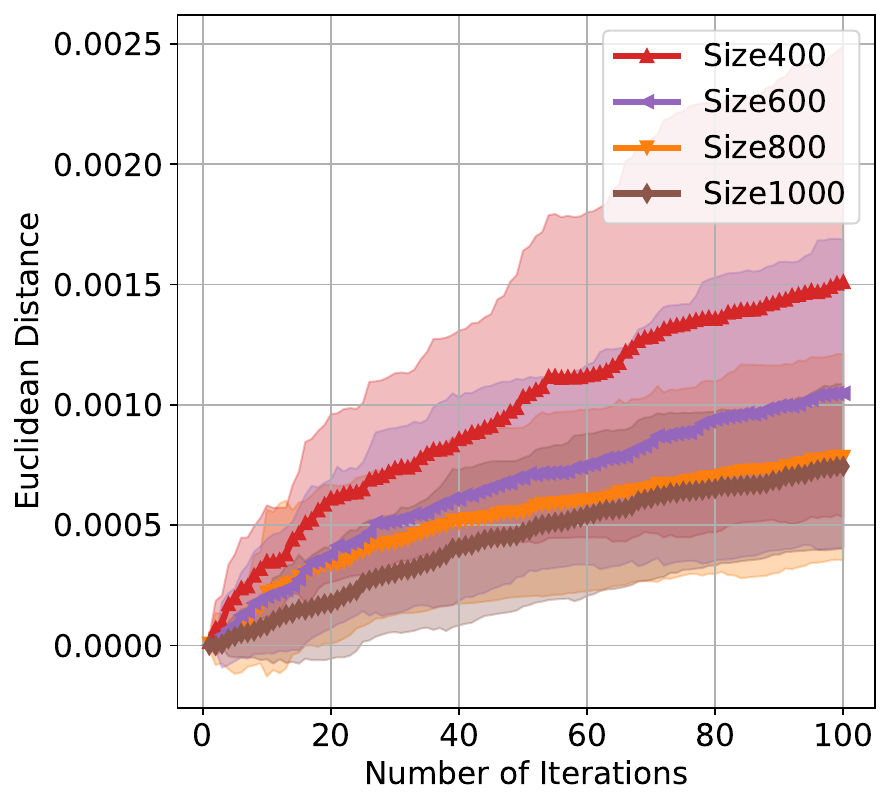}}
  \subfloat{\includegraphics[width=0.25\textwidth]{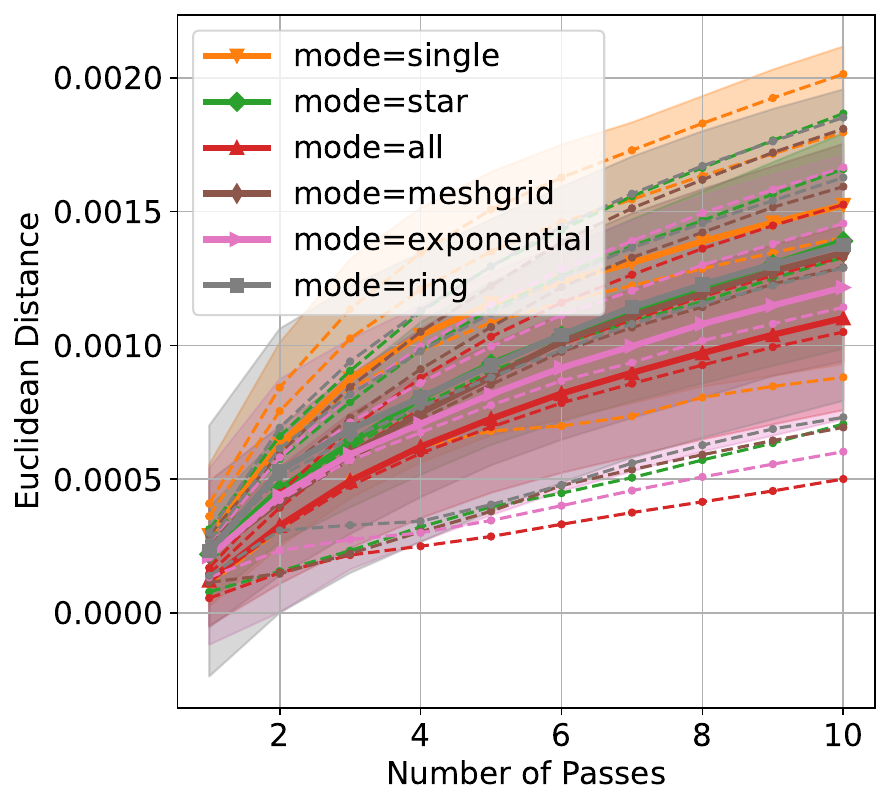}}
  \caption{\small The first row shows the performance of the generator. From left to right, the settings include varying learning rates, the number of nodes, sample sizes, and communication typologies on the \texttt{MNIST} dataset. The performance of the discriminator is displayed at the bottom accordingly. The dashes denote different layers.}
  \label{fig:gan_exp}
   \vspace{-0.4cm}
\end{figure*}

\textbf{Experiments Setup.} We evaluate our theoretical results of the C-C case by adopting the \texttt{SOLAM} method~\citep{ying2016stochastic} to solve the AUC problem on two datasets \texttt{svmguide} and \texttt{w5a}, and the NC-NC case by solving the generative adversarial network on \texttt{MNIST}. We extend the methods for both cases to a decentralized implementation. Our experimental setting follows the way conducted in~\citep{hardt2016train, lei2021stability} to study how the stability and generalizability of D-SGDA would behave along the learning process with different factors, including learning rates, typologies, nodes, and sample sizes. We employ the same randomized method to generate two model sequences, one for the original data and another for a one-observation perturbing data, and subsequently calculate the Euclidean distance $\Delta$ between their respective parameter sets. Additional implementation details can be found in the \textbf{Appendix}.

\textbf{Results analysis:} From Fig.~\ref{fig:auc_exp} and Fig.~\ref{fig:gan_exp}, we can observe that: (i) faster learning rates, fewer number of nodes and smaller sample size can result in a smaller Euclidean distance between weights and a smaller difference between training dataset and validation dataset; (ii) the performance of different topology on stability: fully connected(all) $>$ exponential $>$ grid $\approx$ ring $\geq$ star $>$ disconnected(single). These validate our theoretical results of the algorithmic stability for D-SGDA (see Remark \ref{remark:factor} below Thm.~\ref{thm:cc:stability}). And the impact of different topologies also coincides with our discussion in Table \ref{tab:lambda}.

\section{Conclusion}\label{sec:discussion}

In this paper, we provide the first comprehensive analysis for the stability and generalization of D-SGDA for decentralized minimax problems. Our theoretical results show that a decentralized structure does not destroy the stability and generalization of D-SGDA, instead we can leverage between the iterations and the number of nodes, as well as sample size to achieve better population performance. Numerical experiments also validate our theory. Our analysis technique has the potential used for studying the ability and generalization of other decentralized minimax algorithms.

\textbf{Limitation\&Broader Impacts.} In our analysis, we require the Lipschitz smoothness, which may be further relaxed in future work. In addition, D-SGDA can converge with the heterogeneous data distribution, whose stability and generalization are still unexplored in this work. Since our work focuses on the theoretical understanding of D-SGDA, it does not suffer from negative impacts.

\section*{Acknowledgement}
This work was supported in part by the National Natural Science Foundation of China under Grants 62225113.


\bibliography{Ref/ref}
\bibliographystyle{plainnat}

\clearpage
\appendix
\onecolumn
\part{Appendix} 
\parttoc

\section{Decentralized Optimization}\label{appsec:decentralized}
\begin{figure}[H]
    \centering
    \includegraphics[width=\textwidth]{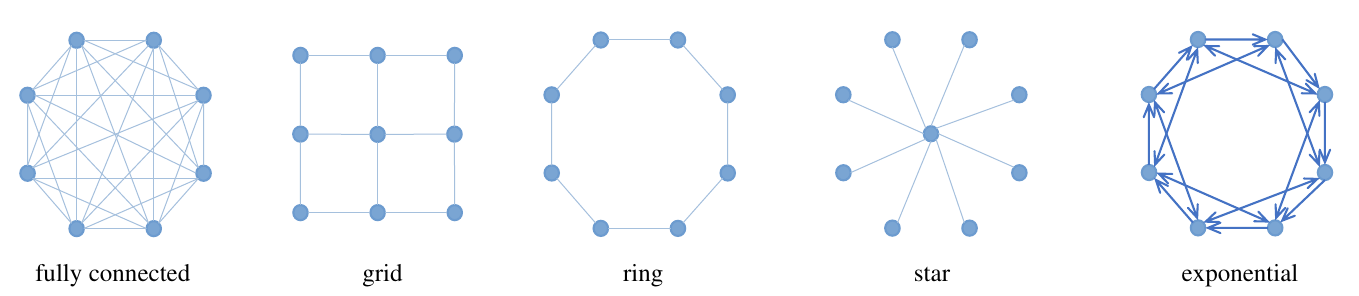}
    \caption{Visualization of different typologies. Note that the lines without arrows mean undirected, such as the fully connected, the grid, the ring, and the star graph. While lines with arrows denote the direction, which shows how information flows from one node to another. Our study objective is undirected graph in this paper, with the exponential graph an illustrative example of directed graph.}
\end{figure}
\section{Notation}
\begin{enumerate}[leftmargin=*]
    \item $\x\in\R^{d_{\x}}$ and $\y\in\R^{d_{\y}}$, we use $\left(\begin{array}{c}
        \x \\
        \y 
    \end{array}\right)\in\R^{d_{\x}+d_{\y}}$ to denote the concatenation.
    \item $\bm{X}^t\triangleq\left(\x^t_1,...,\x^t_m\right)^T\in\R^{m\times d_{\x}}$; $\bm{Y}^t\triangleq\left(\y^t_1,...,\y^t_m\right)^T\in\R^{m\times d_{\y}}$;
    and we use  $\left[\bm{X},\bm{Y}\right]^t\in\R^{m\times(d_{\x}+d_{\y})}$ to denote the concatenation  $\left[\left(\begin{array}{c}
     \x^t_1  \\
     \y^t_1
\end{array}\right),\left(\begin{array}{c}
     \x^t_2  \\
     \y^t_2
\end{array}\right),...,\left(\begin{array}{c}
     \x^t_m  \\
     \y^t_m
\end{array}\right)\right]^T$.
    \item We use $\bm{\xi}^t$ to denote the samples collected in the $t$-th iteration, i.e., $\{\xi_{1,j_t(1)},...,\xi_{m,j_t(m)}\}$.
    \item We denote the gradients in the $t$-th iteration as follows:
    \begin{align*}
        \small
        \dx\f(\bm{X}^t,\bm{Y}^t;\bm{\xi}^t)&\triangleq\left(\dx f_1(\x^t_1,\y^t_1;\xi_{1,j_t(1)}),...,\dx f_m(\x^t_m,\y^t_m;\xi_{m,j_t(m)})\right)^T\in\R^{m\times d_{\x}}\\
        \dy\f(\bm{X}^t,\bm{Y}^t;\bm{\xi}^t)&\triangleq\left(\dy f_1(\x^t_1,\y^t_1;\xi_{1,j_t(1)}),...,\dy f_m(\x^t_m,\y^t_m;\xi_{m,j_t(m)})\right)^T\in\R^{m\times d_{\y}}\\
        \left(\begin{array}{c}
        \dx\f(\bm{X}^t,\bm{Y}^t;\bm{\xi}^t)   \\
        \dy\f(\bm{X}^t,\bm{Y}^t;\bm{\xi}^t)
    \end{array}\right)&\triangleq\!\!
    \footnotesize \left[\!\left(\!\!\!\!\begin{array}{c}
        \dx f_1(\x^t_1,\y^t_1;\xi_{1,j_t(1)})   \\
        \dy f_1(\x^t_1,\y^t_1;\xi_{1,j_t(1)})
    \end{array}\!\!\!\!\right)\!,\!...,\!\left(\!\!\!\!\begin{array}{c}
        \dx f_m(\x^t_m,\y^t_m;\xi_{m,j_t(m)})   \\
        \dy f_m(\x^t_m,\y^t_m;\xi_{m,j_t(m)})
    \end{array}\!\!\!\!\right)\!\right]^{\!T}\!\!\!\!\in\!\R^{\!m\!\times\! (\!d_{\x}\!+\!d_{\y}\!)}\!\!.
    \end{align*}
    \item We use $\mathbf{P}\in\R^{m\times m}$ to denote the matrix whose elements are all $\frac{1}{m}$, $\mathds{I}$ to denote the identity matrix, and $\mathds{1}_m\in\R^m$ to denote the vector with all elements equal to $1$.
\end{enumerate}
\section{Important Lemmas}
In this section, we provide some important lemmas as fundamentals of the following proof.
\begin{lemma}\label{le:expansive}
    We define $G_{g,\eta}$ as 
    $$G_{g,\eta}\left(\begin{array}{c}
        \x\\
        \y 
    \end{array}
    \right)=\left(\begin{array}{c}
        P_{\X}(\x-\eta_{\x}\nabla_{\x}g(\x,\y)) \\
        P_{\Y}(\y+\eta_{\y}\nabla_{\y}g(\x,\y))
    \end{array}\right)$$
    with $\eta^{max}\triangleq\max\{\eta_{\x},\eta_{\y}\}$ and $\eta^{min}\triangleq\min\{\eta_{\x},\eta_{\y}\}$.Under the assumption that $g$ is $L$-Lipschitz smooth, then we have:
    \begin{enumerate}[leftmargin=*,label={\alph*.}]
        \item\label{le:expansive:smooth} $G_{g,\eta}$ is $(1+\eta^{max}L)$-expansive.
        \item\label{le:expansive:strongly} When $g$ is $\mu_{\x}$-strongly-convex w.r.t. $\x$ and $\mu_{\y}$-strongly-concave w.r.t. $\y$, letting $\frac{L+\mu}{2}(\eta^{max})^2\leq\eta^{min}\leq\frac{L+\mu}{2}\frac{1}{L\mu}$ where $\mu\triangleq\min\{\mu_{\x},\mu_{\y}\}$, then $G_{g,\eta}$ is $(1-\eta^{min}\frac{L\mu}{L+\mu})$-expansive.
    \end{enumerate}
\end{lemma}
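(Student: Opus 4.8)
The plan is to prove both parts for the \emph{un}projected update and then restore the projections for free. Since $P_\X$ and $P_\Y$ are Euclidean projections onto convex sets they are each $1$-Lipschitz (non-expansive) on their own block, and because the norm on $\R^{d_\x+d_\y}$ is the sum of the squared block norms, applying $P_\X$ to the first block and $P_\Y$ to the second can only shrink the joint distance. Hence it suffices to bound the expansiveness of the map $\Phi(\x,\y)=\bigl(\x-\eta_\x\dx g(\x,\y),\,\y+\eta_\y\dy g(\x,\y)\bigr)$, and both claims will then transfer. Throughout I write $\Delta\x=\x-\x'$, $\Delta\y=\y-\y'$, $\Delta g_\x=\dx g(\x,\y)-\dx g(\x',\y')$, $\Delta g_\y=\dy g(\x,\y)-\dy g(\x',\y')$, and $S=\|\Delta\x\|^2+\|\Delta\y\|^2$.

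For part \ref{le:expansive:smooth} I would use the triangle inequality to split $\Phi(\x,\y)-\Phi(\x',\y')$ into the identity displacement $(\Delta\x,\Delta\y)$, whose norm is $\sqrt S$, and the gradient displacement $(-\eta_\x\Delta g_\x,\,\eta_\y\Delta g_\y)$. The latter has squared norm $\eta_\x^2\|\Delta g_\x\|^2+\eta_\y^2\|\Delta g_\y\|^2\le(\eta^{max})^2(\|\Delta g_\x\|^2+\|\Delta g_\y\|^2)$, and the joint smoothness of Assumption~\ref{assp:smooth} gives $\|\Delta g_\x\|^2+\|\Delta g_\y\|^2\le L^2 S$; so this piece has norm at most $\eta^{max}L\sqrt S$. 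Adding the two pieces yields the factor $1+\eta^{max}L$.

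For part \ref{le:expansive:strongly} I would work with squared norms. Expanding $\|\Phi(\x,\y)-\Phi(\x',\y')\|^2$ produces $S$, the monotonicity terms $-2\eta_\x\langle\Delta\x,\Delta g_\x\rangle+2\eta_\y\langle\Delta\y,\Delta g_\y\rangle$, and the quadratic terms $\eta_\x^2\|\Delta g_\x\|^2+\eta_\y^2\|\Delta g_\y\|^2$. The key tool is the interpolation (co-coercivity) inequality for a $\mu$-strongly-convex, $L$-smooth function: for fixed $\y$ the map $\x\mapsto g(\x,\y)$ gives $\langle\Delta\x,\dx g(\x,\y)-\dx g(\x',\y)\rangle\ge\frac{\mu_\x L}{\mu_\x+L}\|\Delta\x\|^2+\frac{1}{\mu_\x+L}\|\dx g(\x,\y)-\dx g(\x',\y)\|^2$, and applying the same inequality to the strongly convex function $-g$ in $\y$ gives the concave analogue. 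Since $\frac{\mu_\x L}{\mu_\x+L}\ge\frac{\mu L}{\mu+L}$ and $\eta_\x,\eta_\y\ge\eta^{min}$, these supply the negative contraction contribution $-2\eta^{min}\frac{L\mu}{L+\mu}S$ that drives the rate. I would then invoke the upper step-size bound $\eta^{min}\le\frac{L+\mu}{2L\mu}$ together with the lower bound $\frac{L+\mu}{2}(\eta^{max})^2\le\eta^{min}$ to show the residual quadratic-gradient contributions are nonpositive, obtaining $\|\Phi(\x,\y)-\Phi(\x',\y')\|^2\le\bigl(1-2\eta^{min}\frac{L\mu}{L+\mu}\bigr)S$; the stated constant then follows from $\sqrt{1-t}\le 1-t/2$.

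The main obstacle is that, unlike pure minimization, $\Delta g_\x$ and $\Delta g_\y$ each depend on \emph{both} $\Delta\x$ and $\Delta\y$, so the block co-coercivity can only be applied after splitting each gradient difference at an intermediate point such as $(\x',\y)$, which leaves mixed cross terms of order $L\|\Delta\x\|\|\Delta\y\|$ that are not sign-definite. Absorbing this primal--dual coupling is exactly where the two-sided step-size condition earns its keep: the lower bound $(\eta^{max})^2\le\frac{2}{L+\mu}\eta^{min}$ forces every $(\eta^{max})^2$-weighted quadratic (the squared-gradient residuals together with the coupling terms) to be dominated by the $\eta^{min}$-weighted strong-monotonicity gain, collapsing the net coefficient of $S$ to the claimed factor. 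Checking that this domination holds uniformly over the direction of $(\Delta\x,\Delta\y)$ is the delicate bookkeeping at the heart of the argument.
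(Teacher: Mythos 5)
Your part \ref{le:expansive:smooth} matches the paper's proof exactly: drop the projections by non-expansiveness, split the unprojected update via the triangle inequality into the identity displacement plus the gradient displacement, and bound the latter by $\eta^{max}L$ times the displacement using the joint smoothness of Assumption \ref{assp:smooth}. No issue there, and your observation that the block-wise projections can only shrink the joint distance is the same first step the paper takes.

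Part \ref{le:expansive:strongly} contains a genuine gap, and it is precisely at the step you defer as ``delicate bookkeeping.'' The paper never splits the gradient differences at an intermediate point: it applies the co-coercivity inequality \emph{once, to the concatenated pair} $\bigl(\dx g(\x,\y)-\dx g(\x',\y'),\ \dy g(\x',\y')-\dy g(\x,\y)\bigr)$ against $\bigl(\x-\x',\ \y-\y'\bigr)$, treating the saddle operator as a single strongly-monotone smooth object; consequently no cross terms of order $L\|\Delta\x\|\|\Delta\y\|$ ever appear, and the step-size condition is needed only to cancel the $(\eta^{max})^2$-weighted squared-gradient term against the $\frac{2\eta^{min}}{L+\mu}$-weighted one coming out of co-coercivity. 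Your block-wise plan does create those cross terms, and your proposed mechanism for absorbing them is incorrect: the coupling terms $-2\eta_\x\langle\Delta\x,\dx g(\x',\y)-\dx g(\x',\y')\rangle$ and $+2\eta_\y\langle\Delta\y,\dy g(\x,\y')-\dy g(\x',\y')\rangle$ carry weights $\eta_\x,\eta_\y$, i.e.\ they are \emph{first order} in the step size --- the same order as the strong-monotonicity gain $-2\eta^{min}\frac{L\mu}{L+\mu}S$ --- not ``$(\eta^{max})^2$-weighted quadratics.'' The condition $\frac{L+\mu}{2}(\eta^{max})^2\le\eta^{min}$ relates a second-order quantity to a first-order one and therefore cannot force domination; indeed each cross term can be as large as $\eta^{max}LS$, and $\eta^{max}L\ge\eta^{min}\frac{L\mu}{L+\mu}$ always holds, so the contraction gain can be entirely wiped out. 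The block-wise route can only be rescued by exploiting the antisymmetric \emph{cancellation} between the two cross terms (they are transposed averages of the mixed Hessian $\nabla_{\x\y}g$, which cancel when $\eta_\x=\eta_\y$ and the averaging paths coincide), which is a different argument from domination, is not supplied in your sketch, and becomes problematic for $\eta_\x\neq\eta_\y$. To fix the proof, either adopt the paper's single application of co-coercivity to the concatenated operator, or establish strong monotonicity of the joint map $(\dx g,-\dy g)$ directly and redo the step-size analysis with it.
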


\begin{proof}
    For case \ref{le:expansive:smooth} where we do not require strong convexity or strong concavity, we have:
    \begin{align*}
        \left\|G_{g,\eta}\left(\begin{array}{c}
            \x\\
            \y
        \end{array}\right)-G_{g,\eta}\left(\begin{array}{c}
            \x' \\
            \y' 
        \end{array}\right)\right\|&\leq\left\|\left(\begin{array}{c}
            \x-\x'-\eta_{\x}\left(\dx g(\x,\y)-\dx g(\x',\y')\right) \\
            \y-\y'-\eta_{\y}\left(\dy g(\x',\y')-\dy g(\x,\y)\right) 
        \end{array}\right)\right\|\\
        &=\left\|\left(\begin{array}{c}
            \x-\x' \\
            \y-\y'
        \end{array}\right)\right\|+\left\|\left(\begin{array}{c}
            \eta_{\x}\left(\dx g(\x,\y)-\dx g(\x',\y')\right) \\
            \eta_{\y}\left(\dy g(\x',\y')-\dy g(\x,\y)\right)
        \end{array}\right)\right\|\\
        &\leq (1+\eta^{max}L)\left\|\left(\begin{array}{c}
            \x-\x' \\
            \y-\y'
        \end{array}\right)\right\|
    \end{align*}

    When function $g$ is further $\mu_{\x}$-strongly convex and $\mu_{\y}$-strongly concave in case \ref{le:expansive:strongly}, thus we have:
    \begin{equation}\label{eq:gab}
        \begin{aligned}
        &\quad\left\|G_{g,\eta}\left(\begin{array}{c}
            \x  \\
            \y 
        \end{array}\right)-G_{g,\eta}\left(\begin{array}{c}
            \x'  \\
            \y' 
        \end{array}\right)\right\|^2\\&=\left\|\left(\begin{array}{c}
            \x-\x'-\eta_{\x}\left(\dx g(\x,\y)-\dx g(\x',\y')\right) \\
            \y-\y'-\eta_{\y}\left(\dy g(\x',\y')-\dy g(\x,\y)\right) 
        \end{array}\right)\right\|^2\\
        &=\left\|\x-\x'-\eta_{\x}\left(\dx g(\x,\y)-\dx g(\x',\y')\right)\right\|^2+\left\|\y-\y'-\eta_{\y}\left(\dy g(\x',\y')-\dy g(\x,\y)\right)\right\|^2\\
        &=\|\x-\x'\|^2+\eta_{\x}^2\|\dx g(\x,\y)-\dx g(\x',\y'))\|^2-2\eta_{\x}\langle\x-\x',\dx g(\x,\y)-\dx g(\x',\y')\rangle\\&\quad+\|\y-\y'\|^2+\eta_{\y}^2\|\dy g(\x',\y')-\dy g(\x,\y))\|^2-2\eta_{\y}\langle\y-\y',\dy g(\x',\y')-\dy g(\x,\y)\rangle
    \end{aligned}
    \end{equation}
    
    Recalling the co-coercivity of a L-Lipschitz smooth and $\mu$-strongly convex function $f(\x)$ that \cite{hardt2016train}:
    $$\langle\nabla f(\x)-\nabla f(\y),\x-\y\rangle\geq\frac{1}{L+\mu}\|\nabla f(\x)-\nabla f(\y)\|^2+\frac{L\mu}{L+\mu}\|\x-\y\|^2$$
    So we can get:
    \begin{equation}\label{eq:coercive}
        \begin{aligned}
        &\quad\left<\x-\x',\dx g(\x,\y)-\dx g(\x',\y')\rangle+\langle\y-\y',\dy g(\x',\y')-\dy g(\x,\y)\right>\\
        &=\left<\left(\begin{array}{c}
            \dx g(\x,\y)-\dx g(\x',\y')  \\
            \dy g(\x',\y')-\dy g(\x,\y)
        \end{array}\right),\left(\begin{array}{c}
            \x-\x'  \\
            \y-\y'
        \end{array}\right)\right>\\
        &\geq\frac{1}{L+\mu}\left\|\left(\begin{array}{c}
           \dx g(\x,\y)-\dx g(\x',\y')    \\
           \dy g(\x',\y')-\dy g(\x,\y)
        \end{array}\right)\right\|^2+\frac{L\mu}{L+\mu}\left\|\left(\begin{array}{c}
            \x-\x'  \\
            \y-\y'
        \end{array}\right)\right\|^2\\
        &=\frac{1}{L+\mu}\|\dx g(\x,\y)-\dx g(\x',\y')\|^2+\frac{1}{L+\mu}\|\dy g(\x',\y')-\dy g(\x,\y)\|^2\\&\quad+\frac{L\mu}{L+\mu}\|\x-\x'\|^2+\frac{L\mu}{L+\mu}\|\y-\y'\|^2
    \end{aligned}
    \end{equation}
    
    Combining above inequalities \eqref{eq:gab} and \eqref{eq:coercive}, we can get:
    \begin{equation*}
        \begin{aligned}
            \left\|G_{g,\eta}\left(\begin{array}{c}
            \x  \\
            \y 
        \end{array}\right)-G_{g,\eta}\left(\begin{array}{c}
            \x'  \\
            \y' 
        \end{array}\right)\right\|\leq(1-\eta^{min}\frac{L\mu}{L+\mu})\left\|\left(\begin{array}{c}
            \x-\x'  \\
            \y-\y'
        \end{array}\right)\right\|
        \end{aligned}
    \end{equation*}
    when $\frac{L+\mu}{2}(\eta^{max})^2\leq\eta^{min}\leq\frac{L+\mu}{2}\frac{1}{L\mu}$ satisfies.
\end{proof}

\begin{lemma}\label{le:decentral}
Letting $\mathbf{P}\in\R^{m\times m}$ denote the matrix whose elements are all $\frac{1}{m}$, following the update rule of D-SGDA (see Algorithm \ref{alg:dsgda}), we have:
$$\left\|(\mathds{I}-\mathbf{P})\left[\bm{X},\bm{Y}\right]^t\right\|\leq2\sqrt{m}G\sum_{s=0}^{t-1}\eta_s^{max}\lambda^{t-1-s}$$
where $\eta_s^{max}=\max\{\eta_{\x,s},\eta_{\y,s}\}$.
\end{lemma}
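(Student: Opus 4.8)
The plan is to track the consensus distance, that is, the deviation of the per-node iterates from their average, and to show it grows in a controlled way governed by the spectral gap. Observe that the D-SGDA update can be written compactly at the matrix level: if we stack the iterates as $\left[\bm{X},\bm{Y}\right]^t$, then the communication step multiplies by the mixing matrix $\W$ while the local gradient step subtracts a (signed) gradient matrix whose rows are bounded in norm by $G$ thanks to Assumption~\ref{assp:continuous}. The key algebraic fact is that $(\mathds{I}-\mathbf{P})$ annihilates the consensus component and that $\W$ and $\mathbf{P}$ commute with $\mathbf{P}\W=\W\mathbf{P}=\mathbf{P}$, so that $(\mathds{I}-\mathbf{P})\W = (\W-\mathbf{P})(\mathds{I}-\mathbf{P})$. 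Since $\W$ is symmetric doubly stochastic with $\lambda=\max\{|\lambda_2|,|\lambda_m(\W)|\}$, the operator $\W-\mathbf{P}$ has spectral norm at most $\lambda$ on the orthogonal complement of the all-ones direction.

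First I would unroll the recursion from the initialization $\x^0_i=\y^0_i=0$, so that $(\mathds{I}-\mathbf{P})\left[\bm{X},\bm{Y}\right]^0=0$ and the consensus error is generated purely by the accumulated stochastic gradients. Writing the update as $\left[\bm{X},\bm{Y}\right]^{t} = \W\left[\bm{X},\bm{Y}\right]^{t-1} - \bm{\eta}_{t-1}\bm{g}^{t-1}$ (up to projection, handled below) where $\bm{g}^{t-1}$ denotes the stacked gradient matrix scaled by the appropriate $\pm\eta_{\x,s},\eta_{\y,s}$, I would apply $(\mathds{I}-\mathbf{P})$ and iterate to obtain $(\mathds{I}-\mathbf{P})\left[\bm{X},\bm{Y}\right]^t = -\sum_{s=0}^{t-1}(\W-\mathbf{P})^{t-1-s}(\mathds{I}-\mathbf{P})\bm{\eta}_s\bm{g}^s$. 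Taking Frobenius norms, using submultiplicativity together with $\|(\W-\mathbf{P})^{t-1-s}\|\leq\lambda^{t-1-s}$ and $\|\mathds{I}-\mathbf{P}\|\leq 1$, each summand is bounded by $\lambda^{t-1-s}\eta_s^{max}\|\bm{g}^s\|_F$. Since each of the $m$ rows of the gradient matrix has norm at most $G$ (and the concatenated $(\dx,\dy)$ gradient likewise), we get $\|\bm{g}^s\|_F\leq\sqrt{m}\cdot\sqrt{2}\,G$ or, more crudely, the factor $2\sqrt{m}G$ stated in the lemma once one accounts for both the $\x$ and $\y$ blocks. Summing over $s$ yields exactly $2\sqrt{m}G\sum_{s=0}^{t-1}\eta_s^{max}\lambda^{t-1-s}$.

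The main obstacle I anticipate is handling the projection operators $P_{\X}$ and $P_{\Y}$ cleanly, because the compact linear recursion above is only exact in the unconstrained case; with projections the update is no longer an affine map of the previous iterate, so the tidy telescoping of $(\W-\mathbf{P})$ powers does not literally apply. I would address this by using nonexpansiveness of the Euclidean projection onto the convex sets $\X,\Y$, together with the fact that $\mathbf{P}$ applied after projection still extracts a legitimate average; one typically argues that projecting each coordinate only contracts distances and that the deviation bound on the pre-projection quantity dominates the post-projection deviation, or one absorbs the projection displacement into an additional gradient-bounded term. A second, more routine point to get right is the bookkeeping of the $\sqrt{m}$ factor and the constant $2$: one must verify that bounding the stacked $(d_{\x}+d_{\y})$-dimensional gradient rows by $G$ (via Assumption~\ref{assp:continuous}) and accounting for the two blocks produces the claimed constant rather than a different numerical prefactor. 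Once the projection is dispatched via nonexpansiveness and the geometric sum $\sum_s\lambda^{t-1-s}$ is recognized, the bound follows directly.
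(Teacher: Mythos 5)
Your proposal is correct and takes essentially the same route as the paper: the paper's own proof consists of exactly your two ingredients — bounding the stacked, learning-rate-scaled gradient rows by $\eta_t^{max}\sqrt{m}G$ via Assumption~\ref{assp:continuous}, and then running the spectral-gap consensus recursion (which the paper outsources to Lemma 8 of \cite{sun2021stability}) that you write out explicitly through $(\mathds{I}-\mathbf{P})\W=(\W-\mathbf{P})(\mathds{I}-\mathbf{P})$ and $\|(\W-\mathbf{P})^{k}\|\leq\lambda^{k}$. Your explicit treatment of the projection via nonexpansiveness (made rigorous by noting the row-average minimizes the sum of squared distances, so the post-projection consensus error is dominated by the pre-projection one, yielding the one-step recursion $e_{t+1}\leq\lambda e_t+\sqrt{m}G\eta_t^{max}$ rather than the literal unrolled identity) is precisely the detail the cited lemma glosses over, and it even produces a constant no larger than $\sqrt{2m}G$, which is stronger than the stated $2\sqrt{m}G$.
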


\begin{proof}
The Lipschitz continuity (see Assumption \ref{assp:continuous}) implies that for any $\x\in\mathcal{X}$ and $\y\in\mathcal{Y}$ on a given sample $\xi$, the gradient value is bounded by $G$, i.e., $\left\|\left(\begin{array}{c}
        \eta_{\x,t}\dx\f(\bm{X}^t,\bm{Y}^t;\bm{\xi}^t)   \\
        -\eta_{\y,t}\dy\f(\bm{X}^t,\bm{Y}^t;\bm{\xi}^t)
    \end{array}\right)\right\|\leq\max\{\eta_{\x,t},\eta_{\y,t}\}\sqrt{m}G$. Following the proof of Lemma 8 in \cite{sun2021stability}, we can get the result, which can be expressed as $\left[\sum_{i=1}^m\|\x^t_i-\x^t\|^2+\|\y^t_i-\y^t\|^2\right]^{1/2}\leq2\sqrt{m}G\sum_{s=0}^{t-1}\eta_s^{max}\lambda^{t-1-s}$
\end{proof}

\begin{lemma}
When $0<\lambda<1$,
    \begin{equation*}
        \sum_{j=0}^{t-1}\frac{\lambda^{t-1-j}}{(j+1)^k}\leq\frac{C_\lambda}{t^k}, k>0; \quad\quad C_{\lambda}\triangleq\frac{(k/e)^k}{\lambda(\ln{\frac{1}{\lambda}})^k}+\frac{2e^{-1}}{\lambda\ln{\frac{1}{\lambda}}}+\frac{2^k}{\lambda\ln{\frac{1}{\lambda}}}
    \end{equation*}
\end{lemma}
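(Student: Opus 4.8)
The plan is to bound the sum $\sum_{j=0}^{t-1}\lambda^{t-1-j}(j+1)^{-k}$ by treating the geometric decay factor $\lambda^{t-1-j}$ as the dominant control and separating the range of summation according to where the product $\lambda^{t-1-j}(j+1)^{-k}$ is largest. The key observation is that as $j$ increases toward $t-1$, the factor $(j+1)^{-k}$ decreases like $t^{-k}$ while $\lambda^{t-1-j}$ increases toward $1$; conversely for small $j$ the power $(j+1)^{-k}$ is large but $\lambda^{t-1-j}$ is exponentially small. I would first substitute $i=t-1-j$ so the sum becomes $\sum_{i=0}^{t-1}\lambda^{i}(t-i)^{-k}$, which cleanly exhibits the exponential weight indexed from the top.

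First I would split the index range into the tail $i \le t/2$ (equivalently $j \ge t/2$) and the head $i > t/2$. On the tail, $(t-i)\ge t/2$, so $(t-i)^{-k}\le (2/t)^k = 2^k/t^k$, and summing the geometric series $\sum_{i\ge 0}\lambda^i = 1/(1-\lambda)$ would give a contribution of order $2^k/(t^k(1-\lambda))$; but since the statement is phrased with $\ln\frac{1}{\lambda}$ rather than $1-\lambda$, I would instead bound $\sum_i \lambda^i$ more carefully or use $1-\lambda \ge \lambda\ln\frac{1}{\lambda}$ type estimates to produce the $\frac{2^k}{\lambda\ln\frac1\lambda}$ term. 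On the head, $\lambda^i$ is at most $\lambda^{t/2}$, which is exponentially small, and here I would bound $(t-i)^{-k}\le 1$ and control $\sum_i \lambda^i \cdot 1$, extracting the middle term $\frac{2e^{-1}}{\lambda\ln\frac1\lambda}$ via an inequality of the form $x^a\lambda^x \le (a/(e\ln\frac1\lambda))^a$.

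The genuinely delicate term is the first summand $\frac{(k/e)^k}{\lambda(\ln\frac1\lambda)^k}$, which I expect arises not from a crude split but from optimizing the continuous envelope $g(i)=\lambda^i (t-i)^{-k}$, or more precisely from the maximal value of a function like $x^k\lambda^{x}$. The cleanest route is to compare the sum to an integral $\int_0^{t}\lambda^{x}(t-x)^{-k}\,dx$ and to use the standard bound $\sup_{x\ge 0} x^{k}e^{-\beta x} = (k/(e\beta))^{k}$ with $\beta=\ln\frac1\lambda$; writing $\lambda^{i}=e^{-\beta i}$ makes the factor $(k/e)^k/(\ln\frac1\lambda)^k$ appear naturally as the peak of the relevant profile, and the leading $1/\lambda$ absorbs an off-by-one shift in the exponent. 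The main obstacle will be organizing these three estimates so that each produces exactly one of the three advertised summands with the correct constants, rather than a single consolidated $O(\cdot)$ bound; in particular, matching the precise powers of $\lambda$, $\ln\frac1\lambda$, and $e$ in each term requires choosing the split point and the envelope-maximization inequality in a coordinated way, and verifying that the $t^{-k}$ normalization factors out uniformly across all three pieces.
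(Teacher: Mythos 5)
Your tail estimate (indices $j\ge t/2$) is sound and produces exactly the paper's third term: there $(j+1)^{-k}\le 2^k/t^k$, the geometric sum is at most $1/(1-\lambda)$, and the inequality $1-\lambda\ge\lambda\ln\frac{1}{\lambda}$ converts this into $\frac{2^k}{t^k\lambda\ln\frac{1}{\lambda}}$. The genuine gap is in your head piece ($i>t/2$, i.e.\ $j<t/2$). You propose to bound $(t-i)^{-k}\le 1$ and then control $\sum_{i>t/2}\lambda^i$; but any bound on that geometric block costs a factor of order $\lambda^{t/2}/(1-\lambda)$, and you must still spend the envelope inequality $t^k\lambda^{t/2}\le\bigl(2k/(e\ln\frac{1}{\lambda})\bigr)^k$ to cancel the $t^k$ normalization. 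The product is of order $\frac{(2k/e)^k}{\lambda\,(\ln\frac{1}{\lambda})^{k+1}}$: one power of $1/\ln\frac{1}{\lambda}$ from the geometric sum plus $k$ powers from the envelope. Every term of $C_\lambda$ carries at most $\max\{k,1\}$ powers of $1/\ln\frac{1}{\lambda}$, so your head bound diverges strictly faster than $C_\lambda$ as $\lambda\to 1$ --- precisely the sparse-topology regime this lemma is built for (downstream, a ring would degrade from $\mathcal{O}(m^2)$ to roughly $\mathcal{O}(m^{2(k+1)})$ in Table~\ref{tab:lambda}). The same defect appears if you instead bound the number of head terms by $t/2$: you then need the envelope at exponent $k+1$, again yielding $(\ln\frac{1}{\lambda})^{-(k+1)}$.

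The fix is the opposite factorization, which is what the paper does: on the head, freeze the \emph{exponential} factor once, $\lambda^{t-1-j}\le\lambda^{t/2-1}$ for $j<t/2$, and sum (or integrate) the \emph{polynomial} factor, $\sum_{j<t/2}(j+1)^{-k}\lesssim t^{1-k}$ for $0<k<1$ (and $\lesssim\ln t$ for $k=1$). Then the $t^k$ normalization combines with $t^{1-k}$ into a single factor $t$, and one envelope application $\lambda^{t/2-1}t\le\frac{2e^{-1}}{\lambda\ln\frac{1}{\lambda}}$ gives the middle term with only one power of $1/\ln\frac{1}{\lambda}$. Relatedly, you misattribute the first term $\frac{(k/e)^k}{\lambda(\ln\frac{1}{\lambda})^k}$: it does not come from the peak of a continuous envelope of the integral, but from crudely isolating the single $j=0$ summand $\lambda^{t-1}$, which must be split off anyway because the term-by-term comparison $\frac{\lambda^{t-1-j}}{(j+1)^k}\le\int_j^{j+1}\frac{\lambda^{t-1-x}}{x^k}\,dx$ only covers $j\ge1$ (the paper compares $\sum_{j\ge1}$ with $\int_1^t$). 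Multiplying that lone term by $t^k$ and applying $\sup_{t}t^k\lambda^{t}=\bigl(k/(e\ln\frac{1}{\lambda})\bigr)^k$ produces the first summand directly. So your toolbox (split at $t/2$, the sup inequality $x^a\lambda^x\le(a/(e\ln\frac{1}{\lambda}))^a$, and $1-\lambda\ge\lambda\ln\frac{1}{\lambda}$) is the right one, but as organized it proves only a weaker lemma with constant $\sim(\ln\frac{1}{\lambda})^{-(k+1)}$, not the stated $C_\lambda$.
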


\begin{proof}
    \begin{equation*}
        \begin{aligned}
            \sum_{j=0}^{t-1}\frac{\lambda^{t-1-j}}{(j+1)^k}=\lambda^{t-1}+\sum_{j=1}^{t-1}\frac{\lambda^{t-1-j}}{(j+1)^k}\leq\lambda^{t-1}+\int_1^t\frac{\lambda^{t-1-x}}{x^k}dx=\lambda^{t-1}+\lambda^{t-1}\int_1^t\frac{\lambda^{-x}}{x^k}dx
        \end{aligned}
    \end{equation*}
    whereas for the integral we have:
    \begin{equation*}
        \int^t_1\frac{\lambda^{-x}}{x^k}dx=\int_1^{\frac{t}{2}}\frac{\lambda^{-x}}{x^k}dx+\int_{\frac{t}{2}}^t\frac{\lambda^{-x}}{x^k}dx\leq\lambda^{-\frac{t}{2}}\int_1^{\frac{t}{2}}\frac{1}{x^k}dx+\frac{1}{(\frac{t}{2})^k}\int_{\frac{t}{2}}^t\lambda^{-x}dx
    \end{equation*}
    \begin{equation*}
        \int_1^{\frac{t}{2}}\frac{1}{x^k}dx\leq\left\{\begin{array}{cc}
             ln\frac{t}{2}&   k=1\\
             \frac{(\frac{t}{2})^{1-k}}{1-k}&0<k<1
        \end{array}
        \right.
    \end{equation*}
    Therefore we have:
    \begin{equation*}
        \sum_{j=0}^{t-1}\frac{\lambda^{t-1-j}}{(j+1)^k}\leq\left\{\begin{array}{cc}
            \lambda^{t-1}+\lambda^{\frac{t}{2}-1}ln\frac{t}{2}+\frac{1}{t^k}\frac{2^k}{\lambda ln\frac{1}{\lambda}}& k=1\\
            \lambda^{t-1}+\lambda^{\frac{t}{2}-1}t^{1-k}+\frac{1}{t^k}\frac{2^k}{\lambda ln\frac{1}{\lambda}}& 0<k<1
        \end{array}
        \right.
    \end{equation*}
When $0 < k < 1$, $\sum_{j=0}^{t-1}\frac{\lambda^{t-1-j}}{(j+1)^k}\leq\frac{1}{t^k}\left(\lambda^{t-1}t^k+\lambda^{\frac{t}{2}-1}t+\frac{2^k}{\lambda\ln{\frac{1}{\lambda}}}\right)$, where $\lambda^{t-1}t^k\leq\frac{(k/e)^k}{\lambda(\ln{\frac{1}{\lambda}})^k}$, $\lambda^{\frac{t}{2}-1}t\leq\frac{2e^{-1}}{\lambda\ln{\frac{1}{\lambda}}}$. So we can define $C_{\lambda}=\frac{(k/e)^k}{\lambda(\ln{\frac{1}{\lambda}})^k}+\frac{2e^{-1}}{\lambda\ln{\frac{1}{\lambda}}}+\frac{2^k}{\lambda\ln{\frac{1}{\lambda}}}$ that $\sum_{j=0}^{t-1}\frac{\lambda^{t-1-j}}{(j+1)^k}\leq\frac{C_{\lambda}}{t^k}$. And for the case $k=1$ we can roughly consider the logarithm as constants so our discussion can be put together into the above category as $k=1$.
\end{proof}

\begin{remark}
    Considering function $h(\lambda)=\frac{1}{\lambda\ln{\frac{1}{\lambda}}}$, it monotonically decrease in interval $(0,\frac{1}{e})$ and monotonically increase in interval $(\frac{1}{e},1)$ with minimal value of $e$. Analogously for the function $g(\lambda)=\frac{1}{\lambda(\ln{\frac{1}{\lambda}})^k}$, which monotonically decrease in interval $(0,\frac{1}{e^k})$ and monotonically increase in interval $(\frac{1}{e^k},1)$ with minimal value of $(\frac{e}{k})^k$.Therefore, when $k=1$, $C_{\lambda}$ is bounded by $\mathcal{O}(\frac{1}{\lambda\ln{\frac{1}{\lambda}}})$; otherwise, when $\lambda\rightarrow1$, $C_{\lambda}$ is bounded by $\mathcal{O}(\frac{1}{\lambda\ln{\frac{1}{\lambda}}})$; when $\lambda\rightarrow0$, $C_{\lambda}$ is bounded by $\mathcal{O}(\frac{1}{\lambda(\ln{\frac{1}{\lambda})^k}})$.
\end{remark}
\section{Proof of the Connection}\label{appsubsec:connection}
In this section, we will first figure out a fundamental lemma that illustrates the structure of dataset in decentralized setting. Then we will provide proof for the connection between argument stability and primal-dual generalization gap.
 
\begin{lemma}\label{le:framework}
    Denoting dataset as $\S=\{\S_1,...,\S_m\}$, and denoting each local samples stored in $\S_i$ as $\S_i=\{\xi_{i,l_i}\}_{l_i=1,...,n} $. Then we have the decomposition equation of the empirical function:
    $$\FS(\x,\y)=\frac{1}{n^m}\sum_{l_1=1}^n...\sum_{l_m=1}^n\frac{1}{m}\sum_{i=1}^mf_i(\x,\y;\xi_{i,l_i})$$
\end{lemma}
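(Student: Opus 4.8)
The plan is to start from the definition of the empirical objective in Eq.~\eqref{eq:FS}, namely $\FS(\x,\y)=\frac{1}{m}\sum_{i=1}^m\frac{1}{n}\sum_{l_i=1}^n f_i(\x,\y;\xi_{i,l_i})$, and show that the $m$-fold multi-index average appearing on the right-hand side of the claim collapses to exactly this expression by a simple counting argument. The essential observation is that in the summand of the claimed identity, the term $f_i(\x,\y;\xi_{i,l_i})$ associated with agent $i$ depends only on the single index $l_i$ and is constant in all the other indices $l_j$ for $j\neq i$. Exploiting this factorization across agents is the whole content of the lemma.

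Concretely, I would first interchange the finite sums so that the agent-average $\frac{1}{m}\sum_{i=1}^m$ is pulled outside the multi-index sum:
\[
\frac{1}{n^m}\sum_{l_1=1}^n\cdots\sum_{l_m=1}^n\frac{1}{m}\sum_{i=1}^m f_i(\x,\y;\xi_{i,l_i})
=\frac{1}{m}\sum_{i=1}^m\frac{1}{n^m}\sum_{l_1=1}^n\cdots\sum_{l_m=1}^n f_i(\x,\y;\xi_{i,l_i}).
\]
Then, for each fixed $i$, I would evaluate the inner multi-index sum by summing out the $m-1$ indices $l_j$ with $j\neq i$ that do not appear in the summand; each such index ranges over $[n]$ and therefore contributes a multiplicative factor $n$, giving
\[
\sum_{l_1=1}^n\cdots\sum_{l_m=1}^n f_i(\x,\y;\xi_{i,l_i})
= n^{\,m-1}\sum_{l_i=1}^n f_i(\x,\y;\xi_{i,l_i}).
\]

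Substituting this back and cancelling $n^{\,m-1}/n^m = 1/n$ recovers $\frac{1}{m}\sum_{i=1}^m\frac{1}{n}\sum_{l_i=1}^n f_i(\x,\y;\xi_{i,l_i})=\FS(\x,\y)$, completing the proof. There is no real analytic difficulty here; the only point requiring care is the bookkeeping of the factor $n^{\,m-1}$, i.e., correctly recognizing that the summand factorizes over agents so that the sums over the ``inactive'' indices are purely combinatorial and each produce a factor $n$. I would state this step explicitly rather than leaving it implicit, since it is the one place where an off-by-one in the exponent of $n$ could creep in.
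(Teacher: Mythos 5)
Your proposal is correct and follows essentially the same argument as the paper: swap the agent-average with the multi-index sum, sum out the $m-1$ inactive indices to produce the factor $n^{m-1}$, and cancel against $1/n^m$ to recover $\frac{1}{m}\sum_{i=1}^m\frac{1}{n}\sum_{l_i=1}^n f_i(\x,\y;\xi_{i,l_i})=\FS(\x,\y)$. The only cosmetic difference is that the paper identifies the inner average as $F_{\S_i}(\x,\y)$ before collapsing the remaining sums, whereas you cancel the powers of $n$ directly.
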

\begin{proof}
    \begin{equation*}
        \begin{aligned}
            &\quad\frac{1}{n^m}\sum_{l_1=1}^n...\sum_{l_m=1}^n\frac{1}{m}\sum_{i=1}^m f_i(\x,\y;\xi_{i,l_i})\\
            &=\frac{1}{m}\sum_{i=1}^m\left(\frac{1}{n^{m-1}}\sum_{l_1=1}^n...\sum_{l_{i-1}=1}^n\sum_{l_{i+1}=1}^n...\sum_{l_m=1}^n\frac{1}{n}\sum_{l_i=1}^nf_i(\x,\y;\xi_{i,l_i})\right)\\
            &=\frac{1}{m}\sum_{i=1}^m\left(\frac{1}{n^{m-1}}\sum_{l_1=1}^n...\sum_{l_{i-1}=1}^n\sum_{l_{i+1}=1}^n...\sum_{l_m=1}^nF_{\S_i}(\x,\y)\right)\\
            &=\frac{1}{m}\sum_{i=1}^mF_{\S_i}(\x,\y)\\
            &=F_{\S}(\x,\y)
        \end{aligned}
    \end{equation*}
\end{proof}

\begin{proof}[Proof of Theorem \ref{thm:connection}]
   We let $\S=\{\S_1,...,\S_m\}$ where $\S_i=\{\xi_{i,l}\}_{1\leq l\leq n}$ and $\S'=\{\S'_1,...,\S'_m\}$ where $\S_i'=\{\xi'_{i,l}\}_{1\leq l\leq n}$ be two different datasets while $\S_i$ and $\S'_i$ are drawn from the same distribution $\D_i$. Further we define $\S^{(\bm{l})}=\{\S_1^{(l_1)},\S_2^{(l_2)},...,\S_m^{(l_m)}\}$ where $\bm{l}=(l_1,l_2,...,l_m)$ and $\S_i^{(l_i)}=\{\xi_{i,1},...,\xi_{i,l_i-1},\xi'_{i,l_i},\xi_{i,l_i+1},...,\xi_{i,n}\}$. So we can say that there is at most one different sample in each local dataset between $\S$ and $\S^{(\bm{l})}$. For function $F:\X\times\Y\mapsto \R$, we denote $\y^*(\x)=arg\max_{\y\in\Y}F(\x,\y)$.

    For case \ref{thm:connectionweak} the weak primal-dual generalization gap $\epsilon^w_{gen}(\A_{\x}(\S),\A_{\y}(\S))=\Delta^w(\A_{\x}(\S),\A_{\y}(\S))-\Delta^w_{\S}(\A_{\x}(\S),\A_{\y}(\S))$.
    We first make some adjustments to the definition:
    
    \begin{equation*}
        \begin{aligned}
        \small
            &\quad\quad\epsilon^w_{gen}(\A_{\x}(\S),\A_{\y}(\S))\\
            &=\!\!(\!\sup_{\y'\!\in\Y}\!\E[F(\A_{\x}(\S),\y')]\!\!-\!\!\!\inf_{\x'\!\in\X}\!\!\E[F(\x'\!,\A_{\y}(\S))])\!\!-\!\!(\!\sup_{\y'\!\in\Y}\!\E[\FS(\A_{\x}(\S)),\y')]\!\!-\!\!\inf_{\x'\!\in\X}\!\! 
           \E[\FS(\x'\!,\A_{\y}(\S))])\\
            &=\!\!\sup_{\y'\!\in\Y}\!\E[F(\A_{\x}(\S),\y')]\!-\!\sup_{\y'\!\in\Y}\!\E[\FS(\A_{\x}(\S)),\y')]\!+\!\inf_{\x'\!\in\X}\!\!\E[\FS(\x'\!,\A_{\y}(\S))]\!-\!\inf_{\x'\!\in\X}\!\!\E[F(\x'\!,\A_{\y}(\S))]\\
            &\leq\!\sup_{\y'\in\Y}\!\left(\E_{\A,\S}[F(\A_{\x}(\S),\y')\!-\!\FS(\A_{\x}(\S),\y')]\right)\!+\!\sup_{\x'\in\X}\!\left(\E_{\A,\S}[\FS(\x'\!,\A_{\y}(\S))\!-\!F(\x'\!,\A_{\y}(\S))]\right)
        \end{aligned}
    \end{equation*}
    where the expectation can be detailed into $\E_{\A,\S}$ in this situation.\\
    For the first term, we have
    \begin{equation*}
        \begin{aligned}
            &\quad\E_{\A,\S}[F(\A_{\x}(\S),\y')-\FS(\A_{\x}(\S),\y')]\\
            &=\frac{1}{n^m}\sum_{l_1=1}^n...\sum_{l_m=1}^n\E_{\A,\S,\S'}[F(\A_{\x}(\S^{(\bm{l})}),\y')]-\E_{\A,\S,\S'}[\FS(\A_{\x}(\S),\y')]\\
            &=\frac{1}{n^m}\sum_{l_1=1}^n...\sum_{l_m=1}^n\E_{\A,\S,\S'}[\frac{1}{m}\sum_{i=1}^m f_i(\A_{\x}(\S^{\bm{l}}),\y';\xi_{i,l_i})-\frac{1}{m}\sum_{i=1}^mf_i(\A_{\x}(\S),\y';\xi_{i,l_i})]\\
            &\leq\frac{1}{n^m}\sum_{l_1=1}^n...\sum_{l_m=1}^n\E_{\A,\S,\S'}[\frac{1}{m}\sum_{i=1}^m G\|\A_{\x}(\S^{\bm{l}})-\A_{\x}(\S)\|]\\
            &=G\E_{\A,\S,\S'}\|\A_{\x}(\S^{\bm{l}})-\A_{\x}(\S)\|
        \end{aligned}
    \end{equation*}
    where the first equation is due to the symmetric distribution between $\S_i$ and $\S'_i$ and there are overall $n^m$ permutations of $\bm{l}$. And the second equation is due to the independence of sample $\xi_{i,l_i}$ from the training process upon dataset $\S^{\bm{l}}$ and Lemma \ref{le:framework}.\\
    And we can analyze the second term in a similar way:
    $$\E_{\A,\S}[\FS(\x',\A_{\y}(\S))-F(\x',\A_{\y}(\S))]\leq G\E_{\A,\S,\S'}\|\A_{\y}(\S)-\A_{\y}(\S^{\bm{l}})\|$$
    Therefore we can get the weak primal-dual generalization gap that:
    $$\epsilon^w_{gen}(\A_{\x}(\S),\A_{\y}(\S))\leq\sqrt{2}G\sup_{\S,\S'}\E_{\A}\left\|\begin{array}{c}
        \A_{\x}(\S)-\A_{\x}(\S^{\bm{l}})   \\
        \A_{\y}(\S)-\A_{\y}(\S^{\bm{l}})
    \end{array}\right\|\leq\sqrt{2}G\epsilon$$
    For case \ref{thm:connectionstrong} the strong primal-dual generalization gap $\epsilon^s_{gen}(\A_{\x}(\S),\A_{\y}(\S))=\Delta^s(\A_{\x}(\S),\A_{\y}(\S))-\Delta^s_{\S}(\A_{\x}(\S),\A_{\y}(\S))$.\\
    Observing the structure that:
    \begin{equation}\label{eq:strongpdgengap}
        \begin{aligned}
            &\quad\quad \epsilon^s_{gen}(\A_{\x}(\S),\A_{\y}(\S))\\
            &=\E[\sup_{\y'\in\Y}F(\A_{\x}(\S),\y')-\!\inf_{\x'\in\X}F(\x',\A_{\y}(\S))]-\E[\sup_{\y'\in\Y}\FS(\A_{\x}(\S),\y')-\!\inf_{\x'\in\X}\FS(\x',\A_{\y}(\S))]\\
            &=\E_{\A,\S}[\sup_{\y'\!\in\Y}\!F(\A_{\x}(\S),\y')\!-\!\sup_{\y'\!\in\Y}\!\FS(\A_{\x}(\S),\y')]\!+\!\E_{\A,\S}[\!\inf_{\x'\!\in\X}\!\!\FS(\x'\!,\A_{\y}(\S))\!-\!\!\!\inf_{\x'\!\in\X}\!\!F(\x',\A_{\y}(\S))]
        \end{aligned}
    \end{equation}
    where the expectation can be detailed into $\E_{\S,\A}$ in this situation.

    For the first term,
    \begin{equation*}
        \begin{aligned}
            \E_{\A,\S}[\sup_{\y'\in\Y}F(\A_{\x}(\S),\y')]&\overset{(i)}{=}\frac{1}{n^m}\sum_{l_1=1}^n\sum_{l_2=1}^n...\sum_{l_m=1}^n\E_{\A,\S,\S'}[\sup_{\y'\in\Y}F(\A_{\x}(\S^{(\bm{l})}),\y')]\\&\overset{(ii)}{=}\frac{1}{n^m}\sum_{l_1=1}^n\sum_{l_2=1}^n...\sum_{l_m=1}^n\E_{\A,\S,\S'}[ F(\A_{\x}(\S^{(\bm{l})}),\y^*_{\S^{(\bm{l})}})]\\&\overset{(iii)}{=}\frac{1}{n^m}\sum_{l_1=1}^n\sum_{l_2=1}^n...\sum_{l_m=1}^n\E_{\A,\S,\S'}[\frac{1}{m}\sum_{i=1}^m f_i(\A_{\x}(\S^{(\bm{l})}),\y^*_{\S^{(\bm{l})}};\xi_{i,l_i})]
        \end{aligned}
    \end{equation*}
    
    Since $\S_i$ and $\S_i'$ are drawn from the same distribution, and there are $n^m$ permutations of $\bm{l}=\{l_1,...,l_m\}$ so we can get equality $(i)$. In equality $(ii)$, we denote $\y^*_{\S^{(l)}}$ as $arg\max_{y\in\Y}F(\A_{\x}(\S),\y)$. While for equality $\small(iii)$, $\xi_{i,l_i}$ is independent from the training process under dataset $\S^{\bm{l}}$ for each local agent respectively.
    
    Using the property of Lipschitz continuous (see Assumption \ref{assp:continuous}), we can further get:
    \begin{equation*}
        \begin{aligned}
            f_i(\A_{\x}(\S^{(\bm{l})}),\y^*_{\S^{(\bm{l})}};\xi_{i,l_i})-f_i(\A_{\x}(\S),\y^*_{\S};\xi_{i,l_i})&\leq G\left\|\left(\begin{array}{c}
                \A_{\x}(\S^{(\bm{l})})-\A_{\x}(\S)   \\
                \y^*_{\S^{(\bm{l})}}-\y^*_{\S} 
            \end{array}\right)\right\|\\
            &\overset{(a)}{\leq}G\sqrt{1+\frac{L^2}{\mu^2_{\y}}}\|\A_{\x}(\S^{\bm{l}})-\A_{\x}(\S)\|
        \end{aligned}
    \end{equation*}
    where inequality $(a)$ is a conclusion of Lemma 4.3 in \cite{lin2020gradient} that $\|\y^*_{\S^{(\bm{l})}}-\y^*_{\S}\|\leq\frac{L}{\mu_{\y}}\|\A_{\x}(\S^{(\bm{l})})-\A_{\x}(\S)\|$.\\
    Combining above two inequalities, we can get:
    \begin{equation*}
        \begin{aligned}
            &\quad\E_{\A,\S}[\sup_{\y'\in\Y}F(\A_{\x}(\S),\y')]\\
            &\leq\frac{1}{n^m}\sum_{l_1=1}^n\sum_{l_2=1}^n...\!\!\sum_{l_m=1}^n\!\E_{\A,\S,\S'}\!\!\left[\frac{1}{m}\sum_{i=1}^m \!\left(\!f_i(\A_{\x}(\S),\y^*_{\S};\xi_{i,l_i})+G\!\sqrt{1+\frac{L^2}{\mu^2_{\y}}}\|\A_{\x}(\S^{\bm{l}})-\A_{\x}(\S)\|\!\right)\!\right]\\
            &=\E_{\A,\S}\left[F_{\S}(\A_{\x}(\S),\y^*_{\S})\right]+G\sqrt{1+\frac{L^2}{\mu^2_{\y}}}\E_{\A}[\sup_{\S,\S'}\|\A_{\x}(\S^{\bm{l}})-\A_{\x}(\S)\|]\\
            &\leq\E_{\A,\S}\left[\sup_{\y'\in\Y}F_{\S}(\A_{\x}(\S),\y')\right]+G\sqrt{1+\frac{L^2}{\mu^2_{\y}}}\E_{\A}[\sup_{\S,\S'}\|\A_{\x}(\S^{\bm{l}})-\A_{\x}(\S)\|]
        \end{aligned}
    \end{equation*}
    where we get the last but one inequality from Lemma \ref{le:framework}.
    
    We can do a similar operation on the second counterpart since $f_i$ is also $\mu_{\x}$-strongly convex on $\x$.
    $$\E_{\A,\S}[\inf_{\x'\in\X}\FS(\x',\A_{\y}(\S))-\inf_{\x'\in\X}F(\x',\A_{\y}(\S))]\leq G\sqrt{1+\frac{L^2}{\mu_{\x}^2}}\E_{\A}[\sup_{\S,\S'}\|\A_{\y}(\S^{\bm{l}})-\A_{\y}(\S)\|]$$
    So the overall strong primal-dual generalization gap satisfies:
    \begin{equation*}
        \begin{aligned}
            &\quad\quad\epsilon^s_{gen}(\A_{\x}(\S),\A_{\y}(\S))\\
            &\leq G\sqrt{1+\frac{L^2}{\mu^2}}\sqrt{2}\E_{\A}\left[\sup_{\S,\S'}\left\|\begin{array}{c}
        \A_{\x}(\S)-\A_{\x}(\S^{\bm{l}})   \\
        \A_{\y}(\S)-\A_{\y}(\S^{\bm{l}})
    \end{array}\right\|\right]\\
        &\leq G\sqrt{2+\frac{2L^2}{\mu^2}}\epsilon   
        \end{aligned}
    \end{equation*}
    where $\mu=\min\{\mu_{\x},\mu_{\y}\}$.
\end{proof}
\section{Proof in the Strongly-Convex-Strongly-Concave Case}\label{appsec:scsc}

In this section, we will prove the stability and generalization gap of our D-SGDA in the case of $\mu_{\x}$-strongly convex and $\mu_{\y}$-strongly concave.

\subsection{Proof of Stability}\label{subsec:ccstability}
\begin{proof}[Proof of Theorem \ref{thm:cc:stability}]\label{pf:ccstability}
    We use $(\x^t,\y^t)$ and $(\xdot^t,\ydot^t)$ to represent the output in the $t$-th iteration when applying D-SGDA on any arbitrary neighbouring dataset $\S$ and $\S'$ respectively. Since the D-SGDA algorithm is symmetric \cite{bousquet2002stability} with respect to the dataset $\S=\{\S_1,...,\S_m\}$, we can assume the different samples appear in the last location in each $\S_i$ without loss of generalization, i.e., $\S'=\{\S_1',...,\S_{m-1}',\S'_m\}$ where $\S'_i=\{\xi_{i,1},\xi_{i,2},...,\xi_{i,n-1},\xi'_{i,n}\}$ differs from $\S_i=\{\xi_{i,1},...,\xi_{i,n}\}$ in the $n$-th data.
    
    First concentrating on the iteration, we have:
    \begin{equation*}
        \begin{aligned}
            \left(\begin{array}{c}
                \x^{t+1} \\
                \y^{t+1} 
            \end{array}\right)&=\left([\bm{X},\bm{Y}]^{t+1}\right)^T\frac{\mathds{1}_m}{m}\\
            &=P_{(\X,\Y)^m}\left[\left(\W[\bm{X},\bm{Y}]^t-\left(\begin{array}{c}
                \eta_{\x,t}\dx\f(\bm{X}^t,\bm{Y}^t;\bm{\xi}^t)  \\
                -\eta_{\y,t}\dy\f(\bm{X}^t,\bm{Y}^t;\bm{\xi}^t) 
            \end{array}\right)\right)^T\right]\frac{\mathds{1}_m}{m}\\
            &=P_{(\X,\Y)}\left[\left(\W[\bm{X},\bm{Y}]^t-\left(\begin{array}{c}
                \eta_{\x,t}\dx\f(\bm{X}^t,\bm{Y}^t;\bm{\xi}^t)  \\
                -\eta_{\y,t}\dy\f(\bm{X}^t,\bm{Y}^t;\bm{\xi}^t) 
            \end{array}\right)\right)^T\frac{\mathds{1}_m}{m}\right]\\
            &=P_{(\X,\Y)}\left[\left([\bm{X},\bm{Y}]^t\right)^T\W^T\frac{\mathds{1}_m}{m}-\left(\begin{array}{c}
                \eta_{\x,t}\dx\f(\bm{X}^t,\bm{Y}^t;\bm{\xi}^t)  \\
                -\eta_{\y,t}\dy\f(\bm{X}^t,\bm{Y}^t;\bm{\xi}^t) 
            \end{array}\right)^T\frac{\mathds{1}_m}{m}\right]\\
            &=P_{(\X,\Y)}\left[\left(\begin{array}{c}
                \x^t \\
                \y^t 
            \end{array}\right)-\left(\begin{array}{c}
                \eta_{\x,t}\dx\f(\bm{X}^t,\bm{Y}^t;\bm{\xi}^t)  \\
                -\eta_{\y,t}\dy\f(\bm{X}^t,\bm{Y}^t;\bm{\xi}^t) 
            \end{array}\right)^T\frac{\mathds{1}_m}{m}\right]\\
            &=P_{(\X,\Y)}\left[\left(\begin{array}{c}
                \x^t \\
                \y^t 
            \end{array}\right)-\frac{1}{m}\sum_{i=1}^m\left(\begin{array}{c}
                \eta_{\x,t}\dx f_i(\x^t_i,\y^t_i;\xi_{i,j_t(i)}) \\
                -\eta_{\y,t}\dy f_i(\x^t_i,\y^t_i;\xi_{i,j_t(i)})
            \end{array}\right)\right]
        \end{aligned}
    \end{equation*}
    where in the first equality we use the definition that $\x^t=\frac{1}{m}\sum_{i=1}^m\x^t_i$, $\y^t=\frac{1}{m}\sum_{i=1}^m\y^t_i$, and the last equality is due to the fact that $\W\frac{\mathds{1}_m}{m}=\frac{\mathds{1}_m}{m}$. We use the notation $P_{(\X,\Y)}\left[\left(\begin{array}{c}
        \x  \\
        \y 
    \end{array}\right)\right]\in\R^{d_{\x}+d_{\y}}$ as an abbreviation of $\left(\begin{array}{c}
        P_{\X}[\x] \\
        P_{\Y}[\y]
    \end{array}\right)\in\R^{d_{\x}+d_{\y}}$, and $P_{(\X,\Y)^m}[[\bm{X},\bm{Y}]^T]$ means every column should obey $P_{(\X,\Y)}\left[\left(\begin{array}{c}
        \x  \\
        \y 
    \end{array}\right)\right]$.

    In the $t$-th iteration, there is a probability of $C_m^{m_0}(1-\frac{1}{n})^{m-{m_0}}(\frac{1}{n})^{m_0}$ that there are $m-{m_0}$ agents not selecting the last different samples while the rest ${m_0}$ agents selecting exactly the last different samples. Without loss of generalization, we assume the first $m-{m_0}$ not selecting and the rest ${m_0}$ agents selecting the different samples respectively. So we can get:
    \begin{equation*}
        \begin{aligned}
            &\quad\quad\left(\begin{array}{c}
        \x^{t+1}-\xdot^{t+1}\\
        \y^{t+1}-\ydot^{t+1}
        \end{array}
        \right)\\
        &=P_{\X,\Y}\!\left[\left(\!\!\begin{array}{c}
            \x^t-\xdot^t\\
            \y^t-\ydot^t
        \end{array}\!\!\right)\!-\!\frac{1}{m}\!\sum_{i=1}^m\!\left(\!\!\!\begin{array}{c}
            \eta_{\x,t}(\dx f_i(\x^t_i,\y^t_i;\xi_{i,j_t(i)})\!-\!\dx f_i(\xdot^t_i,\ydot^t_i;\xi_{i,j_t(i)})) \\
            -\eta_{\y,t}\left(\dy f_i(\x^t_i,\y^t_i;\xi_{i,j_t(i)})\!-\!\dy f_i(\xdot^t_i,\ydot^t_i;\xi_{i,j_t(i)})\right)
        \end{array}\!\!\!\right)\right]\\
        &\overset{(i)}{=}P_{\X,\Y}\!\left[\left(\!\!\begin{array}{c}
            \x^t-\xdot^t\\
            \y^t-\ydot^t
        \end{array}\!\!\!\right)\!-\frac{1}{m}\left[\sum_{i=1}^{m-{m_0}}\!\!\left(\!\!\!\begin{array}{c}
        \eta_{\x,t}\nabla_{\x}f_i(\x^t_i,\y^t_i;\xi_{i,j_t(i)})-\eta_{\x,t}\nabla_{\x}f_i(\xdot^t_i,\ydot^t_i;\xi_{i,j_t(i)})\\
            -\eta_{\y,t}\nabla_{\y}f_i(\x^t_i,\y^t_i;\xi_{i,j_t(i)})+\eta_{\y,t}\nabla_{\y}f_i(\xdot^t_i,\ydot^t_i;\xi_{i,j_t(i)})
        \end{array}\!\!\!\right)\right.\right.\\ 
        &\left.\left.+\sum_{i=m-{m_0}+1}^m\left(\begin{array}{c}
            \eta_{\x,t}\nabla_{\x}f_i(\x^t_i,\y^t_i;\xi_{i,n})-\eta_{\x,t}\nabla_{\x}f_i(\xdot^t_i,\ydot^t_i;\xi'_{i,n})\\
            -\eta_{\y,t}\nabla_{\y}f_i(\x^t_i,\y^t_i;\xi_{i,n})+\eta_{\y,t}\nabla_{\y}f_i(\xdot^t_i,\ydot^t_i;\xi'_{i,n})
        \end{array}\right)\right]\right]
        \end{aligned}
    \end{equation*}
    where equality $(i)$ is due to the setting that agents from $1$ to $m-{m_0}$ select $\xi_{i,j_t(i)}$ which is not the last different one while the rest agents from $m-{m_0}+1$ to $m$ select the last different samples, i.e., $\xi_{i,n}$ and $\xi'_{i,n}$ respectively.

    Then we can decompose the term inside the projection into several parts as following:
    \begin{equation}\label{eq:decomposition}
        \begin{aligned}
        &\quad\quad\quad\underbrace{\frac{1}{m}\sum_{i=1}^{m-m_0}\left(\begin{array}{c}
            \x^t-\eta_{\x,t}\nabla_{\x}f_i(\x^t,\y^t;\xi_{i,j_t(i)})-\left(\xdot^t-\eta_{\x,t}\nabla_{\x}f_i(\xdot^t,\ydot^t;\xi_{i,j_t(i)})\right)\\\y^t+\eta_{\y,t}\nabla_{\y}f_i(\x^t,\y^t;\xi_{i,j_t(i)})-\left(\ydot^t+\eta_{\y,t}\nabla_{\y}f_i(\xdot^t,\ydot^t;\xi_{i,j_t(i)})\right)  
        \end{array}
        \right)}_{\Rone_1}\\
        &\quad\quad+\underbrace{\frac{1}{m}\sum_{i=m-{m_0}+1}^m\left(\begin{array}{c}
            \x^t-\eta_{\x,t}\nabla_{\x}f_i(\x^t,\y^t;\xi_{i,n})-\left(\xdot^t-\eta_{\x,t}\nabla_{\x}f_i(\xdot^t,\ydot^t;\xi'_{i,n})\right)\\
            \y^t+\eta_{\y,t}\nabla_{\y}f_i(\x^t,\y^t;\xi_{i,n})-\left(\ydot^t+\eta_{\y,t}\nabla_{\y}f_i(\xdot^t,\ydot^t;\xi'_{i,n})\right)
        \end{array}\right)}_{\Rone_2}\\
        &\quad\quad+\underbrace{\frac{1}{m}\sum_{i=1}^{m-{m_0}}\left(
        \begin{array}{c}
            -\eta_{\x,t}\left(\nabla_{\x}f_i(\x^t_i,\y^t_i;\xi_{i,j_t(i)})-\nabla_{\x}f_i(\x^t,\y^t;\xi_{i,j_t(i)})\right) \\
            \eta_{\y,t}\left(\nabla_{\y}f_i(\x^t_i,\y^t_i;\xi_{i,j_t(i)})-\nabla_{\y}f_i(\x^t,\y^t;\xi_{i,j_t(i)})\right) 
        \end{array}
        \right)}_{\Rtwo_1}\\
        &\quad\quad-\underbrace{\frac{1}{m}\sum_{i=1}^{m-{m_0}}\left(
        \begin{array}{c}
            -\eta_{\x,t}\left(\nabla_{\x}f_i(\xdot^t_i,\ydot^t_i;\xi_{i,j_t(i)})-\nabla_{\x}f_i(\xdot^t,\ydot^t;\xi_{i,j_t(i)})\right) \\
            \eta_{\y,t}\left(\nabla_{\y}f_i(\xdot^t_i,\ydot^t_i;\xi_{i,j_t(i)})-\nabla_{\y}f_i(\xdot^t,\ydot^t;\xi_{i,j_t(i)})\right) 
        \end{array}
        \right)}_{\Rtwo_2}\\
        &\quad\quad+\underbrace{\frac{1}{m}\sum_{i=m-{m_0}+1}^m\left(
        \begin{array}{c}
            -\eta_{\x,t}\left(\nabla_{\x}f_i(\x^t_i,\y^t_i;\xi_{i,n})-\nabla_{\x}f_i(\x^t,\y^t;\xi_{i,n})\right)\\
            \eta_{\y,t}\left(\nabla_{\y}f_i(\x^t_i,\y^t_i;\xi_{i,n})-\nabla_{\y}f_i(\x^t,\y^t;\xi_{i,n})\right) 
        \end{array}
        \right)}_{\Rtwo'_1}\\
        &\quad\quad-\underbrace{\frac{1}{m}\sum_{i=m-m_0+1}^m\left(
        \begin{array}{c}
            -\eta_{\x,t}\left(\nabla_{\x}f_i(\xdot^t_i,\ydot^t_i;\xi'_{i,n})-\nabla_{\x}f_i(\xdot^t,\ydot^t;\xi'_{i,n})\right) \\
            \eta_{\y,t}\left(\nabla_{\y}f_i(\xdot^t_i,\ydot^t_i;\xi'_{i,n})-\nabla_{\y}f_i(\xdot^t,\ydot^t;\xi'_{i,n})\right) 
        \end{array}
        \right)}_{\Rtwo'_2}
        \end{aligned}
    \end{equation}
    According to Lemma \ref{le:expansive} and the preoperty of Lipschitz continuity (see Assumption \ref{assp:continuous}), we can bound the term $\Rone_1$ as:
    $$\|\Rone_1\|\leq\frac{m-{m_0}}{m}(1-\eta_t^{min}\frac{L\mu}{L+\mu})\left\|\left(\begin{array}{c}
        \x^t-\xdot^t  \\
        \y^t-\ydot^t 
    \end{array}\right)\right\|$$
    where $\eta_t^{min}\triangleq\min\{\eta_{\x,t},\eta_{\y,t}\}$, $\eta_t^{max}\triangleq\max\{\eta_{\x,t},\eta_{\y,t}\}$, and $\mu\triangleq\min\{\mu_{\x},\mu_{\y}\}$.\\
    For the term $\Rone_2$, it is bounded by:
    \begin{equation*}
        \begin{aligned}
            \|\Rone_2\|&\leq\frac{1}{m}\sum_{i=m-m_0+1}^m\left\|\left(\begin{array}{c}
            \x^t-\eta_{\x,t}\nabla_{\x}f_i(\x^t,\y^t;\xi_{i,n})-\left(\xdot^t-\eta_{\x,t}\nabla_{\x}f_i(\xdot^t,\ydot^t;\xi'_{i,n})\right)\\
            \y^t+\eta_{\y,t}\nabla_{\y}f_i(\x^t,\y^t;\xi_{i,n})-\left(\ydot^t+\eta_{\y,t}\nabla_{\y}f_i(\xdot^t,\ydot^t;\xi'_{i,n})\right)
        \end{array}\right)\right\|\\
        &\leq\frac{1}{m}\sum_{i=m-m_0+1}^m\left[\left\|\left(\begin{array}{c}
            \x^t-\eta_{\x,t}\nabla_{\x}f_i(\x^t,\y^t;\xi_{i,n})-\left(\xdot^t-\eta_{\x,t}\nabla_{\x}f_i(\xdot^t,\ydot^t;\xi_{i,n})\right)\\
            \y^t+\eta_{\y,t}\nabla_{\y}f_i(\x^t,\y^t;\xi_{i,n})-\left(\ydot^t+\eta_{\y,t}\nabla_{\y}f_i(\xdot^t,\ydot^t;\xi_{i,n})\right)
        \end{array}\right)\right\|
        \right.\\
        &\left.\quad\quad\quad\quad\quad\quad+\left\|\left(\begin{array}{c}
            \eta_{\x,t}\nabla_{\x}f_i(\xdot^t,\ydot^t;\xi'_{i,n})-\eta_{\x,t}\nabla_{\x}f_i(\xdot^t,\ydot^t;\xi_{i,n})\\
            \eta_{\y,t}\nabla_{\y}f_i(\xdot^t,\ydot^t;\xi_{i,n})-\eta_{\y,t}\nabla_{\y}f_i(\xdot^t,\ydot^t;\xi'_{i,n})
        \end{array}\right)\right\|\right]\\
        &\leq\frac{m_0}{m}\left[(1-\eta_t^{min}\frac{L\mu}{L+\mu})\left\|\left(\begin{array}{c}
        \x^t-\xdot^t  \\
        \y^t-\ydot^t 
    \end{array}\right)\right\|+2\eta^{max}_tG\right]
        \end{aligned}
    \end{equation*}

    For the term $\Rtwo_1$ and $\Rtwo_1'$, 
    \begin{equation*}
        \begin{aligned}
            \|\Rtwo_1+\Rtwo_1'\|&\leq\frac{1}{m}\sum_{i=1}^{m-{m_0}}\left[\eta_t^{max}\left\|\left(\begin{array}{c}
                \dx f_i(\x^t_i,\y^t_i;\xi_{i,j_t(i)})-\dx f_i(\x^t,\y^t;\xi_{i,j_t(i)})  \\
                \dy f_i(\x^t,\y^t;\xi_{i,j_t(i)})-\dy f_i(\x_i^t,\y_i^t;\xi_{i,j_t(i)}) 
            \end{array}\right)\right\|\right]\\
            &+\frac{1}{m}\sum_{i=m-{m_0}+1}^{m}\left[\eta_t^{max}\left\|\left(\begin{array}{c}
                \dx f_i(\x^t_i,\y^t_i;\xi_{i,n})-\dx f_i(\x^t,\y^t;\xi_{i,n})  \\
                \dy f_i(\x^t,\y^t;\xi_{i,n})-\dy 
f_i(\x_i^t,\y_i^t;\xi_{i,n}) 
            \end{array}\right)\right\|\right]\\
            &\leq \frac{1}{m}\left[\sum_{i=1}^m\eta_t^{max}L\left\|\left(\begin{array}{c}
                \x_i^t-\x^t  \\
                \y^t_i-\y^t 
            \end{array}\right)\right\| \right]\\
            &\leq\frac{\eta_t^{max}L}{m}\sum_{i=1}^m\left\|\left(\begin{array}{c}
                \x^t_i-\x^t   \\
                \y^t_i-\y^t
            \end{array}\right)\right\|\\
            &\leq\frac{\eta_t^{max}L}{\sqrt{m}}\left[\sum_{i=1}^m\|\x^t_i-\x^t\|^2+\|\y^t_i-\y^t\|^2\right]^{1/2}\\
            &\leq2\eta_t^{max}LG\sum_{s=0}^{t-1}\eta_s^{max}\lambda^{t-1-s}
        \end{aligned}
    \end{equation*}
    And it is the same as the term $\Rtwo_2+\Rtwo_2'$ that $\|\Rtwo_2+\Rtwo'_2\|\leq2\eta_t^{max}LG\sum_{s=0}^{t-1}\eta_s^{max}\lambda^{t-1-s}$.\\
    Taking expectation on the choice of each local sample, we can get:
    \begin{equation}\label{eq:recursioncc}
        \begin{aligned}
            &\quad\E_{\A}\left\|\left(\begin{array}{c}
                \x^{t+1}-\xdot^{t+1}   \\
                \y^{t+1}-\ydot^{t+1}
            \end{array}\right)\right\|\\
            &\leq\sum_{m_0=0}^m C_{m}^{m_0}(1-\frac{1}{n})^{m-m_0}(\frac{1}{n})^{m_0}\left(\left(1-\eta_t^{min}\frac{L\mu}{L+\mu}\right)\E_{\A}\left\|\left(\begin{array}{c}
                \x^t-\xdot^t\\
                \y^t-\ydot^t
            \end{array}\right)\right\|+\frac{m_0}{m}2\eta_t^{max}G\right.\\
            &\quad\left.+4\eta_t^{max}LG\sum_{s=0}^{t-1}\eta_s^{max}\lambda^{t-1-s}\right)\\&
            \leq\left(1-\eta_t^{min}\frac{L\mu}{L+\mu}\right)\E_{\A}\left\|\left(\begin{array}{c}
                \x^t-\xdot^t\\
                \y^t-\ydot^t
            \end{array}\right)\right\|+4\eta_t^{max}LG\sum_{s=0}^{t-1}\eta_s^{max}\lambda^{t-1-s}\\
            &\quad+\sum_{m_0=0}^m C_{m}^{m_0}(1-\frac{1}{n})^{m-m_0}(\frac{1}{n})^{m_0}\frac{m_0}{m}2\eta_t^{max}G\\
            &=\left(1-\eta_t^{min}\frac{L\mu}{L+\mu}\right)\E_{\A}\left\|\left(\begin{array}{c}
                \x^t-\xdot^t\\
                \y^t-\ydot^t
            \end{array}\right)\right\|+4\eta_t^{max}LG\sum_{s=0}^{t-1}\eta_s^{max}\lambda^{t-1-s}+\frac{2\eta_t^{max}G}{n}
        \end{aligned}
    \end{equation}
    where the last inequality is due to the property of binomial coefficient that $C_m^{m_0}\cdot\frac{m_0}{m}=C_{m-1}^{m_0-1}$.\\
    Recursively applying the above inequality, we can get:
    \begin{equation}\label{eq:recursion}
        \begin{aligned}
            &\quad\quad\E_{\A}\left\|\left(\begin{array}{c}
        \A_{\x}(\S)-\A_{\x}(\S')\\
        \A_{\y}(\S)-\A_{\y}(\S')
        \end{array}
        \right)\right\|\\
        &\leq\frac{2G}{n}\sum_{k=0}^{T-1}\eta_k^{max}\!\!\!\!\prod_{s=k+1}^{T-1}\!\!\!(1-\eta_s^{min}\!\frac{L\mu}{L+\mu})\!+\!4GL\!\sum_{k=1}^{T-1}\!\!\left(\!\eta^{max}_k\!\sum_{s=0}^{k-1}\eta^{max}_s\lambda^{k-1-s}\!\right)\!\!\!\prod_{j=k+1}^{T-1}\!\!\!(1\!-\eta^{min}_j\frac{L\mu}{L+\mu})
        \end{aligned}
    \end{equation}
    where the initial difference is zero that $\bm{X}^0=0$ and $\bm{Y}^0=0$. And we use the last iterate as the output of Algorithm $\A$, i.e., $(\A_{\x}(\S),\A_{\y}(\S))=(\x^T,\y^T)$.
    
    For case \ref{thm:stability:fixed} when $\eta_{\x,t}$ and $\eta_{\y,t}$ are fixed, we further get:
    \begin{equation*}
        \begin{aligned}
        \small\epsilon_{sta}^{arg}(\!\A)\!&\leq\!\!\frac{2G}{n}\!\sum_{k=0}^{T\!-1}\!\eta^{max}\!(1\!\!-\!\eta^{min}\!\frac{L\mu}{L\!+\!\mu}\!)^{T\!-k\!-\!1}\!\!+\!4GL\!\!\sum_{k=1}^{T-1}\!\!\left(\!\!\eta^{max}\!\!\sum_{s=0}^{k-1}\!\eta^{max}\!\lambda^{k\!-\!1\!-\!s}\!\!\!\right)\!\!(1\!\!-\!\eta^{min}\!\frac{L\mu}{L\!+\!\mu})^{T\!-k-\!1}\\
            &\leq\left(4\eta^{max}GL\eta^{max}\frac{1}{1-\lambda}+\frac{2\eta^{max} G}{n}\right)\sum_{k=0}^{T-1}(1-\eta^{min}\frac{L\mu}{L+\mu})^k\\
            &\leq2G\frac{L+\mu}{\eta^{min}L\mu}(\frac{2(\eta^{max})^2L}{1-\lambda}+\frac{\eta^{max}}{n})
        \end{aligned}
    \end{equation*}
    Further when $\eta^{max}=\eta^{min}\triangleq\eta$, the argument stability turns out to be $$\epsilon_{sta}^{arg}(\A)\leq2G\frac{L+\mu}{L\mu}(\frac{2\eta L}{1-\lambda}+\frac{1}{n})$$\\
    For case \ref{thm:stability:varying} when $\eta^{max}_t=\frac{1}{\mu(t+1)^c}$ and $\eta^{min}_t=\frac{1}{\mu(t+1)}, c\leq 1$. Since $1\pm a\leq\exp{a}$, we have:
    \begin{equation*}
        \begin{aligned}
            \prod_{j=k+1}^{T-1}(1-\eta_j^{min}\frac{L\mu}{L+\mu})&=\prod_{j=k+1}^{T-1}(1-\frac{1}{j+1}\frac{L}{L+\mu})\\
            &\leq\prod_{j=k+1}^{T-1}\exp{-\frac{1}{j+1}\frac{L}{L+\mu}}\\
            &=\exp{\sum_{j=k+1}^{T-1}-\frac{1}{j+1}\frac{L}{L+\mu}}\\
            &\leq\exp{-\frac{L}{L+\mu}(\ln{T}-\ln{(k+1)})}\\
            &=(\frac{k+1}{T})^{\frac{L}{L+\mu}}
        \end{aligned}
    \end{equation*}
    Then back to inequality \eqref{eq:recursion}, we can simplify it as:
    \begin{equation*}
        \begin{aligned}
            &\quad\E_{\A}\left\|\left(\begin{array}{c}
        \A_{\x}(\S)-\A_{\x}(\S')\\
        \A_{\y}(\S)-\A_{\y}(\S')
        \end{array}
        \right)\right\|\\
        &\leq\frac{2G}{n}\sum_{k=0}^{T-1}\frac{1}{\mu(k+1)^c}(\frac{k+1}{T})^{\frac{L}{L+\mu}}
        +4GL\sum_{k=1}^{T-1}\left(\frac{1}{\mu(k+1)^c}\sum_{s=0}^{k-1}\frac{1}{\mu(s+1)^c}\lambda^{k-1-s}\right)(\frac{k+1}{T})^{\frac{L}{L+\mu}}\\
        &\leq\frac{2G}{\mu nT^{\frac{L}{L+\mu}}}\sum_{k=0}^{T-1}\frac{1}{(k+1)^{c-\frac{L}{L+\mu}}}+\frac{4GL}{\mu^2T^{\frac{L}{L+\mu}}}\sum_{k=1}^{T-1}\frac{1}{(k+1)^{c-\frac{L}{L+\mu}}}\frac{C_{\lambda}}{k^c}
        \end{aligned}
    \end{equation*}

\end{proof}

\subsection{Proof of the Empirical Risk}\label{appsubsec:optimizationerrorcc}

Then we are going to present the optimization error of D-SGDA in the SC-SC condition.

\begin{thm}[\textbf{Optimization error}]\label{thm:optimizationerrircc}
    Under assumption \ref{assp:continuous},\ref{assp:smooth},\ref{assp:mixing}, each local function is $f_i$ is $\mu_{\x}$SC-$\mu_{\y}$SC. 
    We further bound the restriction set by $\sup_{\x\in\X}\|\x\|\leq C_{\x}$ and $\sup_{\y\in\Y}\|\y\|\leq C_{\y}$, then we have the strong primal-dual empirical risk over the dataset $\S$ on the average output in $T$ iterations as following:
    \begin{enumerate}[leftmargin=*,label={\alph*.}]
        \item \label{le:optimizationfixed} When learning rates $\eta_{\x,t}$ and $\eta_{\y,t}$ are fixed,
        \begin{equation*}
        \Delta_{\S}^s(\x_{ave}^T,\y_{ave}^T)\leq\frac{C_{\x}^2+C_{\y}^2}{2\eta^{min}T}+\eta^{max}G^2+\frac{4(C_{\x}+C_{\y})GL\eta^{\max}}{1-\lambda}+\frac{2(C_{\x}+C_{\y})G}{\sqrt{T}}.
    \end{equation*}
        \item \label{le:optimizationvarying} When learning rates are varying that $\eta_{\x,t}=\frac{1}{\mu_{\x}(t+1)^{c_{\x}}}$ and $\eta_{\y,t}=\frac{1}{\mu_{\y}(t+1)^{c_{\y}}}$, 
        \begin{equation*}
        \begin{aligned}
            \Delta_{\S}^s(\x_{ave}^T,\y_{ave}^T)\leq\frac{2G(C_{\x}+C_{\y})}{\sqrt{T}}+T_{\x}+T_{\y}+T_{max}.
        \end{aligned}
    \end{equation*}
    where $\alpha\in\{\x,\y\}$, $k_{min}=\min\{c_{\x},c_{\y}\}$ and $\mu=\min\{\mu_{\x},\mu_{\y}\}$:
    \begin{equation*}
        \begin{aligned}
            T_{\alpha}=\left\{\begin{array}{cc}
                \frac{G^2}{2\mu_{\alpha}}\frac{1+\ln{T}}{T} & c_{\alpha}=1\\
                \frac{G^2}{2\mu_{\alpha}(1-c_{\alpha})T^{c_{\alpha}}} & 0< c_{\alpha}<1
            \end{array}
            \right.
        \end{aligned};\quad
        \begin{aligned}
            T_{max}=\left\{\begin{array}{cc}
                \frac{4GLC_{\lambda}(C_{\x}+C_{\y})\ln{T}}{\mu T} & k_{min}=1\\
                \frac{4GLC_{\lambda}(C_{\x}+C_{\y})}{\mu(1-k_{min})T^{k_{min}}}  &  0< k_{min}<1
            \end{array}.
            \right.
        \end{aligned}
    \end{equation*}
    \end{enumerate}
\end{thm}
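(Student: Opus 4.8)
The plan is to run the classical primal--dual (saddle-point) regret argument on the \emph{consensus} sequence $(\x^t,\y^t)=\frac1m\sum_i(\x^t_i,\y^t_i)$ and to pay separately for the two decentralization-specific discrepancies that do not appear in the single-machine analysis: the \emph{consensus error} (local iterate versus network average) and the \emph{stochastic-gradient noise} paired with data-dependent comparators. First I would import from the proof of Theorem~\ref{thm:cc:stability} the fact that the averaged iterate obeys a projected SGDA recursion, $\x^{t+1}=P_\X(\x^t-\eta_{\x,t}\hat g^t_\x)$ and $\y^{t+1}=P_\Y(\y^t+\eta_{\y,t}\hat g^t_\y)$, where $\hat g^t_\x=\frac1m\sum_i\dx f_i(\x^t_i,\y^t_i;\xi_{i,j_t(i)})$ (and similarly $\hat g^t_\y$), so that each step uses a \emph{local, stochastic} gradient rather than $\dx\FS(\x^t,\y^t)$.

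For fixed comparators $\x'\in\X$, $\y'\in\Y$, I would expand $\|\x^{t+1}-\x'\|^2$ and $\|\y^{t+1}-\y'\|^2$ using nonexpansiveness of the projection, which after rearranging bounds $\eta_{\x,t}\langle\hat g^t_\x,\x^t-\x'\rangle$ by the telescoping difference $\tfrac12(\|\x^t-\x'\|^2-\|\x^{t+1}-\x'\|^2)$ plus the noise-magnitude term $\tfrac{\eta_{\x,t}^2}{2}\|\hat g^t_\x\|^2\le\tfrac{\eta_{\x,t}^2}{2}G^2$ (and symmetrically for $\langle\hat g^t_\y,\y'-\y^t\rangle$). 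Summing over $t$, the telescoping leaves only the initial distance $\|\x^0-\x'\|^2\le C_\x^2$ (using $\x^0=\y^0=0$), which after dividing by the averaging weight $\sum_t\eta_{\x,t}$ yields the $\tfrac{C_\x^2+C_\y^2}{2\eta^{min}T}$ term, while the noise-magnitude sum gives the $\eta^{max}G^2$ term in part~\ref{le:optimizationfixed}.

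Next I would bridge $\hat g^t$ to the exact gradient at the averaged point in two moves. The consensus move replaces $\tfrac1m\sum_i\dx F_{\S_i}(\x^t_i,\y^t_i)$ by $\dx\FS(\x^t,\y^t)$: by $L$-smoothness (Assumption~\ref{assp:smooth}) this error is at most $2GL\sum_{s=0}^{t-1}\eta^{max}_s\lambda^{t-1-s}$ via Lemma~\ref{le:decentral}, and pairing it with $\|\x^t-\x'\|\le2C_\x$, $\|\y^t-\y'\|\le2C_\y$ and summing the geometric tail $\sum_s\lambda^{t-1-s}\le\tfrac1{1-\lambda}$ produces the $\tfrac{4(C_\x+C_\y)GL\eta^{max}}{1-\lambda}$ term (and, with decaying rates, the summation lemma $\sum_j\lambda^{t-1-j}/(j+1)^c\le C_\lambda/t^c$ yields the topology factor $C_\lambda$ inside $T_{max}$). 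With the gradients now at $(\x^t,\y^t)$, convexity in $\x$ and concavity in $\y$, together with the strong-convexity/-concavity corrections $-\tfrac{\mu_\x}{2}\|\x^t-\x'\|^2-\tfrac{\mu_\y}{2}\|\y^t-\y'\|^2$, convert the inner products into per-step value gaps $\FS(\x^t,\y')-\FS(\x',\y^t)$; Jensen's inequality then lifts these to the averaged iterate $(\x^T_{ave},\y^T_{ave})$. In part~\ref{le:optimizationvarying} the choice $\eta_{\alpha,t}=\tfrac1{\mu_\alpha(t+1)^{c_\alpha}}$ makes the strong-convexity corrections combine with the distance telescoping so that the coefficients collapse up to a nonpositive remainder, leaving only $\sum_t\eta_{\alpha,t}^2G^2$, which produces the $T_\alpha$ rates $\tfrac{G^2}{2\mu_\alpha}\tfrac{1+\ln T}{T}$ (for $c_\alpha=1$) and $\tfrac{G^2}{2\mu_\alpha(1-c_\alpha)T^{c_\alpha}}$.

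The main obstacle is the last contribution: the $\sup_{\y'}$ and $\inf_{\x'}$ in $\Delta^s_\S$ are attained at comparators depending on the \emph{entire} random trajectory (through $\x^T_{ave},\y^T_{ave}$), so the martingale increment $\hat g^t-\E_{j_t}[\hat g^t]$ is no longer mean-zero against them. I would split this noise into a part paired with the measurable $(\x^t,\y^t)$, which vanishes in expectation, and a part paired with the comparator, bounded uniformly by $C_\alpha\|M_T\|$ for the martingale $M_T=\sum_t\eta_{\alpha,t}(\hat g^t-\E_{j_t}\hat g^t)$; orthogonality of increments and $\|\hat g^t\|\le G$ give $\E\|M_T\|\le G\sqrt{\sum_t\eta_{\alpha,t}^2}$, and dividing by the averaging weight delivers the $\tfrac{2(C_\x+C_\y)G}{\sqrt T}$ term present in both parts. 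Taking the supremum/infimum over comparators and the expectation over the sampling only at the very end, then assembling the telescoping, noise, consensus, and martingale contributions, gives the stated bounds.
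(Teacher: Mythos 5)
Your proposal is correct and follows essentially the same route as the paper's proof: a saddle-point regret expansion on the consensus average $(\x^t,\y^t)$ with telescoping distances and $\eta^2G^2$ noise terms, the consensus discrepancy controlled by smoothness together with Lemma~\ref{le:decentral} (and the $C_{\lambda}$ summation lemma for decaying rates), and the comparator-dependent noise handled exactly as in the paper's terms $\Rone$/$\Rtwo$ — mean-zero against the measurable iterate, and bounded via orthogonality of increments (Cauchy--Schwarz on the squared sum) against the $C_{\x},C_{\y}$-bounded comparators, yielding the $\frac{2(C_{\x}+C_{\y})G}{\sqrt{T}}$ term. The only cosmetic difference is that you split sampling noise from consensus error before the martingale step (centering at the local iterates), whereas the paper centers at the averaged point and keeps both inside the squared-norm expansion; the two decompositions are equivalent and produce the same bounds.
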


\begin{proof}[Proof of Theorem \ref{thm:optimizationerrircc}]
    First, we should notice that when each $f_i$ owns the property of $\mu_{\x}$-strong convexity and $\mu_{\y}$-strong concavity, then the linear summation should inherit the properties, i.e., $\small F_{\S}(\x,\y)\!=\!\frac{1}{m}\!\sum_{i=1}^m\!\!\frac{1}{n}\!\sum_{l=1}^n f_i(\x,\y;\xi_{i,l})$ is $\mu_{\x}$-strongly convex on $\x$ and $\mu_{\y}$-strongly concave on $\y$.
    
    First observe the update rule of $\x$ (see Algorithm \ref{alg:dsgda}), for any given $\x\in\X$:
    \begin{equation}\label{eq:pfem1}
        \begin{aligned}
            &\quad\|\x^{t+1}-\x\|^2\\
            &\leq\|\x^t-\frac{1}{m}\sum_{i=1}^m\eta_{\x,t}\dx f_i(\x^t_i,\y^t_i;\xi_{i,j_t(i)})-\x\|^2\\
            &=\!\|\x^t\!-\!\x\|^2\!+\!\|\frac{1}{m}\!\sum_{i=1}^m\eta_{\x,t}\dx f_i(\x^t_i,\y^t_i;\xi_{i,j_t(i)})\|^2\!-\!2\langle\x^t\!-\!\x,\frac{1}{m}\!\sum_{i=1}^m\eta_{\x,t}\dx f_i(\x^t_i,\y^t_i;\xi_{i,j_t(i)})\rangle
        \end{aligned}
    \end{equation}
    where for the first inequality, we can recall how we transform in the proof for stability (see Appendix \ref{pf:ccstability}) that $
    x^{t+1}=(\bm{X}^{t+1})^T\frac{\mathds{1}_m}{m}=(\W\bm{X}^t-\eta_{\x,t}\dx\f(\bm{X}^t,\bm{Y}^t;\bm{\xi}^t))^T\frac{\mathds{1}_m}{m}=\x^t-\eta_{\x,t}\dx\f(\bm{X}^t,\bm{Y}^t;\bm{\xi}^t)^T\frac{\mathds{1}_m}{m}$ regardless of the projection.

    For the second term, we have:
    \begin{equation}\label{eq:pfem2}
        \begin{aligned}
            &\quad\left\|\frac{1}{m}\sum_{i=1}^m\eta_{\x,t}\dx f_i(\x^t_i,\y^t_i;\xi_{i,j_t(i)})\right\|^2\\
            &=\frac{\eta_{\x,t}^2}{m^2}\left[\sum_{i=1}^m\|\dx f_i(\x^t_i,\y^t_i;\xi_{i,j_t(i)})\|^2+\sum_{i\neq k}\langle\dx f_i(\x^t_i,\y^t_i;\xi_{i,j_t(i)}),\dx f_k(\x^t_k,\y^t_k;\xi_{k,j_t(k)})\rangle\right]\\
            &\leq\frac{\eta_{\x,t}^2}{m^2}\left(mG^2+(m^2-m)G^2\right)\\
            &=\eta_{\x,t}^2G^2
        \end{aligned}
    \end{equation}
    We decompose the third term in association with the empirical function that:
    \begin{equation}\label{eq:pfem3}
        \begin{aligned}
            &\quad2\eta_{\x,t}\langle\x-\x^t,\frac{1}{m}\sum_{i=1}^m\dx f_i(\x^t_i,\y^t_i;\xi_{i,j_t(i)})\rangle\\
            &=2\eta_{\x,t}\langle\x-\x^t,\dx F_{\S}(\x^t,\y^t)\rangle+2\eta_{\x,t}\langle\x-\x^t,\frac{1}{m}\sum_{i=1}^m\dx f_i(\x^t_i,\y^t_i;\xi_{i,j_t(i)})-\dx F_{\S}(\x^t,\y^t)\rangle\\
            &\leq2\eta_{\x,t}(\FS(\x,\y^t)-\FS(\x^t,\y^t))-\eta_{\x,t}\mu_{\x}\|\x-\x^t\|^2\\
            &\quad+2\eta_{\x,t}\langle\x-\x^t,\frac{1}{m}\sum_{i=1}^m\dx f_i(\x^t_i,\y^t_i;\xi_{i,j_t(i)})-\dx F_{\S}(\x^t,\y^t)\rangle
        \end{aligned}
    \end{equation}
    where the last inequality is due to the strong convexity of the empirical function $\FS$ on parameter $\x$.
    
    Then we combine the inequalities \eqref{eq:pfem1} \eqref{eq:pfem2} \eqref{eq:pfem3} and we can get:
    \begin{equation}\label{eq:brancheq}
        \begin{aligned}
            2\eta_{\x,t}(\FS(\x^t,\y^t)-\FS(\x,\y^t))&\leq(1-\eta_{\x,t}\mu_{\x})\|\x-\x^t\|^2-\|\x^{t+1}-\x\|^2+\eta_{\x,t}^2G^2\\
            &\quad+2\eta_{\x,t}\langle\x-\x^t,\frac{1}{m}\sum_{i=1}^m\dx f_i(\x^t_i,\y^t_i;\xi_{i,j_t(i)})-\dx F_{\S}(\x^t,\y^t)\rangle
        \end{aligned}
    \end{equation}
    For case \ref{le:optimizationfixed} when learning rates $\eta_{\x,t}$ and $\eta_{\y,t}$ are fixed and we write them as $\eta_{\x}$ and $\eta_{\y}$ respectively.

    Taking summatation over the above inequality from $t=0$ to $t=T$ and using the concavity of the empirical function on $\y$:
    \begin{equation*}
        \begin{aligned}
            &\quad\sum_{t=0}^{T-1}2\eta_{\x}\FS(\x^t,\y^t)-\FS(\x,\y^T_{ave})\\
            &\leq\sum_{t=0}^{T-1}2\eta_{\x}(\FS(\x^t,\y^t)-\FS(\x,\y^t))\\
            &\leq(1-\eta_{\x}\mu_{\x})\|\x\|^2+\eta_{\x}^2G^2T+2\eta_{\x}\sum_{t=0}^{T-1}\langle\x-\x^t,\frac{1}{m}\sum_{i=1}^m\dx f_i(\x^t_i,\y^t_i;\xi_{i,j_t(i)})-\dx F_{\S}(\x^t,\y^t)\rangle\\
            &\leq(1-\eta_{\x}\mu_{\x})\|\x\|^2+\eta_{\x}^2G^2T+2\eta_{\x}\sum_{t=0}^{T-1}\langle\x^t,\dx F_{\S}(\x^t,\y^t)-\frac{1}{m}\sum_{i=1}^m\dx f_i(\x^t_i,\y^t_i;\xi_{i,j_t(i)})\rangle\\
            &\quad+2\eta_{\x}\|\x\|\left\|\sum_{t=0}^{T-1}\left(\frac{1}{m}\sum_{i=1}^m\dx f_i(\x^t_i,\y^t_i;\xi_{i,j_t(i)})-\dx F_{\S}(\x^t,\y^t)\right)\right\|
        \end{aligned}
    \end{equation*}
    Then we take expectation on the randomness of the algorithm (notice that we do not take expectation on the randomness of dataset) on both sides of above inequality and choose the infinity of $\x$ on the left side since the above inequality holds for any $\x\in\X$:
    \begin{equation}\label{eq:summation}
        \begin{aligned}
            &\quad\sum_{t=0}^{T-1}2\eta_{\x}\E_{\A}[\FS(\x^t,\y^t)-\inf_{\x\in\X}\FS(\x,\y^T_{ave})]\\
            &\leq(1-\eta_{\x}\mu_{\x})C_{\x}^2+\eta_{\x}^2G^2T+2\eta_{\x}\underbrace{\sum_{t=0}^{T-1}\E_{\A}\langle\x^t,\dx F_{\S}(\x^t,\y^t)-\frac{1}{m}\sum_{i=1}^m\dx f_i(\x^t_i,\y^t_i;\xi_{i,j_t(i)})\rangle}_{\Rone}\\
            &\quad+2C_{\x}\eta_{\x}\underbrace{\E_{\A}\left\|\sum_{t=0}^{T-1}\left(\frac{1}{m}\sum_{i=1}^m\dx f_i(\x^t_i,\y^t_i;\xi_{i,j_t(i)})-\dx F_{\S}(\x^t,\y^t)\right)\right\|}_{\Rtwo}
        \end{aligned}
    \end{equation}
    For term $\Rone$, it is decomposed as:
    \begin{equation*}
        \begin{aligned}
            &\quad\sum_{t=0}^{T-1}\E_{\bm{j}_t}\langle\x^t,\dx F_{\S}(\x^t,\y^t)-\frac{1}{m}\sum_{i=1}^m\dx f_i(\x^t_i,\y^t_i;\xi_{i,j_t(i)})\rangle\\
            &=\sum_{t=0}^{T-1}\E_{\bm{j}_t}\langle\x^t,\frac{1}{m}\sum_{i=1}^m\frac{1}{n}\sum_{l=1}^n\dx f_i(\x^t,\y^t;\xi_{i,l})-\frac{1}{m}\sum_{i=1}^m\dx f_i(\x^t,\y^t;\xi_{i,j_t(i)})\rangle\\
            &\quad+\sum_{t=0}^{T-1}\E_{\bm{j}_t}\langle\x^t,\frac{1}{m}\sum_{i=1}^m\dx f_i(\x^t,\y^t;\xi_{i,j_t(i)})-\frac{1}{m}\sum_{i=1}^m\dx f_i(\x^t_i,\y^t_i;\xi_{i,j_t(i)})\rangle\\
            &\overset{(i)}{\leq}2C_{\x}\sum_{t=0}^{T-1}\E_{\A}\left\|\frac{1}{m}\sum_{i=1}^m\left(\dx f_i(\x^t,\y^t;\xi_{i,j_t(i)})-\dx f_i(\x^t_i,\y^t_i;\xi_{i,j_t(i)})\right)\right\|\\
            &\leq2C_{\x}\sum_{t=0}^{T-1}\frac{L}{m}\E_{\A}\left[\sum_{i=1}^m\left(\|\x^t-\x^t_i\|^2+\|\y^t-\y^t_i\|^2\right)^{1/2}\right]\\
            &\overset{(ii)}{\leq}4C_{\x}GL\sum_{t=0}^{T-1}\sum_{s=0}^{t-1}\eta_s^{max}\lambda^{t-1-s}
        \end{aligned}
    \end{equation*}
    where inequality $(i)$ is because of the fact that taking expectation on the randomness of $j_t$ is equal to the empirical result on the dataset $\S$ and inequality $(ii)$ is due to Cauchy-Schwarz inequality and Lemma \ref{le:decentral}.

    Then for the term $\Rtwo$, we decompose its corresponding quadratic one:
    
    \begin{equation*}
        \begin{aligned}
            &\quad\left(\E_{\A}\left\|\sum_{t=0}^{T-1}\left(\frac{1}{m}\sum_{i=1}^m\dx f_i(\x^t_i,\y^t_i;\xi_{i,j_t(i)})-\dx F_{\S}(\x^t,\y^t)\right)\right\|\right)^2\\
            &\leq\E_{\A}\left\|\sum_{t=0}^{T-1}\left(\frac{1}{m}\sum_{i=1}^m\dx f_i(\x^t_i,\y^t_i;\xi_{i,j_t(i)})-\dx F_{\S}(\x^t,\y^t)\right)\right\|^2\\
            &=\sum_{t=0}^{T-1}\E_{\A}\left\|\frac{1}{m}\sum_{i=1}^m\dx f_i(\x^t_i,\y^t_i;\xi_{i,j_t(i)})-\frac{1}{m}\sum_{i=1}^m\frac{1}{n}\sum_{l=1}^n\dx f_i(\x^t,\y^t;\xi_{i,l})\right\|^2\\
            &\quad+\sum_{t\neq t'}\E_{\A}\langle\frac{1}{m}\sum_{i=1}^m\dx f_i(\x^t_i,\y^t_i;\xi_{i,j_t(i)})-\frac{1}{m}\sum_{i=1}^m\frac{1}{n}\sum_{l=1}^n\dx f_i(\x^t,\y^t;\xi_{i,l}),\\
            &\quad\quad\quad\quad\frac{1}{m}\sum_{i=1}^m\dx f_i(\x^{t'}_i,\y^{t'}_i;\xi_{i,j_{t'}(i)})-\frac{1}{m}\sum_{i=1}^m\frac{1}{n}\sum_{l=1}^n\dx f_i(\x^{t'},\y^{t'};\xi_{i,l})\rangle\\
            &\leq4G^2T+\sum_{t\neq t'}\E_{\A}\langle\frac{1}{m}\sum_{i=1}^m\dx f_i(\x^t_i,\y^t_i;\xi_{i,j_t(i)})-\frac{1}{m}\sum_{i=1}^m\dx f_i(\x^t,\y^t;\xi_{i,j_t(i)})\\&\quad\quad\quad\quad\quad\quad\quad\quad+\frac{1}{m}\sum_{i=1}^m\dx f_i(\x^t,\y^t;\xi_{i,j_t(i)})-\frac{1}{m}\sum_{i=1}^m\frac{1}{n}\sum_{l=1}^n\dx f_i(\x^t,\y^t;\xi_{i,l}),\\
            &\quad\quad\quad\quad\quad\quad\quad\quad\frac{1}{m}\sum_{i=1}^m\dx f_i(\x^{t'}_i,\y^{t'}_i;\xi_{i,j_{t'}(i)})-\frac{1}{m}\sum_{i=1}^m\dx f_i(\x^{t'},\y^{t'};\xi_{i,j_{t'}(i)})\\&\quad\quad\quad\quad\quad\quad\quad\quad+\frac{1}{m}\sum_{i=1}^m\dx f_i(\x^{t'},\y^{t'};\xi_{i,j_{t'}(i)})-\frac{1}{m}\sum_{i=1}^m\frac{1}{n}\sum_{l=1}^n\dx f_i(\x^{t'},\y^{t'};\xi_{i,l})\rangle\\
            &\overset{(a)}{=}4G^2T+\sum_{t\neq t'}\E_{\A}\langle\frac{1}{m}\sum_{i=1}^m\dx f_i(\x^t_i,\y^t_i;\xi_{i,j_t(i)})-\frac{1}{m}\sum_{i=1}^m\dx f_i(\x^t,\y^t;\xi_{i,j_t(i)}),\\
            &\quad\quad\quad\quad\quad\quad\quad\quad\quad\quad\frac{1}{m}\sum_{i=1}^m\dx f_i(\x^{t'}_i,\y^{t'}_i;\xi_{i,j_{t'}(i)})-\frac{1}{m}\sum_{i=1}^m\dx f_i(\x^{t'},\y^{t'};\xi_{i,j_{t'}(i)})\rangle\\
            &\leq 4G^2T+\sum_{t\neq t'}\E_{\A}\left(\left\|\frac{1}{m}\sum_{i=1}^m\left(\dx f_i(\x^t_i,\y^t_i;\xi_{i,j_t(i)})-\dx f_i(\x^t,\y^t;\xi_{i,j_t(i)})\right)\right\|\cdot\right.\\
            &\left.\quad\quad\quad\quad\quad\quad\quad\quad\quad\quad\left\|\frac{1}{m}\sum_{i=1}^m\left(\dx f_i(\x^{t'}_i,\y^{t'}_i;\xi_{i,j_{t'}(i)})-\dx f_i(\x^{t'},\y^{t'};\xi_{i,j_{t'}(i)})\right)\right\|\right)\\
            &\overset{(b)}{\leq}4G^2T+\sum_{t\neq t'}\frac{L^2}{m}\E_{\A}\left[\sum_{i=1}^m(\|\x^t-\x^t_i\|^2+\|\y^t-\y^t_i\|^2)\cdot\sum_{i=1}^m(\|\x^{t'}-\x^{t'}_i\|^2+\|\y^{t'}-\y^{t'}_i\|^2)\right]^{1/2}\\
            &\leq4G^2T+\sum_{t\neq t'}4G^2L^2\left(\sum_{s=0}^{t-1}\eta_{s}^{max}\lambda^{t-1-s}\right)\left(\sum_{s=0}^{t'-1}\eta_{s}^{max}\lambda^{t-1-s}\right)
        \end{aligned}
    \end{equation*}
    where equality $(a)$ owes to that taking expectation on the randomness of $j_t$ or $j_{t'}$ equals to the empirical risk and inequality $(b)$ is due to the Lipschitz smoothness of each $f_i$ and Cauchy-Schwarz inequality.
    
    Combining above inequalities of term $\Rone$ and $\Rtwo$ into the inequality \eqref{eq:summation} and we can summarize as:
    \begin{equation*}
        \begin{aligned}
            &\quad\sum_{t=0}^{T-1}2\eta_{\x}\E_{\A}[\FS(\x^t,\y^t)-\inf_{\x\in\X}\FS(\x,\y^T_{ave})]\\
            &\leq(1-\eta_{\x}\mu_{\x})C_{\x}^2+\eta_{\x}^2G^2T+4\eta_{\x}C_{\x}GL\sum_{t=0}^{T-1}\sum_{s=0}^{t-1}\eta_s^{max}\lambda^{t-1-s}\\
            &\quad\quad\quad+2C_{\x}\eta_{\x}\sqrt{4G^2T+4G^2L^2\sum_{t\neq t'}\left(\sum_{s=0}^{t-1}\eta_{s}^{max}\lambda^{t-1-s}\right)\left(\sum_{s=0}^{t'-1}\eta_{s}^{max}\lambda^{t'-1-s}\right)}
        \end{aligned}
    \end{equation*}
    Dividing both sides by $T$ and we can get:
    \begin{equation*}
        \begin{aligned}
            &\quad\frac{1}{T}\sum_{t=0}^{T-1}\E_{\A}[F_{\S}(\x^t,\y^t)]-\E_{\A}[\inf_{\x\in\X}F_{\S}(\x,\y^T_{ave})]\\
            &\leq\frac{(1-\eta_{\x}\mu_{\x})C_{\x}^2}{2\eta_{\x}T}+\frac{\eta_{\x}G^2}{2}+\frac{4C_{\x}GL\eta^{\max}}{1-\lambda}+\frac{2C_{\x}G}{\sqrt{T}}
        \end{aligned}
    \end{equation*}
    And we can get the other-hand result in the same symmetric way:
    \begin{equation*}
        \begin{aligned}
            &\quad\E_{\A}[\sup_{\y\in\Y}F_{\S}(\x^T_{ave},\y)]-\frac{1}{T}\sum_{t=0}^{T-1}\E_{\A}[F_{\S}(\x^t,\y^t)]\\
            &\leq\frac{(1-\eta_{\y}\mu_{\y})C_{\y}^2}{2\eta_{\y}T}+\frac{\eta_{\y}G^2}{2}+\frac{4C_{\y}GL\eta^{\max}}{1-\lambda}+\frac{2C_{\y}G}{\sqrt{T}}
        \end{aligned}
    \end{equation*}
    Combining above two inequalities we can get the result:
    \begin{equation*}
        \Delta_{\S}^s(\x^T_{ave},\y^T_{ave})\leq\frac{C_{\x}^2+C_{\y}^2}{2\eta^{min}T}+\eta^{max}G^2+\frac{4(C_{\x}+C_{\y})GL\eta^{\max}}{1-\lambda}+\frac{2(C_{\x}+C_{\y})G}{\sqrt{T}}
    \end{equation*}
    Then we come to the case \ref{le:optimizationvarying} when $\eta_{\x,t}=\frac{1}{\mu_{\x}\cdot(t+1)^{c_{\x}}}$ and $\eta_{\y,t}=\frac{1}{\mu_{\y}\cdot(t+1)^{c_{\y}}}$. Back to the inequality \eqref{eq:brancheq}, we simplify it into:
    
    \begin{equation*}
        \begin{aligned}
            \small\frac{2}{\mu_{\x}(t\!+\!1)^{c_{\x}\!}\!}\!(\FS(\x^t\!,\y^t)\!-\!\FS(\x,\y^t\!))&\leq(1\!-\!\frac{1}{(t\!+\!1)^{c_{\x}}})\|\x\!-\!\x^t\|^2\!-\!\|\x^{t+1}\!-\!\x\|^2\!+\!\frac{G^2}{\mu_{\x}^2(t\!+\!1)^{2c_{\x}}}\\
            &+\!\!\frac{2}{\mu_{\x}(t\!+\!\!1)^{c_{\x}\!}\!}\!\langle\x\!-\!\x^t\!,\frac{1}{m}\!\sum_{i=1}^m\!\nabla_{\!\x\!} f_i(\x^t_i,\y^t_i;\xi_{i,j_t(\!i)}\!)\!-\!\!\nabla_{\!\x\!} F_{\S}(\x^t\!,\y^t)\rangle
        \end{aligned}
    \end{equation*}
    \begin{equation*}
        \begin{aligned}
            \small\FS(\x^t\!,\y^t)\!-\!\FS(\x,\y^t)&\leq\frac{\mu_{\x}}{2}((t\!+\!1)^{c_{\x\!}\!}-\!1)\|\x\!-\!\x^t\|^2\!-\!\frac{\mu_{\x}}{2}(t\!+\!1)^{c_{\x}}\|\x^{t+1}\!-\!\x\|^2\!+\!\frac{G^2}{2\mu_{\x}(t\!+\!1)^{c_{\x}}}\\
            &\quad+\langle\x-\x^t,\frac{1}{m}\sum_{i=1}^m\dx f_i(\x^t_i,\y^t_i;\xi_{i,j_t(i)})-\dx F_{\S}(\x^t,\y^t)\rangle
        \end{aligned}
    \end{equation*}
    Taking summation on the above inequality from $t=0$ to $t=T-1$ and making use of the convexity of $\FS$ on the second parameter that:
    \begin{equation*}
        \begin{aligned}
            &\quad\sum_{t=0}^{T-1}\left[\FS(\x^t,\y^t)-\FS(\x,\y^T_{ave})\right]\\
            &\leq\frac{G^2}{2\mu_{\x}}\sum_{t=0}^{T-1}\frac{1}{(t+1)^{c_{\x}}}+\sum_{t=0}^{T-1}\langle\x-\x^t,\frac{1}{m}\sum_{i=1}^m\dx f_i(\x^t_i,\y^t_i;\xi_{i,j_t(i)})-\dx F_{\S}(\x^t,\y^t)\rangle\\
            &\leq\frac{G^2}{2\mu_{\x}}\sum_{t=0}^{T-1}\frac{1}{(t+1)^{c_{\x}}}+\sum_{t=0}^{T-1}\langle\x^t,\dx F_{\S}(\x^t,\y^t)-\frac{1}{m}\sum_{i=1}^m\dx f_i(\x^t_i,\y^t_i;\xi_{i,j_t(i)})\rangle\\
            &\quad+\|\x\|\left\|\sum_{t=0}^{T-1}\left(\frac{1}{m}\sum_{i=1}^m\dx f_i(\x^t_i,\y^t_i;\xi_{i,j_t(i)})-\dx F_{\S}(\x^t,\y^t)\right)\right\|
        \end{aligned}
    \end{equation*}
    Next we will take expectation on the randomness of the algorithm on both sides and proceed in the same way as we do in case \ref{le:optimizationfixed} that:
    \begin{equation*}
        \begin{aligned}
            &\quad\sum_{t=0}^{T-1}\E_{\A}\left[\FS(\x^t,\y^t)-\inf_{\x\in\X}\FS(\x,\y^T_{ave})\right]\\
            &\leq\frac{G^2}{2\mu_{\x}}\sum_{t=0}^{T-1}\frac{1}{(t+1)^{c_{\x}}}+2C_{\x}GL\sum_{t=0}^{T-1}\sum_{s=0}^{t-1}\eta_s^{max}\lambda^{t-1-s}\\
            &\quad+C_{\x}\sqrt{4G^2T+4G^2L^2\sum_{t\neq t'}\left(\sum_{s=0}^{t-1}\eta_{s}^{max}\lambda^{t-1-s}\right)\left(\sum_{s=0}^{t'-1}\eta_{s}^{max}\lambda^{t-1-s}\right)}
        \end{aligned}
    \end{equation*}
    Without loss of generalization, we assume $\eta_t^{max}=\eta_{\x,t}=\frac{1}{\mu_{\x}(t+1)^{c_{\x}}}$. Therefore the summation $\sum_{s=0}^{t-1}\eta_s^{max}\lambda^{t-1-s}$ comes out to be $\sum_{s=0}^{t-1}\frac{\lambda^{t-1-s}}{\mu_{\x}(t+1)^{c_{\x}}}\leq\frac{C_{\lambda}}{\mu_{\x}t^{c_{\x}}}$. Then we have to analyse in different categories for the value of $c_{\x}$.\\
    First when $c_{\x}=1$, the summation result turns out to be $\sum_{t=0}^{T-1}\frac{1}{t+1}\leq 1+\ln{T}$,
    \begin{equation*}
        \begin{aligned}
            \frac{1}{T}\sum_{t=0}^{T-1}\E_{\A}\left[\FS(\x^t,\y^t)\right]-\inf_{\x\in\X}\FS(\x,\y^T_{ave})
            \leq\frac{G^2}{2\mu_{\x}}\frac{1+\ln{T}}{T}+\frac{4GLC_{\x}C_{\lambda}\ln{T}}{\mu_{\x}T}+\frac{2GC_{\x}}{\sqrt{T}}
        \end{aligned}
    \end{equation*}
    When $0 < c_{\x} < 1$, $\sum_{t=0}^{T-1}\frac{1}{(t+1)^{c_{\x}}}\leq\frac{T^{1-c_{\x}}}{1-c_{\x}}$, then we have:
    \begin{equation*}
        \begin{aligned}
            \frac{1}{T}\sum_{t=0}^{T-1}\E_{\A}\left[\FS(\x^t,\y^t)\right]-\inf_{\x\in\X}\FS(\x,\y^T_{ave})
            \leq\frac{G^2}{2\mu_{\x}(1-c_{\x})T^{c_{\x}}}+\frac{4GLC_{\x}C_{\lambda}}{\mu_{\x}(1-c_{\x})T^{c_{\x}}}+\frac{2GC_{\x}}{\sqrt{T}}
        \end{aligned}
    \end{equation*}
    It is in a similar way to get the symmetric result on the other hand and combining both results we can obtain the strong primal-dual empirical risk:
    \begin{equation*}
        \begin{aligned}
            \E_{\A}[\sup_{\y\in\Y}\FS(\x^T_{ave},\y)-\inf_{\x\in\X}\FS(\x,\y^T_{ave})]\leq\frac{2G(C_{\x}+C_{\y})}{\sqrt{T}}+T_{\x}+T_{\y}+T_{max}
        \end{aligned}
    \end{equation*}
    where
    \begin{equation*}
        \begin{aligned}
            T_{\x}=\left\{\begin{array}{cc}
                \frac{G^2}{2\mu_{\x}}\frac{1+\ln{T}}{T} & c_{\x}=1\\
                \frac{G^2}{2\mu_{\x}(1-c_{\x})T^{c_{\x}}} & 0< c_{\x}<1
            \end{array}
            \right.
        \end{aligned};\quad
        \begin{aligned}
            T_{\y}=\left\{\begin{array}{cc}
                \frac{G^2}{2\mu_{\y}}\frac{1+\ln{T}}{T} & c_{\y}=1\\
                \frac{G^2}{2\mu_{\y}(1-c_{\y})T^{c_{\y}}} & 0< c_{\y}<1
            \end{array}
            \right.
        \end{aligned}
    \end{equation*}
    and 
    \begin{equation*}
        \begin{aligned}
            c_{min}=\min\{c_{\x},c_{\y}\}
        \end{aligned};\quad
        \begin{aligned}
            \mu=\min\{\mu_{\x},\mu_{\y}\}
        \end{aligned};\quad
        \begin{aligned}
            T_{max}=\left\{\begin{array}{cc}
                \frac{4GLC_{\lambda}(C_{\x}+C_{\y})\ln{T}}{\mu T} & c_{min}=1\\
                \frac{4GLC_{\lambda}(C_{\x}+C_{\y})}{\mu(1-c_{min})T^{c_{min}}}  &  0< c_{min}<1
            \end{array}
            \right.
        \end{aligned}
    \end{equation*}
\end{proof}

\subsection{Proof of Strong/Weak Primal-Dual Population Risk}\label{appsubsec:populationcc}
In this part, we are going to prove the strong and weak primal-dual population risk of the algorithm D-SGDA. Actually this is an obvious result as a summary of above lemmas and theorems.
\begin{proof}[Proof of Theorem \ref{thm:strongPDpopulationrisk}]
    As we introduced the population risk (see Def.~\ref{def:populationrisk}), we decompose the population risk into generalization gap and empirical risk that:
    $$\Delta^{s}(\x^T_{ave},\y^T_{ave})=(\Delta^{s}(\x^T_{ave},\y^T_{ave})-\Delta^{s}_{\S}(\x^T_{ave},\y^T_{ave}))+\Delta^{s}_{\S}(\x^T_{ave},\y^T_{ave})$$
    Notice that we use the average iterate $(\x^T_{ave},\y^T_{ave})$ instead of the last iterate to denote the output of D-SGDA.
    The first term is the averaged version of the strong primal-dual generalization gap and we will make some adjustments.
    
    First we have the argument stability bound of D-SGDA that $\E_{\A}\left\|\left(\begin{array}{c}
        \x^T-\xdot^T  \\
        \y^T-\ydot^T 
    \end{array}\right)\right\|\leq\epsilon_{sta}^{arg}(\A)$, where $(\x^T,\y^T)$ and $(\xdot^T,\ydot^T)$ denote the $T$-th output when D-SGDA is executed on the neighboring dataset respectively. So we can make use of the convexity of the norm and obtain the argument stability of averaged-version:
    \begin{equation}
        \E_{\A}\left\|\left(\begin{array}{c}
        \x^T_{ave}-\xdot^T_{ave}  \\
        \y^T_{ave}-\ydot^T_{ave} 
    \end{array}\right)\right\|\leq\E_{\A}\left\|\left(\begin{array}{c}
        \x^T-\xdot^T  \\
        \y^T-\ydot^T 
    \end{array}\right)\right\|\leq\epsilon_{sta}^{arg}(\A)
    \end{equation}
    Then the strong primal-dual generalization gap should be $G\sqrt{2+\frac{2L^2}{\mu^2}}\epsilon^{arg}_{sta}(\A)$ following Thm.~\ref{thm:connection} when each $f_i$ is strongly convex w.r.t. $\x$ and strongly concave w.r.t. $\y$.
    
    At last, the strong primal-dual empirical risk $\E_{\A}[\Delta^s_{\S}(\x^T_{ave},\y^T_{ave})]$ is studied in Thm.~\ref{thm:optimizationerrircc}. So we can combine above two bounds to analyze the strong primal-dual population risk in different categories when learning rates are fixed:
    \begin{equation*}
        \begin{aligned}
            &\quad\E_{\A}[\Delta^s(\x^T_{ave},\y^T_{ave})]\\
            &\leq G\sqrt{2+\frac{2L^2}{\mu^2}}\left(2G\frac{L+\mu}{\eta^{min}L\mu}(\frac{2(\eta^{max})^2L}{1-\lambda}+\frac{\eta^{max}}{n})\right)+\frac{C_{\x}^2+C_{\y}^2}{2\eta^{min}T}+\eta^{max}G^2\\
            &+\frac{4(C_{\x}+C_{\y})GL\eta^{\max}}{1-\lambda}+\frac{2(C_{\x}+C_{\y})G}{\sqrt{T}}
        \end{aligned}
    \end{equation*}
    And when learning rates are varying that $\eta_t^{min}=\frac{1}{\mu(t+1)}$ and $\eta_t^{max}=\frac{1}{\mu(t+1)^c},c=1$, requiring $2c > \frac{L}{L+\mu}+1$:
    \begin{equation*}
        \begin{aligned}
            &\quad\E_{\A}[\Delta^s(\x^T_{ave},\y^T_{ave})]\\
            &\leq G\sqrt{2+\frac{2L^2}{\mu^2}}\left(\frac{2G}{\mu nT^{\frac{L}{L+\mu}}}\sum_{k=0}^{T-1}\frac{1}{(k+1)^{c-\frac{L}{L+\mu}}}+\frac{4GL}{\mu^2T^{\frac{L}{L+\mu}}}\sum_{k=1}^{T-1}\frac{1}{(k+1)^{c-\frac{L}{L+\mu}}}\frac{C_{\lambda}}{k^c}\right)\\
            &\quad+\frac{2G(C_{\x}+C_{\y})}{\sqrt{T}}+\frac{G^2}{\mu}\frac{1+\ln{T}}{T}+\frac{4GLC_{\lambda}(C_{\x}+C_{\y})\ln{T}}{\mu T}\\
            &\leq G\sqrt{2+\frac{2L^2}{\mu^2}}\left(\frac{2G}{\mu\frac{L}{L+\mu}}\frac{1}{n}+\frac{4GLC_{\lambda}}{\mu^2(1-\frac{L}{L+\mu})}\frac{1}{T^{\frac{L}{L+\mu}}}\right)+\frac{2G(C_{\x}+C_{\y})}{\sqrt{T}}+\frac{G^2}{\mu}\frac{1+\ln{T}}{T}\\
            &+\frac{4GLC_{\lambda}(C_{\x}+C_{\y})\ln{T}}{\mu T}
        \end{aligned}
    \end{equation*}
    When $c<1$ and requiring $2c>\frac{L}{L+\mu}+1$:
    \begin{equation*}
        \begin{aligned}
            &\quad\E_{\A}[\Delta^s(\x^T_{ave},\y^T_{ave})]\\
            &\leq G\sqrt{2+\frac{2L^2}{\mu^2}}\left(\frac{2G}{\mu nT^{\frac{L}{L+\mu}}}\sum_{k=0}^{T-1}\frac{1}{(k+1)^{c-\frac{L}{L+\mu}}}+\frac{4GL}{\mu^2T^{\frac{L}{L+\mu}}}\sum_{k=1}^{T-1}\frac{1}{(k+1)^{c-\frac{L}{L+\mu}}}\frac{C_{\lambda}}{k^c}\right)\\
            &\quad+\frac{2G(C_{\x}+C_{\y})}{\sqrt{T}}+\frac{G^2}{2\mu}(\frac{1+\ln{T}}{T}+\frac{1}{(1-c)T^c})+\frac{4GLC_{\lambda}(C_{\x}+C_{\y})}{\mu(1-c) T^c}\\
            &\leq G\sqrt{2+\frac{2L^2}{\mu^2}}\left(\frac{2G}{\mu(1-c+\frac{L}{L+\mu})}\frac{T^{1-c}}{n}+\frac{4GLC_{\lambda}}{\mu^2(2c-\frac{L}{L+\mu}-1)}\frac{1}{T^{\frac{L}{L+\mu}}}\right)+\frac{2G(C_{\x}+C_{\y})}{\sqrt{T}}\\
            &\quad+\frac{G^2}{2\mu}(\frac{1+\ln{T}}{T}+\frac{1}{(1-c)T^c})+\frac{4GLC_{\lambda}(C_{\x}+C_{\y})}{\mu(1-c) T^c}
        \end{aligned}
    \end{equation*}
    When $c<1$ and $2c=\frac{L}{L+\mu}+1$: 
    \begin{equation*}
        \begin{aligned}
            &\quad\E_{\A}[\Delta^s(\x^T_{ave},\y^T_{ave})]\\
            &\leq G\sqrt{2+\frac{2L^2}{\mu^2}}\left(\frac{2G}{\mu nT^{\frac{L}{L+\mu}}}\sum_{k=0}^{T-1}\frac{1}{(k+1)^{c-\frac{L}{L+\mu}}}+\frac{4GL}{\mu^2T^{\frac{L}{L+\mu}}}\sum_{k=1}^{T-1}\frac{1}{(k+1)^{c-\frac{L}{L+\mu}}}\frac{C_{\lambda}}{k^c}\right)\\
            &\quad+\frac{2G(C_{\x}+C_{\y})}{\sqrt{T}}+\frac{G^2}{2\mu}(\frac{1+\ln{T}}{T}+\frac{1}{(1-c)T^c})+\frac{4GLC_{\lambda}(C_{\x}+C_{\y})}{\mu(1-c) T^c}\\
            &\leq G\sqrt{2+\frac{2L^2}{\mu^2}}\left(\frac{2G}{c\mu}\frac{T^{1-c}}{n}+\frac{4GLC_{\lambda}}{\mu^2}\frac{\ln{T}}{T^{\frac{L}{L+\mu}}}\right)+\frac{2G(C_{\x}+C_{\y})}{\sqrt{T}}+\frac{G^2}{2\mu}(\frac{1+\ln{T}}{T}+\frac{1}{(1-c)T^c})\\
            &+\frac{4GLC_{\lambda}(C_{\x}+C_{\y})}{\mu(1-c) T^c}
        \end{aligned}
    \end{equation*}
\end{proof}

\section{Proof in the Convex-Concave Case}\label{appsec:cc}
In this section, we will provide corresponding proof for argument stability, optimization error and weak primal-dual population risk in the C-C condition.
\subsection{Proof of Stability}
\begin{proof}[Proof of Theorem \ref{coro:stabilitycc}]
    Analogous to Eq.~\eqref{eq:recursion} in the proof for SC-SC (see Appendix \ref{subsec:ccstability}), considering C-C a special case for $\mu_{\x}$SC-$\mu_{\y}$SC when $\mu_{\x}=0,\mu_{\y}=0$.

    Thus we can get the result:
    \begin{equation*}
        \begin{aligned}
            &\quad\quad\E_{\A}\left\|\left(\begin{array}{c}
        \A_{\x}(\S)-\A_{\x}(\S')\\
        \A_{\y}(\S)-\A_{\y}(\S')
        \end{array}
        \right)\right\|\\
        &\leq\frac{2G}{n}\sum_{k=0}^{T-1}\eta_k^{max}+4GL\sum_{k=1}^{T-1}\left(\eta^{max}_k\sum_{s=0}^{k-1}\eta^{max}_s\lambda^{k-1-s}\right)
        \end{aligned}
    \end{equation*}
\end{proof}

\subsection{Proof of the Empirical Risk}
Similar to Thm.~\ref{thm:optimizationerrircc}, considering C-C as a special case of SC-SC for $\mu_{\x}=\mu_{\y}=0$, we can get the weak primal-dual empirical risk in the following corollary, where we use the Jensen's inequality that $\Delta^w_{\S}(\x,\y)\leq\Delta^s_{\S}(\x,\y)$.
\begin{corollary}
    Under assumption \ref{assp:continuous},\ref{assp:smooth},\ref{assp:mixing} and the restriction that $\sup_{\x\in\X}\|\x\|\leq C_{\x}$ and $\sup_{\y\in\Y}\|\y\|\leq C_{\y}$, each local function is $f_i$ is C-C. 
    We have the weak primal-dual empirical risk over the dataset $\S$ on the average output in $T$ iterations as following for fixed learning rates:
    \begin{equation*}
        \Delta_{\S}^w(\x_{ave}^T,\y_{ave}^T)\leq\frac{C_{\x}^2+C_{\y}^2}{2\eta^{min}T}+\eta^{max}G^2+\frac{4(C_{\x}+C_{\y})GL\eta^{\max}}{1-\lambda}+\frac{2(C_{\x}+C_{\y})G}{\sqrt{T}}.
    \end{equation*}
\end{corollary}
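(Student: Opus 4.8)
The plan is to treat the convex–concave setting as the limiting case $\mu_{\x}=\mu_{\y}=0$ of the strongly-convex–strongly-concave analysis already carried out for the strong primal-dual empirical risk, and then pass from the strong to the weak primal-dual empirical risk via Jensen's inequality. First I would verify that the fixed-learning-rate derivation in the proof of Theorem~\ref{thm:optimizationerrircc} remains valid when $\mu_{\x}=\mu_{\y}=0$. The only place strong convexity is invoked is the step \eqref{eq:pfem3}, where the $\mu_{\x}$-strong-convexity inequality $2\eta_{\x,t}\langle\x-\x^t,\dx\FS(\x^t,\y^t)\rangle\le 2\eta_{\x,t}(\FS(\x,\y^t)-\FS(\x^t,\y^t))-\eta_{\x,t}\mu_{\x}\|\x-\x^t\|^2$ is used; setting $\mu_{\x}=0$ degenerates this precisely to the ordinary convexity inequality, which holds under the C-C assumption. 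All remaining ingredients—the gradient-norm bound \eqref{eq:pfem2}, the consensus-error control of the terms $\Rone$ and $\Rtwo$ through Lemma~\ref{le:decentral}, and the telescoping of $\|\x^{t+1}-\x\|^2$—use only Lipschitz continuity, smoothness, and the mixing-matrix assumption, so they are untouched by the value of $\mu$.

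Second, substituting $\mu_{\x}=\mu_{\y}=0$ into the intermediate one-sided bounds obtained in that proof makes the factor $(1-\eta_{\x}\mu_{\x})$ equal to $1$, yielding $\tfrac{1}{T}\sum_{t}\E_{\A}[\FS(\x^t,\y^t)]-\E_{\A}[\inf_{\x}\FS(\x,\y_{ave}^T)]\le \tfrac{C_{\x}^2}{2\eta_{\x}T}+\tfrac{\eta_{\x}G^2}{2}+\tfrac{4C_{\x}GL\eta^{\max}}{1-\lambda}+\tfrac{2C_{\x}G}{\sqrt{T}}$ together with its symmetric counterpart $\E_{\A}[\sup_{\y}\FS(\x_{ave}^T,\y)]-\tfrac{1}{T}\sum_{t}\E_{\A}[\FS(\x^t,\y^t)]\le \tfrac{C_{\y}^2}{2\eta_{\y}T}+\tfrac{\eta_{\y}G^2}{2}+\tfrac{4C_{\y}GL\eta^{\max}}{1-\lambda}+\tfrac{2C_{\y}G}{\sqrt{T}}$ for the $\y$-block. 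Adding the two and writing $\eta^{\min},\eta^{\max}$ for the smaller/larger step size (so that $\tfrac{C_{\x}^2}{2\eta_{\x}T}+\tfrac{C_{\y}^2}{2\eta_{\y}T}\le\tfrac{C_{\x}^2+C_{\y}^2}{2\eta^{\min}T}$ and $\tfrac{\eta_{\x}G^2}{2}+\tfrac{\eta_{\y}G^2}{2}\le\eta^{\max}G^2$) collapses these into exactly the claimed strong primal-dual empirical-risk bound $\Delta_{\S}^s(\x_{ave}^T,\y_{ave}^T)\le \tfrac{C_{\x}^2+C_{\y}^2}{2\eta^{\min}T}+\eta^{\max}G^2+\tfrac{4(C_{\x}+C_{\y})GL\eta^{\max}}{1-\lambda}+\tfrac{2(C_{\x}+C_{\y})G}{\sqrt{T}}$.

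Finally I would apply Jensen's inequality in the form $\Delta_{\S}^w(\x,\y)\le\Delta_{\S}^s(\x,\y)$ recorded in Remark~\ref{remark:populationrisk}: since $\sup_{\y'}\E[\FS(\x,\y')]\le\E[\sup_{\y'}\FS(\x,\y')]$ and $\inf_{\x'}\E[\FS(\x',\y)]\ge\E[\inf_{\x'}\FS(\x',\y)]$, the weak empirical risk is dominated by the strong one, so the bound above transfers verbatim to $\Delta_{\S}^w(\x_{ave}^T,\y_{ave}^T)$. The only point requiring genuine care—rather than routine bookkeeping—is confirming that the $\mu\to 0$ specialization of Theorem~\ref{thm:optimizationerrircc} never secretly relies on strict positivity of $\mu$ (for instance in the step-size admissibility condition of Lemma~\ref{le:expansive}\ref{le:expansive:strongly}); since the fixed-rate empirical-risk argument never passes through the expansiveness lemma and uses convexity only through \eqref{eq:pfem3}, this specialization is legitimate and the corollary follows.
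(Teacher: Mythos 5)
Your proposal is correct and takes essentially the same route as the paper: the paper's own proof of this corollary is precisely the observation that C-C is the special case $\mu_{\x}=\mu_{\y}=0$ of the fixed-learning-rate bound in Theorem~\ref{thm:optimizationerrircc}, combined with Jensen's inequality $\Delta^w_{\S}(\x,\y)\leq\Delta^s_{\S}(\x,\y)$. Your extra verification that the fixed-rate argument invokes convexity only through the step \eqref{eq:pfem3} (which degenerates to plain convexity at $\mu_{\x}=0$) and never passes through Lemma~\ref{le:expansive}, is a careful check that the paper leaves implicit.
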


\subsection{Proof of Weak Primal-Dual Population Risk}

\begin{proof}[Proof of Theorem \ref{corollary:weakpdpopulation}]
    When each local function $f_i$ is not strongly convex or strongly concave, we can not get access to the strong primal-dual generalization gap but weak primal-dual generalization gap. Following the same step in above proof that:
    \begin{equation*}
        \Delta^w(\x^T_{ave},\y^T_{ave})=(\Delta^w(\x^T_{ave},\y^T_{ave})-\Delta^w_{\S}(\x^T_{ave},\y^T_{ave}))+\Delta^w_{\S}(\x^T_{ave},\y^T_{ave})
    \end{equation*}
    Analogously we have the weak primal-dual generalization gap according to Thm.~\ref{thm:connection} that:
    \begin{equation*}
        \Delta^w(\x^T_{ave},\y^T_{ave})-\Delta^w_{\S}(\x^T_{ave},\y^T_{ave})\leq\sqrt{2}G\epsilon^{arg}_{sta}(\A)
    \end{equation*}
    where $\epsilon^{arg}_{sta}(\A)\leq\frac{2G\eta^{max}T}{n}+\frac{4GL(\eta^{max})^2T}{1-\lambda}$ follows Thm.~\ref{coro:stabilitycc} when learning rates are fixed.
    
    In the case without strong convexity or strong concavity, we select the fixed learning rates and following the Thm.~\ref{thm:optimizationerrircc}, we can bound the weak primal-dual empirical risk as:
    \begin{equation*}
        \begin{aligned}
            \Delta^w_{\S}(\x^T_{ave},\y^T_{ave})\leq\frac{C_{\x}^2+C_{\y}^2}{2\eta^{min}T}+\eta^{max}G^2+\frac{4(C_{\x}+C_{\y})GL\eta^{\max}}{1-\lambda}+\frac{2(C_{\x}+C_{\y})G}{\sqrt{T}}
        \end{aligned}
    \end{equation*}
    Finally we combine above patterns and we can get the weak primal-dual population risk:
    \begin{equation*}
        \begin{aligned}
            \Delta^w(\x^T_{ave},\y^T_{ave})&\leq \sqrt{2}G(\frac{2G\eta^{max}T}{n}+\frac{4GL(\eta^{max})^2T}{1-\lambda})\\
            &+\frac{C_{\x}^2+C_{\y}^2}{2\eta^{min}T}+\eta^{max}G^2+\frac{4(C_{\x}+C_{\y})GL\eta^{\max}}{1-\lambda}+\frac{2(C_{\x}+C_{\y})G}{\sqrt{T}}
        \end{aligned}
    \end{equation*}
\end{proof}
\section{Proof in Nonconvex-Nonconcave Case}\label{appsec:ncnc}
In this section, we will provide proof for weak stability and therefore we can derive the weak primal-dual generalization gap following Thm.~\ref{thm:connection} in the NC-NC condition.
\subsection{Important Lemmas}
Before we present the proof for the stability bound in nonconvex-nonconcave case, we should first introduce an important lemma which describes the fact that D-SGDA will run several iterations before encountering the different samples. We extend the Lemma 3.11 in \cite{hardt2016train} and make adjustments on Lemma F.1 in \cite{lei2021stability} to fit our decentralized setting.
\begin{lemma}\label{le:ncnc:stability}
    Let $\S=\{\S_1,...,\S_m\}$ and $\S'=\{\S_1',...,\S_m'\}$ be any arbitrary neighboring datasets, $(\x^t,\y^t)$ and $(\xdot^t,\ydot^t)$ represent output in $t$-th iteration under dataset $\S$ and $\S'$ respectively. We further require each local function is bounded that $|f_i(\x,\y;\xi)|\leq B, \forall\x\in\X,\y\in\Y$. Denoting $\delta_t=\left\|\left(\begin{array}{c}
        \x^t-\xdot^t  \\
        \y^t-\ydot^t
    \end{array}\right)\right\|$, then we have:
    $$\E_{\A}[\bm{f}(\x^t,\y';\bm{\xi})-\bm{f}(\xdot^t,\y';\bm{\xi})+\bm{f}(\x',\y^t;\bm{\xi})-\bm{f}(\x',\ydot^t;\bm{\xi})]\leq\sqrt{2}G\E_{\A}\left[\eval{\delta_t}\delta_{t_0}=0\right]+\frac{Bmt_0}{n}.$$
\end{lemma}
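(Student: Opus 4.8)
The plan is to adapt the ``conditioning on a delayed hit'' argument of \cite{hardt2016train} (their Lemma 3.11) to the decentralized, two-stream setting. By the symmetry of D-SGDA in each local dataset I first assume, without loss of generality, that the single differing sample of $\S_i$ and $\S_i'$ sits at the last coordinate $n$ of $\S_i$ (exactly as in the proof of Theorem~\ref{thm:cc:stability}); thus at iteration $s$ agent $i$ ``hits'' its differing sample precisely when its uniform draw $j_s(i)\in[n]$ equals $n$, an event of probability $1/n$ that is independent across agents and iterations. Fixing an integer $t_0\le t$, I define the event
\[
E\triangleq\{\,j_s(i)\neq n \text{ for all } i\in[m],\ 0\le s< t_0\,\},
\]
on which the two coupled runs of D-SGDA are driven by identical gradients through step $t_0$ and hence produce identical iterates, so that $\delta_{t_0}=0$. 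Since $E$ is determined by $mt_0$ independent draws, each avoiding $n$ with probability $1-1/n$, Bernoulli's inequality gives $\P_{\A}(E^c)=1-(1-1/n)^{mt_0}\le \frac{mt_0}{n}$; this $m$-fold inflation relative to the single-machine bound $t_0/n$ is the decentralized fingerprint of the estimate.

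Next I would split the quantity of interest, call it $\Phi_t\triangleq\bm{f}(\x^t,\y';\bm{\xi})-\bm{f}(\xdot^t,\y';\bm{\xi})+\bm{f}(\x',\y^t;\bm{\xi})-\bm{f}(\x',\ydot^t;\bm{\xi})$, by total expectation as $\E_{\A}[\Phi_t]=\E_{\A}[\Phi_t\mathds{1}_E]+\E_{\A}[\Phi_t\mathds{1}_{E^c}]$. On $E$, Assumption~\ref{assp:continuous} applied termwise to $\bm{f}=\frac1m\sum_i f_i$ bounds the first pair by $G\|\x^t-\xdot^t\|$ and the second pair by $G\|\y^t-\ydot^t\|$, and the elementary inequality $a+b\le\sqrt{2}\sqrt{a^2+b^2}$ then yields $\Phi_t\le\sqrt{2}G\,\delta_t$; using $E\subseteq\{\delta_{t_0}=0\}$ and $\P_{\A}(E)\le1$ this gives $\E_{\A}[\Phi_t\mathds{1}_E]\le\sqrt{2}G\,\E_{\A}[\delta_t\mid\delta_{t_0}=0]$. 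On $E^c$, the boundedness $|f_i|\le B$ forces $|\bm{f}|\le B$, so $\Phi_t$ stays within a fixed multiple of $B$, and multiplying by $\P_{\A}(E^c)\le\frac{mt_0}{n}$ contributes a term proportional to $\frac{Bmt_0}{n}$. Adding the two estimates yields the claim.

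The main obstacle is conceptual rather than computational: setting up the correct coupling and event in the decentralized case. Unlike the single-server analysis, here every agent runs its own sampler over its own perturbed dataset, so the ``no hit'' event must quantify simultaneously over all $m$ agents and all $t_0$ steps, and it is the independence of these $mt_0$ draws that both preserves $\delta_{t_0}=0$ on $E$ and produces the factor $m$ in $\P_{\A}(E^c)$. A secondary point to handle cleanly is the passage from the indicator $\mathds{1}_E$ to the conditional expectation $\E_{\A}[\delta_t\mid\delta_{t_0}=0]$, which is legitimate because $E$ is contained in $\{\delta_{t_0}=0\}$ and $\P_{\A}(E)\le1$; everything else (the Lipschitz and boundedness estimates) is routine. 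Note that $t_0$ is left free here, to be optimized in the subsequent derivation of the weak stability bound in Theorem~\ref{thm:ncnc}.
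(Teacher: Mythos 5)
Your proposal is correct and takes essentially the same route as the paper's proof: Lipschitz continuity gives the pointwise bound $\sqrt{2}G\delta_t$ on the event that no agent has hit a differing sample before $t_0$ (hence $\delta_{t_0}=0$), boundedness of $f_i$ handles the complementary event, and the bad-event probability is bounded by $\frac{mt_0}{n}$ (you compute $\P_{\A}(E^c)=1-(1-1/n)^{mt_0}$ exactly and apply Bernoulli, while the paper uses a union bound over the $t_0$ iterations with the per-iteration binomial probability --- the same estimate). The one discrepancy, which the paper's own proof shares, is that $|f_i|\le B$ strictly yields a bound of a small multiple of $B$ on the four-term difference rather than $B$ itself, so both arguments are loose by the same harmless constant factor in the $\frac{Bmt_0}{n}$ term.
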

\begin{proof}
First according to the property of Lipschitz continuity (see Assumption \ref{assp:continuous}):
\begin{equation*}
    \begin{aligned}
        &\quad\bm{f}(\x^t,\y';\bm{\xi})-\bm{f}(\xdot^t,\y';\bm{\xi})+\bm{f}(\x',\y^t;\bm{\xi})-\bm{f}(\x',\ydot^t;\bm{\xi})\\
        &\leq G\|\x^t-\xdot^t\|+G\|\y^t-\ydot^t\|\\
        &\leq G\sqrt{2}\delta_t
    \end{aligned}
\end{equation*}
Then we decompose the expectation by the law of total expectation:
\begin{equation*}
    \begin{aligned}
        &\quad\quad\E_{\A}[\bm{f}(\x^t,\y';\bm{\xi})-\bm{f}(\xdot^t,\y';\bm{\xi})+\bm{f}(\x',\y^t;\bm{\xi})-\bm{f}(\x',\ydot^t;\bm{\xi})]\\
        &=\P(\delta_{t_0}=0)\E_{\A}\left[\eval{\bm{f}(\x^t,\y';\bm{\xi})-\bm{f}(\xdot^t,\y';\bm{\xi})+\bm{f}(\x',\y^t;\bm{\xi})-\bm{f}(\x',\ydot^t;\bm{\xi})}\delta_{t_0}=0\right]\\
        &\quad\quad+\P(\delta_{t_0}\neq0)\E_{\A}\left[\eval{\bm{f}(\x^t,\y';\bm{\xi})-\bm{f}(\xdot^t,\y';\bm{\xi})+\bm{f}(\x',\y^t;\bm{\xi})-\bm{f}(\x',\ydot^t;\bm{\xi})}\delta_{t_0}\neq0\right]\\
        &\leq\sqrt{2}G\E_{\A}\left[\eval{\delta_{t}}\delta_{t_0}=0\right]+B\P(\delta_{t_0}\neq0)
    \end{aligned}
\end{equation*}
While the event that $\delta_{t_0}\neq0$ means the training process has already encountered the different samples before $t_0$:
$$\P(\delta_{t_0}\neq0)\leq\sum_{t=1}^{t_0}\sum_{k=1}^mC_m^k(\frac{1}{n})^k(1-\frac{1}{n})^{m-k}=t_0(1-(1-\frac{1}{n})^m)\leq\frac{mt_0}{n}$$
Combining above inequalities and we can prove the Lemma.
\end{proof}
\subsection{Proof of Stability}
\begin{proof}[Proof of Theorem \ref{thm:ncnc}]
    We are under the same setting as in the proof for SC-SC (see Appendix~\ref{subsec:ccstability}) that $(\x^t,\y^t)$ and $(\xdot^t,\ydot^t)$ representing the $t$-th output over neighboring dataset $\S$ and $\S'$ respectively. And we assume each local dataset $\S'_i$ in $\S'=\{\S'_1,...,\S'_m\}$ differs from $\S$ by the last sample without loss of generalization, i.e., $\S_i=\{\xi_{i,1},...,\xi_{i,n}\}$ while $\S'_i=\{\xi_{i,1},...,\xi_{i,n-1},\xi'_{i,n}\}$.\\
    Analogous to the decomposition equality \eqref{eq:decomposition}, we will bound the term $\Rone_1$ and $\Rone_2$ without convexity or concavity. Referring to Lemma \ref{le:expansive}, we have:
    \begin{equation*}
        \begin{aligned}
            \|\Rone_1\|&\leq\frac{m-m_0}{m}(1+\eta_t^{min}L)\left\|\left(\begin{array}{c}
        \x^t-\xdot^t  \\
        \y^t-\ydot^t 
    \end{array}\right)\right\|\\
    \|\Rone_2\|&\leq\frac{{m_0}}{m}\left[(1+\eta_t^{min}L)\left\|\left(\begin{array}{c}
        \x^t-\xdot^t  \\
        \y^t-\ydot^t 
    \end{array}\right)\right\|+2\eta_t^{max}G\right]
        \end{aligned}
    \end{equation*}
    where we use the same denotation that $\eta^{min}_t=\min\{\eta_{\x,t},\eta_{\y,t}\}$ and $\eta^{max}_t=\max\{\eta_{\x,t},\eta_{\y,t}\}$.\\
    So we can get the similar result as the inequality \eqref{eq:recursioncc}:
    \begin{equation*}
        \begin{aligned}
            &\quad\quad\E_{\A}\left[\eval{\left\|\left(\begin{array}{c}
                \x^{t+1}-\xdot^{t+1}   \\
                \y^{t+1}-\ydot^{t+1}
            \end{array}\right)\right\|}\delta_{t_0}=0\right]\\
            &\leq\sum_{m_0=0}^m C_{m}^{m_0}(1-\frac{1}{n})^{m-m_0}(\frac{1}{n})^{m_0}\left(\left(1+\eta_t^{min}L\right)\E_{\A}\left[\eval{\left\|\left(\begin{array}{c}
                \x^t-\xdot^t\\
                \y^t-\ydot^t
            \end{array}\right)\right\|}\delta_{t_0}=0\right]\right.\\
            &\quad\left.+\frac{m_0}{m}2\eta_t^{max}G+4\eta_t^{max}LG\sum_{s=0}^{t-1}\eta_s^{max}\lambda^{t-1-s}\right)\\&
            \leq\left(1+\eta_t^{min}L\right)\E_{\A}\left[\eval{\left\|\left(\begin{array}{c}
                \x^t-\xdot^t\\
                \y^t-\ydot^t
            \end{array}\right)\right\|}\delta_{t_0}=0\right]+4\eta_t^{max}LG\sum_{s=0}^{t-1}\eta_s^{max}\lambda^{t-1-s}\\
            &\quad+\sum_{m_0=0}^m C_{m}^{m_0}(1-\frac{1}{n})^{m-m_0}(\frac{1}{n})^{m_0}\frac{m_0}{m}2\eta_t^{max}G\\
            &=\left(1\!+\!\eta_t^{min}L\right)\!\E_{\A}\!\left[\eval{\left\|\left(\begin{array}{c}
                \x^t-\xdot^t\\
                \y^t-\ydot^t
            \end{array}\right)\right\|}\delta_{t_0}\!=\!0\right]\!+\!4\eta_t^{max}LG\sum_{s=0}^{t-1}\eta_s^{max}\lambda^{t-1-s}\!+\!\frac{2\eta_t^{max}G}{n}
        \end{aligned}
    \end{equation*}
    Recursively applying above inequalities from $t=t_0$ to $t=T-1$:
    \begin{equation}\label{eq:recursionncnc}
        \begin{aligned}
            &\quad\quad\E_{\A}\left[\eval{\left\|\left(\begin{array}{c}
                \A_{\x}(\S)-\A_{\x}(\S')   \\
                \A_{\y}(\S)-\A_{\y}(\S')
            \end{array}\right)\right\|}\delta_{t_0}=0\right]\\
            &\leq\frac{2G}{n}\sum_{k=t_0}^{T-1}\eta_k^{max}\prod_{s=k+1}^{T-1}(1+\eta_s^{min}L)+4GL\sum_{k=t_0}^{T-1}\left(\eta^{max}_k\sum_{s=0}^{k-1}\eta^{max}_s\lambda^{k-1-s}\right)\prod_{j=k+1}^{T-1}(1+\eta^{min}_jL)
        \end{aligned}
    \end{equation}
    where we use the fact that $\delta_{t_0}=\left\|\left(\begin{array}{c}
        \x^t-\xdot^t   \\
        \y^t-\ydot^t
    \end{array}\right)\right\|=0$.\\
    When learning rates are fixed that $\eta_{\x,t}=\eta_{\x}$ and $\eta_{\y,t}=\eta_{\y}$, then we have:
    \begin{equation*}
        \begin{aligned}
            &\quad\quad\E_{\A}\left[\eval{\delta_t}\delta_{t_0}=0\right]\\
            &\leq\frac{2G\eta^{max}((1+\eta^{min}L)^{T-t_0}-1)}{n\eta^{min}L}+\frac{4GL(\eta^{max})^2((1+\eta^{min}L)^{T-t_0}-1)}{(1-\lambda)\eta^{min}L}
        \end{aligned}
    \end{equation*}

    Then combining with Lemma \ref{le:ncnc:stability}, we have the following result:
    \begin{equation*}
        \begin{aligned}
            &\quad\quad\E_{\A}[\bm{f}(\x^t,\y';\bm{\xi})-\bm{f}(\xdot^t,\y';\bm{\xi})+\bm{f}(\x',\y^t;\bm{\xi})-\bm{f}(\x',\ydot^t;\bm{\xi})]\\
            &\leq \sqrt{2}G(\frac{2G\eta^{max}}{n}+\frac{4GL{\eta^{max}}^2}{1-\lambda})(T-t_0)+\frac{Bmt_0}{n}
        \end{aligned}
    \end{equation*}
    where it can obtain the optimal of $2\sqrt{2}G^2(\frac{{\eta^{max}} T}{n}+\frac{2L{\eta^{max}}^2T}{1-\lambda})$ when $t_0=0$.
    
    When learning rates are varying that $\eta_t^{min}=\frac{1}{t+1}$ and $\eta_t^{max}=\frac{1}{(t+1)^c}, c\leq1$, then we can simplify the production by $1\pm a\leq\exp{a}$:
    \begin{equation*}
        \begin{aligned}
            \prod_{j=k+1}^{T-1}(1+\eta_j^{min}L)=\prod_{j=k+1}^{T-1}(1+\frac{L}{j+1})\leq\prod_{j=k+1}^{T-1}\exp{\frac{L}{j+1}}&=\exp{\sum_{j=k+1}^{T-1}\frac{L}{j+1}}\\
            &\leq\exp{L\ln{\frac{T}{k+1}}}=(\frac{T}{k+1})^L
        \end{aligned}
    \end{equation*}
    Then back to the inequality \eqref{eq:recursionncnc}, we can obtain:
    \begin{equation}
        \begin{aligned}
            &\quad\quad\E_{\A}\left[\eval{\delta_t}\delta_{t_0}=0\right]\\
            &\leq\frac{2G}{n}\sum_{k=t_0}^{T-1}\frac{1}{(k+1)^c}(\frac{T}{k+1})^L+4GL\sum_{k=t_0}^{T-1}\frac{1}{(k+1)^c}\frac{C_{\lambda}}{k^c}(\frac{T}{k+1})^L\\
            &=\frac{2GT^L}{n}\sum_{k=t_0}^{T-1}\frac{1}{(k+1)^{c+L}}+4GLC_{\lambda}T^L\sum_{k=t_0}^{T-1}\frac{1}{(k+1)^{c+L}k^c}
        \end{aligned}
    \end{equation}
    Requiring $c+L>1$ for convergence and combining with Lemma \ref{le:ncnc:stability}, we have the following inequality:
    \begin{equation*}
        \begin{aligned}
            &\quad\quad\E_{\A}[\bm{f}(\x^t,\y';\bm{\xi})-\bm{f}(\xdot^t,\y';\bm{\xi})+\bm{f}(\x',\y^t;\bm{\xi})-\bm{f}(\x',\ydot^t;\bm{\xi})]\\
            &\leq\sqrt{2}G(\frac{2GT^L}{n}\sum_{k=t_0}^{T-1}\frac{1}{(k+1)^{c+L}}+4GLC_{\lambda}T^L\sum_{k=t_0}^{T-1}\frac{1}{(k+1)^{c+L}k^c})+\frac{Bmt_0}{n}\\
            &\leq\frac{2\sqrt{2}G^2T^{L}}{(c+L-1)n}\frac{1}{t_0^{c+L-1}}+\frac{4\sqrt{2}G^2LC_{\lambda}T^L}{(2c+L-1)}\frac{1}{t_0^{2c+L-1}}+\frac{Bmt_0}{n}\\
            &\leq(c+L)(c+L-1)^{\frac{1}{c+L}}(\frac{2\sqrt{2}G^2T^L}{(c+L-1)n}+\frac{4\sqrt{2}G^2LC_{\lambda}T^L}{2c+L-1})^{\frac{1}{c+L}}(\frac{Bm}{n})^{1-\frac{1}{c+L}}
        \end{aligned}
    \end{equation*}
    where for the last inequality, when $t_0=(\frac{(c+L-1)(\frac{2\sqrt{2}G^2T^L}{(c+L-1)n}+\frac{4\sqrt{2}G^2LC_{\lambda}T^L}{2c+L-1})}{\frac{Bm}{n}})^{\frac{1}{c+L}}$, it can obtain minimal.

    Taking supremum over parameters $\x,\y$ separately and over the random sample, we can get the weak stability for D-SGDA in the NC-NC condition.
\end{proof}
\section{Primal Metric}
In addition to the primal-dual population risk (see Def.~\ref{def:populationrisk}) as well as the corresponding generalization gap (see Def.~\ref{def:generalizationgap}), primal measure extends the relative concept in single variable minimization problem.

The excess primal population risk is defined as $\small \Delta^{\!ex\!}(\!\x\!)\!=\!\sup_{\y'\!\in\Y}\!F(\x,\!\y'\!)\!\!-\!\!\inf_{\x'\!\in\X}\!\sup_{\y'\!\in\Y}\!F(\x'\!,\!\y')$; and the excess primal empirical risk is $\Delta^{ex}_{\S}(\x)=\sup_{\y'\in\Y}\FS(\x,\y')-\inf_{\x'\in\X}\sup_{\y'\in\Y}\FS(\x',\y')$, for a randomized model $(\x,\y)$. 

There are two ways to define the corresponding generalization gap. One is to directly subtract the excess primal empirical risk from the excess primal population risk, defined as the excess primal generalization gap: $\epsilon^{ex}_{gen}(\x)=\Delta^{ex}(\x)-\Delta^{ex}_{\S}(\x)$ (called primal gap in \cite{ozdaglargood}). Another one is to neglect the difference between the saddle point of $F$ and $\FS$, defined as the primal generalization gap: $\epsilon^{pr}_{gen}(\x)=\sup_{\y'\in\Y}F(\x,\y')-\sup_{\y'\in\Y}\FS(\x,\y')$.

\begin{remark}
    Our definition of strong primal-dual population risk (see Def.~\ref{def:populationrisk}) has included the expectation inside for assistance with the definition of weak primal-dual population risk (see Def.~\ref{def:populationrisk}). Although the definitions in \cite{lei2021stability} hold different forms, which do not contain the expectation inside, while our theoretical result aims at the same value.
\end{remark}

\citet{ozdaglargood} points out that the excess primal generalization gap $\epsilon^{ex}_{gen}$ can act as a better metric to characterize the generalizability in the nonconvex condition. It is our limitation that we do not calculate the corresponding excess primal generalization gap and population risk for nonconvex case in our paper.

While we omit the primal generalization gap $\epsilon^{pr}_{gen}$ and excess primal population risk $\Delta^{ex}$ under C-C condition in the main text, for it can be derived from the corresponding proof of strong primal-dual risk. And we will illustrate them in the following as a corollary.

\begin{corollary}\label{coro:excess}
For an $\epsilon$-argument stable decentralized algorithm $\A$, under Assumption \ref{assp:continuous}, \ref{assp:smooth}, when each $f_i$ is $\mu_{\y}$-strongly concave on the second parameter, we have the primal generalization gap: $\E_{\A,\S}[\epsilon^{pr}_{gen}(\A_{\x}(\S))]\leq G\sqrt{1+\frac{L^2}{\mu_y^2}}\epsilon$.
\end{corollary}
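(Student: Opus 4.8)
The plan is to recognize that the primal generalization gap $\epsilon^{pr}_{gen}(\A_{\x}(\S)) = \sup_{\y'\in\Y}F(\A_{\x}(\S),\y')-\sup_{\y'\in\Y}\FS(\A_{\x}(\S),\y')$ is \emph{exactly} the first of the two terms appearing in the decomposition~\eqref{eq:strongpdgengap} of the strong primal-dual generalization gap in the proof of Theorem~\ref{thm:connection}\ref{thm:connectionstrong}. Consequently the entire argument already carried out there for that first term applies verbatim, and the only genuinely new observation is that this term requires strong concavity in $\y$ alone---the strong convexity in $\x$ was used only to control the second term $\E_{\A,\S}[\inf_{\x'}\FS(\x',\A_{\y}(\S))-\inf_{\x'}F(\x',\A_{\y}(\S))]$, which never enters here. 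This explains why the hypothesis of the corollary weakens Assumption on $\x$ to nothing and keeps only $\mu_{\y}$-strong concavity.

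First I would take the expectation $\E_{\A,\S}$ of the definition and reproduce the symmetrization step: since each pair $\S_i,\S_i'$ is drawn from the same distribution $\D_i$ and there are $n^m$ permutations of $\bm{l}=(l_1,\dots,l_m)$, I rewrite
\[
\E_{\A,\S}\Big[\sup_{\y'\in\Y}F(\A_{\x}(\S),\y')\Big]=\frac{1}{n^m}\sum_{l_1=1}^n\cdots\sum_{l_m=1}^n\E_{\A,\S,\S'}\Big[F(\A_{\x}(\S^{(\bm{l})}),\y^*_{\S^{(\bm{l})}})\Big],
\]
where $\y^*_{\S^{(\bm{l})}}=\arg\max_{\y\in\Y}F(\A_{\x}(\S^{(\bm{l})}),\y)$ and the inner sample $\xi_{i,l_i}$ is independent of the training run on $\S^{(\bm{l})}$. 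I then invoke Lipschitz continuity (Assumption~\ref{assp:continuous}) together with Lemma~4.3 of \cite{lin2020gradient}, which gives $\|\y^*_{\S^{(\bm{l})}}-\y^*_{\S}\|\leq\frac{L}{\mu_{\y}}\|\A_{\x}(\S^{\bm{l}})-\A_{\x}(\S)\|$, to obtain
\[
f_i(\A_{\x}(\S^{(\bm{l})}),\y^*_{\S^{(\bm{l})}};\xi_{i,l_i})-f_i(\A_{\x}(\S),\y^*_{\S};\xi_{i,l_i})\leq G\sqrt{1+\tfrac{L^2}{\mu_{\y}^2}}\,\|\A_{\x}(\S^{\bm{l}})-\A_{\x}(\S)\|.
\]

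Next I would average over $\bm{l}$ and apply Lemma~\ref{le:framework} to recover the empirical objective, yielding
\[
\E_{\A,\S}\Big[\sup_{\y'\in\Y}F(\A_{\x}(\S),\y')\Big]\leq \E_{\A,\S}\Big[\sup_{\y'\in\Y}\FS(\A_{\x}(\S),\y')\Big]+G\sqrt{1+\tfrac{L^2}{\mu_{\y}^2}}\,\E_{\A}\big[\sup_{\S,\S'}\|\A_{\x}(\S^{\bm{l}})-\A_{\x}(\S)\|\big].
\]
Rearranging gives $\E_{\A,\S}[\epsilon^{pr}_{gen}(\A_{\x}(\S))]\leq G\sqrt{1+\frac{L^2}{\mu_{\y}^2}}\,\E_{\A}[\sup_{\S,\S'}\|\A_{\x}(\S^{\bm{l}})-\A_{\x}(\S)\|]$. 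Finally I close the estimate by noting that $\|\A_{\x}(\S)-\A_{\x}(\S')\|_2$ is dominated by the concatenated norm in the definition of $\epsilon$-argument stability, so $\E_{\A}[\sup_{\S,\S'}\|\A_{\x}(\S^{\bm{l}})-\A_{\x}(\S)\|]\leq\epsilon$, which delivers the claimed bound $G\sqrt{1+\frac{L^2}{\mu_{\y}^2}}\epsilon$. I do not anticipate a real obstacle, since every ingredient is already established; the only point warranting care is verifying that dropping the $\inf_{\x'}$ term is legitimate under the definition of $\epsilon^{pr}_{gen}$ (which ``neglects the difference between the saddle points of $F$ and $\FS$''), so that strong convexity in $\x$ is genuinely unnecessary.
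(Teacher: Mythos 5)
Your proposal is correct and follows essentially the same route as the paper: the paper's proof likewise observes that $\epsilon^{pr}_{gen}(\A_{\x}(\S))$ is exactly the first term in the decomposition \eqref{eq:strongpdgengap} and then reuses the argument from the proof of Theorem~\ref{thm:connection}, case~\ref{thm:connectionstrong} (symmetrization over $\bm{l}$, Lipschitz continuity plus Lemma~4.3 of \cite{lin2020gradient}, and Lemma~\ref{le:framework}), which only needs $\mu_{\y}$-strong concavity. Your explicit remark that strong convexity in $\x$ is needed only for the discarded $\inf_{\x'}$ term is a correct and slightly more detailed justification of what the paper leaves implicit.
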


\begin{proof}[Proof of Corollary~\ref{coro:excess}]
    $\epsilon^{pr}_{gen}(\A_{\x}(\S))$ is exactly the first counterpart of the strong primal-dual generalization gap $\epsilon^s_{gen}(\A_{\x}(\S),\A_{\y}(\S))$ in Eq.~\eqref{eq:strongpdgengap}. And referring to the proof for case~\ref{thm:connectionstrong} in Thm.~\ref{thm:connection} (see Appendix~\ref{appsubsec:connection}), we can get the result as above.
\end{proof}

\begin{corollary}\label{thm:excessprimalrisk}
    Under Assumption~\ref{assp:continuous},\ref{assp:smooth},\ref{assp:mixing}, when each $f_i$ is $\mu_{\x}$SC-$\mu_{\y}$SC, we have the excess primal population risk as follows, where $\eta^{max}_t\triangleq\max\{\eta_{\x,t},\eta_{\y,t}\}$, $\eta^{min}_t\triangleq\min\{\eta_{\x,t},\eta_{\y,t}\}$, $\mu=\min\{\mu_{\x},\mu_{\y}\}$, and $(\x_{ave}^T,\y_{ave}^T)$ is defined in Eq.~\eqref{def:ave}:
    \begin{enumerate}[leftmargin=*,label={\alph*.}]
    \item for fixed learning rates,
    \vspace{-0.3cm}
    \begin{align*}\footnotesize
    \E[\Delta^{pr}(\x_{ave}^T)]
            &\leq 2G\sqrt{1+\frac{L^2}{\mu^2}}(2G\frac{L+\mu}{\eta^{min}L\mu}(\frac{2(\eta^{max})^2L}{1-\lambda}+\frac{\eta^{max}}{n}))+\frac{C_{\x}^2+C_{\y}^2}{2\eta^{min}T}\\
    &\qquad +\eta^{max}G^2+\frac{4(C_{\x}+C_{\y})GL\eta^{\max}}{1-\lambda}+\frac{2(C_{\x}+C_{\y})G}{\sqrt{T}}.
    \end{align*}
    \item for decaying learning rates that $\eta_t^{min}\!=\!\frac{1}{\mu(t+1)}$ and $\eta^{max}_t\!=\!\frac{1}{\mu(t+1)^c}$ with $c\!\leq\!1$ and $2c\!\geq\!\frac{L}{L+\mu}\!+\!1$, 
    {\small\begin{align*}
            &\E[\Delta^{pr}(\x_{ave}^T)]\\
            &\leq 2G\sqrt{1+\frac{L^2}{\mu^2}}\left(\frac{2G}{\mu(1-c+\frac{L}{L+\mu})}\frac{T^{1-c}}{n}+\frac{4GLC_{\lambda}}{\mu^2T^{\frac{L}{L+\mu}}}(\frac{\mathbf{1}_{2c\neq L/(L+\mu)+1}}{2c-\frac{L}{L+\mu}-1}+\ln{T}\!\cdot\!\mathbf{1}_{2c=L/(L+\mu)+1})\right)\\
            &+\!\frac{2G(C_{\x}\!\!+\!C_{\y})}{\sqrt{T}}\!\!+\!\!\frac{G^2}{2\mu}\!(\!\frac{1\!+\!\ln{T}}{T}\!+\!\frac{\mathbf{1}_{c\neq1}}{(1\!-\!c)T^c}\!+\!\frac{(1\!+\!\ln{T})\mathbf{1}_{c=1}}{T}\!)\!+\!\frac{4GLC_{\lambda}(C_{\x}\!\!+\!C_{\y})}{\mu T^c}(\frac{\mathbf{1}_{c\neq1}}{1-c}\!+\!\ln{T}\!\cdot\!\mathbf{1}_{c=1}).
        \end{align*}}
\end{enumerate}

\end{corollary}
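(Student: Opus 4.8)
The plan is to mirror the proof of Theorem~\ref{thm:strongPDpopulationrisk}, decomposing the excess primal population risk (written $\Delta^{pr}$ in the statement, i.e. the $\Delta^{ex}$ of the Primal Metric section) into an excess primal generalization gap plus an excess primal empirical risk, and bounding each piece with results already in hand. Writing $\phi(\x)\triangleq\sup_{\y'\in\Y}F(\x,\y')$ and $\phi_{\S}(\x)\triangleq\sup_{\y'\in\Y}\FS(\x,\y')$, I would first record
\[
\Delta^{ex}(\x_{ave}^T) = \underbrace{\big(\Delta^{ex}(\x_{ave}^T)-\Delta^{ex}_{\S}(\x_{ave}^T)\big)}_{=\,\epsilon^{ex}_{gen}(\x_{ave}^T)} + \Delta^{ex}_{\S}(\x_{ave}^T),
\]
and then split $\epsilon^{ex}_{gen}(\x_{ave}^T)=\big(\phi(\x_{ave}^T)-\phi_{\S}(\x_{ave}^T)\big)+\big(\inf_{\x'}\phi_{\S}(\x')-\inf_{\x'}\phi(\x')\big)$, i.e. the primal generalization gap $\epsilon^{pr}_{gen}(\x_{ave}^T)$ plus the gap between the empirical and population min-max values.

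For the empirical term I would use that for any fixed $\y$ one has $\inf_{\x'}\phi_{\S}(\x')\ge\inf_{\x'}\FS(\x',\y)$ (since $\phi_{\S}(\x')\ge\FS(\x',\y)$), whence $\Delta^{ex}_{\S}(\x_{ave}^T)\le\Delta^s_{\S}(\x_{ave}^T,\y_{ave}^T)$; Theorem~\ref{thm:optimizationerrircc} then supplies exactly the optimization terms $\tfrac{C_{\x}^2+C_{\y}^2}{2\eta^{min}T}+\eta^{max}G^2+\cdots$ appearing in the statement. The first part of the generalization gap, $\epsilon^{pr}_{gen}(\x_{ave}^T)$, is handled directly by Corollary~\ref{coro:excess} applied to the averaged iterate (whose argument stability is no larger than that of the last iterate by convexity of the norm, as in the proof of Theorem~\ref{thm:strongPDpopulationrisk}), giving $G\sqrt{1+L^2/\mu^2}\,\epsilon^{arg}_{sta}(\A)$ with $\mu=\min\{\mu_{\x},\mu_{\y}\}$.

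The crux is the min-max value gap $\inf_{\x'}\phi_{\S}(\x')-\inf_{\x'}\phi(\x')$. I would bound it from above by evaluating the empirical infimum at the deterministic population minimizer $\x^\star\triangleq\arg\inf_{\x'}\phi(\x')$, which yields $\phi_{\S}(\x^\star)-\phi(\x^\star)=\sup_{\y'}\FS(\x^\star,\y')-\sup_{\y'}F(\x^\star,\y')$ — again a primal generalization gap, now at a fixed argument. Introducing the inner maximizer $\hat{\y}_{\S}\triangleq\arg\sup_{\y'}\FS(\x^\star,\y')$ and running the replace-one-sample symmetrization of the proof of part~\ref{thm:connectionstrong} of Theorem~\ref{thm:connection}, while using strong concavity of each $f_i$ to keep $\hat{\y}_{\S}$ stable under a one-sample change, this term is again controlled by $G\sqrt{1+L^2/\mu^2}$ times the argument stability. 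Summing the two primal-gap contributions is exactly what produces the constant $2G\sqrt{1+L^2/\mu^2}$ in the statement; a cruder route using the pointwise bound $\Delta^{ex}\le\Delta^s$ would instead let one quote Theorem~\ref{thm:strongPDpopulationrisk} verbatim, at the price of the (tighter) constant $\sqrt{2}\,G\sqrt{1+L^2/\mu^2}$.

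The last step is bookkeeping: substitute the fixed-rate case~\ref{thm:stability:fixed} and decaying-rate case~\ref{thm:stability:varying} of Theorem~\ref{thm:cc:stability} into $2G\sqrt{1+L^2/\mu^2}\,\epsilon^{arg}_{sta}(\A)$, add the matching cases of the optimization error in Theorem~\ref{thm:optimizationerrircc}, and collect terms to obtain parts (a) and (b). I expect the value-gap term to be the only real obstacle: in contrast to the primal gap at the algorithm's output (covered by Corollary~\ref{coro:excess} through the D-SGDA stability), the relevant object here is the \emph{exact} empirical inner maximizer at a fixed point, whose stability must be argued from strong concavity rather than inherited from Theorem~\ref{thm:cc:stability}. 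This is precisely why the $\mu_{\x}$SC-$\mu_{\y}$SC hypothesis is indispensable in this corollary and why $\mu=\min\{\mu_{\x},\mu_{\y}\}$ enters the Lipschitz factor.
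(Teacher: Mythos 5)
Your overall skeleton (generalization gap plus empirical risk, with the empirical part bounded by $\Delta^s_{\S}$ via Theorem~\ref{thm:optimizationerrircc}, and the primal gap at the output bounded by Corollary~\ref{coro:excess} applied to the averaged iterate) is sound, but the treatment of the min-max value gap $\inf_{\x'}\sup_{\y'}\FS(\x',\y')-\inf_{\x'}\sup_{\y'}F(\x',\y')$ is where the proposal breaks down, and it is precisely the step the paper's proof is designed to avoid. Evaluating at the population minimizer $\x^\star$ and perturbing the exact empirical maximizer $\hat{\y}_{\S}$ gives, as you note, a bound through the stability of an ERM-type map, which by strong concavity is of order $G^2/(\mu_{\y}n)$; this is \emph{not} $G\sqrt{1+L^2/\mu^2}\,\epsilon^{arg}_{sta}(\A)$, since $\epsilon^{arg}_{sta}(\A)$ is the argument stability of D-SGDA, a different object (as your own closing paragraph concedes). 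Consequently the constant $2G\sqrt{1+L^2/\mu^2}$ in the statement does not arise from ``summing the two primal-gap contributions'' in your decomposition; your route instead leaves an extra additive term of a different form, and to recover the stated bound you would still have to show that this ERM term is dominated by $G\sqrt{1+L^2/\mu^2}\,\epsilon^{arg}_{sta}(\A)$ (this happens to be true, since $\epsilon^{arg}_{sta}(\A)$ itself contains a term of order $G/(\mu n)$, but you never make that comparison).

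The paper's proof uses a different decomposition that never touches the min-max value: it anchors all infima at the algorithm's dual output $\y^T_{ave}$, writing
\begin{align*}
\sup_{\y'\in\Y}F(\x^T_{ave},\y')-\inf_{\x'\in\X}\sup_{\y'\in\Y}F(\x',\y')
&\leq\Big(\sup_{\y'\in\Y}F(\x^T_{ave},\y')-\sup_{\y'\in\Y}\FS(\x^T_{ave},\y')\Big)\\
&\quad+\Big(\sup_{\y'\in\Y}\FS(\x^T_{ave},\y')-\inf_{\x'\in\X}\FS(\x',\y^T_{ave})\Big)\\
&\quad+\Big(\inf_{\x'\in\X}\FS(\x',\y^T_{ave})-\inf_{\x'\in\X}F(\x',\y^T_{ave})\Big),
\end{align*}
which is valid because the discarded fourth term $\inf_{\x'\in\X}F(\x',\y^T_{ave})-\inf_{\x'\in\X}\sup_{\y'\in\Y}F(\x',\y')$ is nonpositive. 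The first and third brackets are exactly the two halves of the strong primal-dual generalization gap in Eq.~\eqref{eq:strongpdgengap}, both evaluated at the \emph{random} D-SGDA output, so each is bounded by $G\sqrt{1+L^2/\mu^2}\,\epsilon^{arg}_{sta}(\A)$ by the symmetrization argument in the proof of Theorem~\ref{thm:connection}; this, and not a value-gap argument, is the source of the factor $2$ in the corollary. The middle bracket is the strong primal-dual empirical risk, handled by Theorem~\ref{thm:optimizationerrircc} exactly as you propose. To repair your proof, either add the missing domination argument for the ERM term, or replace the comparison at $\x^\star$ by the output-anchored comparison above.
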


\begin{proof}[Proof of Corollary \ref{thm:excessprimalrisk}]
Firstly we already know the argument stability bound for D-SGDA (denoted as $\A$) on the last iterate. Then the averaged output follows due to the convexity of the norm:
$$\E_{\A}\left\|\left(\begin{array}{c}
        \x^T_{ave}-\xdot^T_{ave}  \\
        \y^T_{ave}-\ydot^T_{ave} 
    \end{array}\right)\right\|\leq\E_{\A}\left\|\left(\begin{array}{c}
        \x^T-\xdot^T  \\
        \y^T-\ydot^T 
    \end{array}\right)\right\|\leq\epsilon^{arg}_{sta}(\A)$$

Then we have the primal generalization gap according to Corollary~\ref{coro:excess} that $\E_{\A,\S}[\epsilon^{pr}_{gen}(\x^T_{ave})]\leq G\sqrt{1+\frac{L^2}{\mu_y^2}}\epsilon_{sta}^{arg}(\A)$.

We decompose the primal population risk for the averaged output $(\x^T_{ave},\y^T_{ave})$ as follows:
    \begin{equation*}
        \begin{aligned}
            &\quad \sup_{\y'\in\Y}F(\x^T_{ave},\y')-\inf_{\x'\in\X}\sup_{\y'\in\Y}F(\x',\y')\\
            &=\!\left(\sup_{\y'\in\Y}F(\x^T_{ave},\y')\!-\!\sup_{\y'\in\Y}\FS(\x^T_{ave},\y')\right)\!+\!\left(\sup_{\y'\in\Y}\FS(\x^T_{ave},\y')\!-\!\inf_{\x'\in\X}\FS(\x',\y^T_{ave})\right)\\
            &+\!\left(\inf_{\x'\in\X}\FS(\x',\y^T_{ave})\!-\!\inf_{\x'\in\X}F(\x',\y^T_{ave})\right)\!+\!\left(\inf_{\x'\in\X}F(\x',\y^T_{ave})-\inf_{\x'\in\X}\sup_{\y'\in\Y}F(\x',\y')\right)\\
            &\leq \!\left(\sup_{\y'\in\Y}F(\x^T_{ave},\y')\!-\!\sup_{\y'\in\Y}\FS(\x^T_{ave},\y')\right)\!+\!\left(\sup_{\y'\in\Y}\FS(\x^T_{ave},\y')\!-\!\inf_{\x'\in\X}\FS(\x',\y^T_{ave})\right)\\
            &+\!\left(\inf_{\x'\in\X}\FS(\x',\y^T_{ave})-\inf_{\x'\in\X}F(\x',\y^T_{ave})\right)
        \end{aligned}
    \end{equation*}
where the inequality is due to: $\inf_{\x'\in\X}F(\x',\A_{\y}(\S))\leq\inf_{\x'\in\X}\sup_{\y'\in\Y}F(\x',\y')$.


The first term is the primal generalization gap. And notice that the third term is analogous to the contrast side of the primal generalization gap. So both of them can be bounded by $\epsilon^{pr}_{gen}(\x^T_{ave})$.
 
While the second term is the strong primal-dual empirical risk referring to the proof of Thm. \ref{thm:optimizationerrircc} (see Appendix~\ref{appsubsec:optimizationerrorcc}).

Overall we can get the excess primal population risk, almost the same with the strong primal-dual population risk for SC-SC case (see Appendix~\ref{appsubsec:populationcc}) except for a $\sqrt{2}$-times factor in the argument stability error. And our bound analysis for the excess primal population risk is consistent with strong primal-dual population risk (see Remark \ref{remark:strongpdrisk} below Thm.~\ref{thm:strongPDpopulationrisk}).

\end{proof}

\section{Additional Experiments}
\label{app:exp}
In this paper, we include two experiments including solving the AUC problem on \texttt{svmguide} and \texttt{w5a} by the decentralized SGD to verify the conclusions for the Convex-Concave case and the generative adversarial network training for the Nonconvex-Nonconcave Case on \texttt{MNIST}.

\subsection{General setup.}
Different from the stability and generalization analysis of the way in~\citep{lei2021stability}, we need to deal with learning in a decentralized manner. In our experiments, we denote the total number of clients as $N_c$ and $\mathcal{S}=\{S_1, S_2, \cdots, S_{N_c}\}$ as the set of samples, where $S_i$ represents the observations stored in the $i$-th client. And we let $N_{S_i}$ denote the size of $S_i$. We follow the same experimental setting as outlined in~\citep{hardt2016train, lei2021stability} to build a neighboring/perturbing dataset $\mathcal{S}'$, which is constructed by individually changing one observation on each node. That is to say, for each $S_i$, $S'_i$ is constructed by randomly changing one element in $S_i$. Then, we deploy the totally same sub-model on each client and initialize them to the same starting point. Then, each sub-model is trained on its local data. After each iteration, each sub-model is communicated with some other clients as per a predefined communication topology. To evaluate the distance between two models trained on $\mathcal{S}$ and $\mathcal{S}'$, after finishing training, we obtain an ensemble model by averaging all sub-models collected from all clients.

\subsection{Detailed implementations.}
For the AUC problem, we get two model squences $\{(w,v)\}$ and $\{(w',v')\}$. Then, we calculate the Euclidean distance $\Delta=(||w-w'||_2^2 + ||v-v'||_2^2)^{1/2}$. For training each model on each client, the algorithm we used in our experiment is \texttt{SOLAM}~\citep{ying2016stochastic}, which is the SGDA designed for the minimax AUC problem. We repeat the experiments $10$ times to report the average results as well as the standard deviation.

For the generative adversarial learning problem, we just take the vanilla GAN structure, of which the generator and the discriminator comprise $4$ fully connected layers, respectively. The leaky ReLU is taken before the output layer. Following~\citep{lei2021stability}, we ignore all forms of regularization such as the weight decay or dropout, and data augmentation tricks. We take $3$ different seeds and $3$ different ways to construct $\mathcal{S}'$, which means changing different observations (total $9$ runs). To evaluate the model distance, we also take the Euclidean distance between the generator and discriminator separately.

Our implementation is highly based on the two source codes\footnote{https://github.com/zhenhuan-yang/minimax-stability}\footnote{https://github.com/Raiden-Zhu/Generalization-of-DSGD}.

\end{document}